\def\cleanbuild{1}
\documentclass[11pt]{article}

\usepackage[margin=1in]{geometry}
\usepackage[T1]{fontenc}
\usepackage{libertinus}             
\usepackage{microtype}
\usepackage{amsmath, amssymb, amsthm}
\usepackage{mathtools}
\usepackage[libertine]{newtxmath}   
\usepackage{natbib}
\usepackage{booktabs}
\usepackage{tikz,pgfplots}
\pgfplotsset{compat=1.18}
\usepackage{xcolor}
\definecolor{linkcol}{rgb}{0,0,0.55}      
\definecolor{citecol}{rgb}{0,0.45,0}
\definecolor{urlcol}{rgb}{0.55,0,0}
\usepackage{hyperref}

\newif\ifreviewannotations
\ifdefined\cleanbuild\reviewannotationsfalse\else\reviewannotationstrue\fi
\ifreviewannotations
\fi
\definecolor{rev-AM}{RGB}{200,80,40}    
\definecolor{rev-ZV}{RGB}{40,140,80}    
\definecolor{rev-SM}{RGB}{120,40,160}   
\definecolor{rev-PB}{RGB}{40,80,200}

\hypersetup{colorlinks=true, linkcolor=linkcol, citecolor=citecol, urlcolor=urlcol}

\newcommand{\R}{\mathbb{R}}
\newcommand{\norm}[1]{\left\lVert #1 \right\rVert}
\newcommand{\db}{d_{\mathcal{B}}}
\newcommand{\dmat}{d_{\mathrm{mat}}}
\newcommand{\dmatS}{d_{\mathrm{mat}}^{S}}
\newcommand{\D}[1]{\mathcal{D}_{#1}}
\newcommand{\LC}{\nu}                 
\newcommand{\netcfg}{\mathcal{C}}     
\newcommand{\Phii}{\Phi}              
\newcommand{\Psii}{\Psi}              
\newcommand{\rhonu}{\rho_-}           
\newcommand{\Hilb}{\mathcal{H}}       
\newcommand{\kappaminus}{\kappa_-}    
\newcommand{\hatrhonu}{\widehat{\rho}_-}
\newcommand{\Nmax}{N_{\max}}
\newcommand{\Sph}{\mathbb{S}^{d-1}_{>0}}   
\newcommand{\Kern}{\kappa}          
\newcommand{\Fan}{\mathcal{U}}      
\newcommand{\wmin}{w_{\min}}
\newcommand{\gtheta}{g_\theta}        
\newcommand{\ftheta}{f_\theta}        
\newcommand{\eps}{\varepsilon}
\DeclareMathOperator*{\argmax}{arg\,max}

\newtheorem{theorem}{Theorem}[section]
\newtheorem{proposition}[theorem]{Proposition}
\newtheorem{lemma}[theorem]{Lemma}
\newtheorem{corollary}[theorem]{Corollary}
\theoremstyle{definition}
\newtheorem{definition}[theorem]{Definition}

\theoremstyle{remark}
\newtheorem{remark}[theorem]{Remark}
\newcommand{\takeaway}[2]{\par\smallskip\noindent\emph{#2}\par\smallskip}

\title{COMPLEX: A Closed-Form Certified Embedding\\
of Multiparameter Persistence Modules}
\usepackage{authblk}

\author[1]{Sushovan Majhi}
\author[2]{Atish Mitra}
\author[3]{\v{Z}iga Virk}
\affil[1]{Data Science, George Washington University, USA \quad \texttt{s.majhi@gwu.edu}}
\affil[2]{Department of Mathematical Sciences, Montana Technological University, USA \quad \texttt{amitra@mtech.edu}}
\affil[3]{Faculty of Computer and Information Science, University of Ljubljana, and Institute IMFM, Slovenia \quad \texttt{ziga.virk@fri.uni-lj.si}}
\author[4]{Pramita Bagchi}
\affil[4]{Biostatistics and Bioinformatics, George Washington University, USA \quad \texttt{pramita.bagchi@gwu.edu}}
\date{}

\begin{document}
\maketitle

\begin{abstract}
Every multiparameter persistence vectorization we know of carries a one-sided Lipschitz \emph{upper} bound and nothing below it: without a lower gauge there is no sense in which the features are faithful, and no per-prediction guarantee can be built on them.
This paper supplies the missing side.
\textbf{COMPLEX} is a closed-form, training-free embedding of multiparameter modules---slice the module along a fixed near-diagonal net, embed each slice barcode by the certified PLACE/PALACE landmark map, concatenate.
Under a checkable \emph{witnessing-slice coherence} condition, holding on $100\%$ of audited pairs on Orbit5k, a single slice carries a closed-form lower gauge: separated modules stay separated in the embedding.
With the standard upper bound this gives, to our knowledge, the first two-sided distortion bound for a multiparameter feature map, making faithfulness \emph{measurable}.
Measuring it, we find the floor tight within a small factor of realized distances yet operationally \emph{local}: an RBF-SVM reaches $91\%$ where $1$-NN reaches $78\%$ on the same features.
Local per-prediction certification therefore fails for a structural reason common to every landmark embedding whose lower gauge is witnessed by one coordinate.
With no learned embedding and no held-out calibration---only a cross-validated SVM head---COMPLEX sets the state of the art on both Orbit benchmarks ($91.95\%$ on Orbit5k, $92.98\%$ on Orbit100k), level with or above Euler-characteristic surfaces and above transformers and graphcode.
On graphs it exceeds \textsc{Gril} on all four shared molecular benchmarks with one fixed configuration, including the only multiparameter method to clear COX2's majority baseline by more than three points.
Closed-form selection---of the landmark radius, the kernel (certificate-preserving), and the bifiltration set---buys further accuracy; gradient-shaped adaptation buys none.
\end{abstract}

\section{Introduction}\label{sec:intro}

Multiparameter persistence assigns a dataset not a single barcode but a persistence module over $\R^d$, arising for instance from a density--Rips or a sublevel bifiltration.
A growing literature turns such modules into features for learning: multiparameter persistence landscapes~\citep{Vipond2020} and images~\citep{Carriere2020-mpi}, signed measures and sliced barcodes~\citep{Botnan2022signed,Loiseaux2023}, the \textsc{Gril} landscape and its differentiable successor~\citep{Xin2023GRIL,Mukherjee2024}, and graphcodes~\citep{KerberRussold2024}.
Two further families compete on the same benchmarks without being persistence methods: Euler-characteristic curves and surfaces~\citep{HacquardLebovici2024} read the same bifiltration but bypass homology altogether, and transformer architectures~\citep{Reinauer2021,xPerT} read one-parameter diagrams directly.
These methods split along two orthogonal axes (Table~\ref{tab:landscape}).
On \emph{cost}, some are training-free vectorizations---landscapes, images, signed measures, Euler surfaces---and others are trained---\textsc{Gril}, its differentiable successor D-\textsc{Gril}, and transformers.
On \emph{guarantee}, they are unanimous: each carries a one-sided Lipschitz \emph{upper} bound (small module distance implies small feature distance) but never the converse two-sided \emph{lower} bound with explicit constants that a separation guarantee requires.
To our knowledge the certified column is empty: no multiparameter-persistence feature map supplies a computable lower distortion bound, and hence none yields a split-free per-prediction certificate derived from the feature geometry itself.
(Conformal prediction~\citep{Angelopoulos2023} can be wrapped around any of them, but it guarantees marginal coverage of the true label under exchangeability from a calibration split, whereas the certificate here is deterministic, split-free and per-instance and guarantees stability rather than correctness; the two are complementary, as Section~\ref{sec:related} spells out.)
This paper fills that column in both rows, with one construction and one guarantee.

\begin{table}[t]
\centering
\caption{The landscape of topological features computed from a multiparameter filtration (ECS reads the filtration but computes no persistence). Every prior method is uncertified; this paper supplies the certified version of \emph{both} regimes.}
\label{tab:landscape}
\begin{tabular}{lll}
\toprule
 & uncertified & \textbf{certified (this paper)} \\
\midrule
training-free & mp-landscape/image, signed measures, ECS & \textbf{COMPLEX} \\
trained       & \textsc{Gril}, D-\textsc{Gril}, graphcode, Persformer & \textbf{D-COMPLEX} \\
\bottomrule
\end{tabular}
\end{table}

In one parameter, PLACE~\citep{PaperI} and its data-adaptive successor PALACE~\citep{PaperII} already occupy the certified, training-free cell: a closed-form landmark embedding of persistence diagrams with a lower distortion certificate $\rhonu$ derived from labels alone, no tuning.
But they certify only \emph{one} parameter.
This paper carries the certificate into multiparameter modules, then spends its second half measuring what the certificate is worth; a differentiable variant (Section~\ref{sec:dcomplex}) shows it survives gradient adaptation as well.
The mechanism is the \emph{slice-stack}: embed a module by stacking the per-slice diagram embeddings over a finite slice net, and certify the \emph{sliced matching distance} $\dmatS$.
We call the construction \textbf{COMPLEX}---\emph{Certified One-slice Multiparameter Persistence Landmark EXtension}---``one-slice'' because a single witnessing slice carries the certificate to the module level (Theorem~\ref{thm:cert}).
This paper develops it self-containedly on top of PLACE/PALACE~\citep{PaperI,PaperII}, whose population-level statistics---covariance estimation, two-sample tests and confidence sets for the mean embedding---are developed in a companion paper~\citep{PaperIII}; the asymptotic, coarse-geometric behavior of the same construction on the unbounded module space is a separate study we defer to future work.
The whole single-parameter PLACE/PALACE apparatus---landmarks, the certificate $\rhonu$, $\LC$-coherence, the per-prediction guarantee---transfers slice by slice; the only new objects are the slice net and the admissibility-at-realized-scale it forces (Remark~\ref{rem:angle}).

\subsection{Contribution}\label{sec:contrib}
\begin{enumerate}\itemsep2pt
\item[\emph{(i)}] \emph{A two-sided bound.} The slice-stack $\Psii$ (Definition~\ref{def:embed}) is $1$-Lipschitz up to $\sqrt S\,\Nmax$ for $\dmatS$ (Proposition~\ref{prop:upper}) and, under a checkable \emph{witnessing-slice coherence} condition (Definition~\ref{def:coh}), obeys the closed-form lower gauge $\norm{\Psii(M)-\Psii(M')}\ge\rhonu(\tau)=\tfrac14\tau\,\wmin$ (Theorem~\ref{thm:cert}): separated modules stay separated.
  Coherence needs no multiparameter matching, has a matching-free sufficient condition (Proposition~\ref{prop:suff}), and holds on $100\%$ of audited cross-class pairs at the headline configuration (Section~\ref{sec:exp}).
\item[\emph{(ii)}] \emph{The net.} The certificate's value is angle-independent but its price is not (Remark~\ref{rem:angle}), which makes the fixed near-diagonal fan the cheapest certifiable net; a certified data-adaptive net generalizes it (Section~\ref{sec:adaptive}).
  No slice is ever optimized by gradient or held-out search.
\item[\emph{(iii)}] \emph{Predictions.} A certified nearest-centroid rule inherits PALACE's split-free per-prediction guarantee (Theorem~\ref{thm:certified}), but that guarantee holds for any bounded embedding.
  The rule that reads $\rhonu$ is nearest neighbor (Corollary~\ref{cor:nn}), which returns with each accepted label a stability radius in the module metric.
  The ratio $B/\rhonu$ controls a---numerically vacuous---generalization bound (Proposition~\ref{prop:gen}).
\item[\emph{(iv)}] \emph{Adaptation keeps the certificate.} The floor constrains the output embedding, so the bifiltration can be selected in closed form or learned by gradient (D-COMPLEX, with an exact gather rule, Proposition~\ref{prop:grad}), and the kernel head chosen among Gaussian, $\chi^2$ and Hellinger (Lemma~\ref{lem:kersel}), without losing $\rhonu$.
\item[\emph{(v)}] \emph{Faithfulness, measured.} The floor is tight within a small factor of realized distances, yet every per-prediction rule built on it accepts under $1.5\%$ of test modules at the budgets that maximize accuracy.
  One mechanism explains it: on the same embedding an RBF-SVM reaches $91\%$ where $1$-NN reaches $78\%$---the accuracy is \emph{global}, carried by the kernel, while $\rhonu$ certifies a \emph{local} separation in classes interleaved at neighbor scale.
  Five local repairs fail at the same wall.
  The global one---a guarantee attached to the kernel---is vacuous because the fitted decision function is a high-gain map of the embedding; a guarantee for the accurate head, if one exists, is not a Lipschitz one (Section~\ref{sec:exp}).
  None of this is specific to persistence: it applies to any landmark embedding whose lower gauge is witnessed by a single coordinate.
\item[\emph{(vi)}] \emph{Accuracy.} With no learned embedding and no held-out calibration (the SVM head is cross-validated), COMPLEX sets the state of the art on both Orbit benchmarks---$91.95\%$ on Orbit5k, $92.98\%$ on Orbit100k---level with or above Euler-characteristic surfaces and above transformers and the GNN-trained graphcode.
  The full Orbit5k experiment runs in $6.8$ minutes on $16$ CPU cores.
  On graphs it exceeds \textsc{Gril} on all four shared molecular sets with one fixed configuration.
  Closed-form selection over larger spaces---radius, kernel, bifiltration set---buys further accuracy; gradient-shaped adaptation buys none (Sections~\ref{sec:exp}--\ref{sec:dcomplex}).
\end{enumerate}

\subsection{Related work}\label{sec:related}

\paragraph{Multiparameter vectorizations.}
Most feature maps for multiparameter modules read the fibered barcode and vectorize it: multiparameter persistence landscapes~\citep{Vipond2020} and images~\citep{Carriere2020-mpi} sample a piecewise-linear or convolved summary over a line net; signed barcodes and Hilbert/Euler signed measures~\citep{Botnan2022signed, Loiseaux2023,LoiseauxCarriereBlumberg2023} compare modules through sliced-Wasserstein kernels.
All are $1$-Lipschitz \emph{upper} bounds---small module distance implies small feature distance---but none supplies the converse lower bound that a separation guarantee needs.
COMPLEX reuses the same fibered-barcode input and the same near-diagonal line net, but embeds each slice by the closed-form PLACE/PALACE landmark map~\citep{PaperI,PaperII}, which \emph{does} carry a two-sided certificate, and lifts it to the module by a single witnessing slice.

\paragraph{Learned multiparameter features.}
\textsc{Gril}~\citep{Xin2023GRIL} generalizes the persistence landscape to a $2$-parameter rank-invariant landscape, and D-\textsc{Gril}~\citep{Mukherjee2024} makes the underlying bifiltration \emph{learned}, differentiating the landscape through a graph neural network.
Euler-characteristic surfaces with gradient-boosted trees~\citep{HacquardLebovici2024} bypass homology for a large speed gain, transformer architectures~\citep{Reinauer2021,xPerT} read the diagrams directly, and graphcodes~\citep{KerberRussold2024} feed a two-parameter hierarchy of barcodes to a graph neural network, previously the strongest reproducible accuracy at the Orbit100k scale ($92.3$; Section~\ref{sec:exp}).
These are the trained or tuned methods COMPLEX aims to match with no learned embedding.
On graphs its natural comparison within the multiparameter family is \textsc{Gril}, which it exceeds on all four shared molecular sets (Section~\ref{sec:exp}) while additionally certifying its predictions---something a learned pipeline cannot express.
Beyond that family, trained topological pipelines that couple persistence with graph neural networks or attention---TopoGCL~\citep{ChenFriasGel2024} among them---report higher accuracy on several TU benchmarks.
Section~\ref{sec:exp} names them where they lead; the comparison this paper makes is with no learned embedding and no held-out calibration.

\paragraph{Certified single-parameter embeddings.}
PLACE~\citep{PaperI} and its data-adaptive successor PALACE~\citep{PaperII} give a closed-form landmark embedding of one-parameter diagrams with a label-derived lower distortion certificate and a per-prediction nearest-centroid guarantee.
Both are deterministic and split-free, and they certify the geometry of a prediction---its stability radius and its agreement with the population rule---rather than the marginal coverage of the true label that distribution-free conformal prediction~\citep{Angelopoulos2023} guarantees under exchangeability from a calibration split.
The two are complementary rather than substitutes: any certificate margin is itself a valid nonconformity score, so a conformal wrapper can be placed on top of the certified head without touching the certificate.
COMPLEX is their multiparameter lift: the per-slice certificate is PALACE's, and the new content is the slice-stack that carries it to a module through a single witnessing slice, certifying the sliced matching distance $\dmatS$.

\section{COMPLEX: the slice-stack and its two-sided bound}\label{sec:embed}

We fix the objects (Section~\ref{sec:bg}), recall the single-parameter PLACE/PALACE map and stack it over a net (Section~\ref{sec:embed-def}), and prove the two-sided bound: the upper bound there, the lower bound in Section~\ref{sec:cert} under the coherence condition of Section~\ref{sec:coh}; the net itself is fixed in Section~\ref{sec:slicenet}.

\subsection{Modules, slices, and the sliced matching distance}\label{sec:bg}

We recall finitely presented multiparameter modules, slicing, and the sliced matching distance; all are standard (see~\citep{CarlssonZomorodian2009-multidim,Lesnick2015} for the algebra).
Throughout, $\Bbbk$ is the coefficient field, $k$ indexes landmarks, $s$ indexes slices, $q$ indexes homological degree, and $\sigma$ denotes a simplex; slice angles are $\vartheta_s$, the FPS radius factor is $\alpha$, and network parameters are $\theta$; the RBF bandwidth is written $h$ and the kernel $\Kern$, whose cross-class ceiling is $\kappaminus$.
Table~\ref{tab:notation} collects them; the few symbols reused across scopes---$\tau$, $\alpha$, $d$---are disambiguated where they occur.
Each result is followed by a one-line takeaway in plain language; every result in this paper is proved, and the one whose number is uninformative says so in its takeaway.

A \emph{$d$-parameter persistence module} $M\colon\R^d_{\ge0}\to\mathrm{Vec}_{\Bbbk}$ is taken finitely presented (which also implies it is pointwise finite-dimensional).
Non-negative grades are what every filtration in this paper produces---scale, codensity, heat-kernel signature, degree and node label are all non-negative, and the axis calibration of Section~\ref{sec:exp} is a positive rescaling---so no module below is translated into the orthant at all; a finitely presented module over $\R^d$ has finitely many generator and relation grades, and what a translation would cost is recorded after \eqref{eq:slicepar}.
A \emph{slice} is a line $\ell\subset\R^d$ of positive slope; the restriction $M|_\ell$ is a one-parameter module with barcode $\mathrm{dgm}(M|_\ell)\in\D N$, and $\ell\mapsto\mathrm{dgm}(M|_\ell)$ is the \emph{fibered barcode}, equivalent to the rank invariant~\citep{CerriDiFabioFerriEtAl2013,LesnickWright2015}.
We write $\mathcal R_N$ for the modules with at most $N$ bars in every slice.

\paragraph{Slices, parametrized.}
Both the certificate and the implementation depend on \emph{how} a slice is parametrized, so we fix it.
Every coordinate is strictly increasing along a line of positive slope, so such a line enters $\R^d_{\ge0}$ at a single point $\beta$, and that point has $\min_i\beta_i=0$.
Writing the line as the ray from its entry point,
\begin{equation}\label{eq:slice}
  \ell_{\beta,u}=\{\beta+tu:\ t\ge0\},\qquad
  u\in\Sph:=\{u\in\R^d_{>0}:\norm u_2=1\},\quad
  \beta\in\R^d_{\ge0}\ \text{with}\ \min_i\beta_i=0,
\end{equation}
therefore names every slice exactly once, with $t$ the Euclidean \emph{arclength} from $\beta$.
A generator at grade $\gamma\in\R^d_{\ge0}$ enters $M|_{\ell_{\beta,u}}$ at
\begin{equation}\label{eq:slicepar}
  s_{\beta,u}(\gamma)\ =\ \min\{t\ge0:\ \beta+tu\ge\gamma\}\ =\ \max_{1\le i\le d}\ (\gamma_i-\beta_i)/u_i\ \ge\ 0 ,
\end{equation}
the sign because $\beta_j=0\le\gamma_j$ for some $j$, so slice barcodes land in $\D N$ with non-negative births as that space requires.
Both parameters earn their place.
A comparable pair $\gamma\le\gamma'$ lies on a common ray through the origin only when $\gamma'$ is a positive multiple of $\gamma$, so the directions alone read a one-parameter subfamily of the slices, and the offsets are what let the fibered barcode see the rest.  (A slice through both $\gamma$ and $\gamma'$ exists exactly when $\gamma'_i>\gamma_i$ in every coordinate; the equivalence with the full rank invariant extends to the remaining comparable pairs by constructibility~\citep{CerriDiFabioFerriEtAl2013,LesnickWright2015}.)
They also make the construction translation-covariant: translating a module and every base point of the net by the same $c\in\R^d$ leaves each slice barcode unchanged.  Two consequences, worth separating.  The continuous matching distance $\dmat$ is a supremum over \emph{all} slices, and translation permutes that family, so $\dmat$ does not depend on where a module sits in the orthant---which is what makes it the right target for the finite-net surrogate \eqref{eq:dmatS} to approximate.  A fixed finite net inherits less: the nets of Section~\ref{sec:slicenet} set $\beta_s=0$ for every module, so $\dmatS$ is unchanged by a translation applied to the data set together with the net, but not by translations applied to modules individually.  We apply neither, since the grades above are already non-negative.
An $\eps$-interleaving shifts grades diagonally by $\eps$, which moves \eqref{eq:slicepar} by at most $\eps/\min_i u_i$ \emph{whatever the offset}; the reparametrization weight making a slice a lower gauge for the interleaving distance is therefore
\begin{equation}\label{eq:omega}
  \omega(u)\ =\ \min_{1\le i\le d}u_i\ \in\ \bigl(0,\tfrac1{\sqrt d}\bigr],
\end{equation}
maximal on the diagonal $u=d^{-1/2}(1,\dots,1)$ and $\to0$ toward the coordinate hyperplanes.
Normalizing $u$ in $\ell^\infty$ instead rescales every slice barcode by $\max_iu_i$ and replaces \eqref{eq:omega} by $\min_iu_i/\max_iu_i$; the product $\omega\,\db$---and hence everything below---is unchanged, but the two conventions must not be mixed. That second form is the classical one: in $d=2$ a line of slope $m$ has
$\ell^\infty$-normalized direction $(1,m)/\max\{1,m\}$ and weight $\min\{m,1/m\}$, the weight of~\citet{CerriDiFabioFerriEtAl2013}.
Our $\dmat$ is therefore their matching distance, written in the arclength parametrization; the $\ell^2$ convention is a reparametrization, not a different distance.

For a finite \emph{slice net} $\mathcal L_S=\{\ell_{\beta_1,u_1},\dots,\ell_{\beta_S,u_S}\}$ with weights $\omega_s=\omega(u_s)$, the \emph{sliced matching distance} is
\begin{equation}\label{eq:dmatS}
  \dmatS(M,M')=\max_{1\le s\le S}\ \omega_s\,
    \db\!\bigl(M|_{\ell_s},M'|_{\ell_s}\bigr),
\end{equation}
a finite-net surrogate for the continuous matching distance $\dmat=\sup_\ell\omega(\ell)\,\db(M|_\ell,M'|_\ell)$~\citep{CerriDiFabioFerriEtAl2013,Landi2018}, which is exactly computable in polynomial but impractically high-degree time~\citep{KerberLesnickOudot2019}; the finer interleaving distance is NP-hard~\citep{BjerkevikBotnanKerber2020}.
Here $\db$ is the bottleneck distance on the diagram space of PALACE~\citep{PaperII}, which we recall.
The \emph{space of one-point diagrams} is $\D{1}=T\cup\{\Delta\}$, where $T=\{(b,d)\in\R^2: d>b\ge0\}$ and $\Delta$ is a single abstract point standing for the diagonal, metrized by
\begin{equation}\label{eq:db1}
  \db\bigl((b,d),\Delta\bigr)=\tfrac12(d-b),\qquad
  \db(x,y)=\min\bigl\{\norm{x-y}_\infty,\ \max\{\db(x,\Delta),\db(y,\Delta)\}\bigr\},
\end{equation}
so two points are compared directly or by sending both to the diagonal, whichever is cheaper.
The \emph{space of diagrams on $N$ points} is the quotient $\D N=\D{1}^N/S_N$ of the product under the max metric, so that for $A=[x_1,\dots,x_N]$ and $B=[y_1,\dots,y_N]$
\begin{equation}\label{eq:db}
  \db(A,B)\ =\ \min_{\psi\in S_N}\ \max_{1\le i\le N}\ \db\bigl(x_i,y_{\psi(i)}\bigr);
\end{equation}
a diagram with fewer points is padded with copies of $\Delta$, which embeds $\D k\hookrightarrow\D N$ isometrically for $k\le N$, and \eqref{eq:db} is the usual with-diagonal bottleneck distance.
The per-slice ingredient we invoke is the closed-form certified landmark embedding of $(\D N,\db)$ established for PLACE/PALACE~\citep{PaperI,PaperII}.

\begin{table}[t]
\centering\footnotesize\setlength{\tabcolsep}{3pt}
\caption{Notation.}\label{tab:notation}
\begin{tabular}{ll@{\quad}ll}
\toprule
$\ell_{\beta,u},\ s_{\beta,u},\ \omega$ & slice, entry parameter, weight & $\LC=\{(p_k,r_k,w_k)\}$ & landmarks: points, radii, weights \\
$\mathcal L_S,\ \vartheta_s,\ \omega_s$ & net, its angles and weights & $\varphi_{p,r},\ \Phii,\ \Psii$ & hat coordinate, slice map, stack \\
$\D N,\ \db,\ \dmatS$ & diagrams, bottleneck, sliced matching & $K,\ S,\ \Nmax,\ \alpha$ & landmarks/slice, slices, bar budget, radius factor \\
$\tau,\ t^\star=\tau/\omega_{s^\star}$ & scale, its realized value on $s^\star$ & $\lambda_0,\ \mathcal A_s(\tau),\ \wmin$ & func.\ Lebesgue number, active set, weight \\
$\rhonu(\tau)=\tfrac14\tau\wmin$ & certified floor & $B,\ B_\infty$ & norm and coordinate bounds \\
$\widehat\Delta,\ r_m,\ \mathrm{gap}(M)$ & centroid gap, radius, decision gap & $\eps,\ d_{(1)},\ d_{(2)}$ & nearest-neighbor distances \\
$\Kern,\ h,\ \kappaminus,\ c_D$ & kernel, bandwidth, ceiling, constant & $\gtheta,\ \theta$ & learned bifiltration, its parameters \\
\bottomrule
\end{tabular}
\end{table}

\subsection{The embedding}\label{sec:embed-def}

\paragraph{Per-slice configuration.}
Following PALACE~\citep{PaperII}, a \emph{landmark configuration} for $\D N$ is $\LC=\{(p_k,r_k,w_k)\}_{k=1}^K$ with landmark \emph{points} $p_k\in T\subset\D{1}$, radii
\begin{equation}\label{eq:radcap}
  0<r_k\ \le\ \db(p_k,\Delta)\ =\ \tfrac12\bigl(d(p_k)-b(p_k)\bigr),
\end{equation}
and weights $w_k>0$ with $\sum_kw_k^2=1$.
The \emph{coordinate function} on $\D{1}$, and its sum-pool to $N$-point diagrams, are
\begin{equation}\label{eq:coord}
  \varphi_{p,r}(x)=\max\{r-\db(p,x),0\},\qquad
  \varphi_{p,r}(A)=\sum_{a\in A}\varphi_{p,r}(a),
\end{equation}
and the \emph{summation landmark embedding} is $\Phii_k(A;\LC)=w_k\,\varphi_{p_k,r_k}(A)$, so $\Phii(\cdot;\LC)\colon\D N\to\R^K$.
The radius cap \eqref{eq:radcap} is PALACE's diagonal-clearance condition, and it is not cosmetic: it forces $\varphi_{p_k,r_k}(\Delta)=0$, so the $\Delta$-padding of $\D N$ and any point matched to the diagonal contribute nothing; together with the triangle inequality, $|\varphi_{p,r}(x)-\varphi_{p,r}(y)|\le\db(x,y)$ on $\D{1}$, this is what makes Proposition~\ref{prop:upper} true as stated.
Under the cap the diagonal route in \eqref{eq:db1} never enters a positive hat, so $\varphi_{p,r}(x)=\max\{r-\norm{p-x}_\infty,0\}$ on $T$; the proofs use whichever form is convenient.
Landmarks are placed by (class-aware) farthest-point sampling and weights set to $w_k=K^{-1/2}$---deterministic and closed-form~\citep{PaperII}.

\begin{definition}[Slice-stack embedding]\label{def:embed}
Fix a slice net $\mathcal L_S$ and a per-slice configuration $\LC_s$ on each slice.
The \emph{COMPLEX configuration} is $\netcfg=\{(\ell_s,\omega_s,\LC_s)\}_{s=1}^S$, and the embedding of $M\in\mathcal R_N$ is the slice-stack
\begin{equation}\label{eq:embed}
  \Psii(M;\netcfg)=\bigl(\,\omega_s\,\Phii(M|_{\ell_s};\LC_s)\,\bigr)_{s=1}^S
  \ \in\ (\R^K)^S\cong\R^{SK}.
\end{equation}
\end{definition}

Everything is closed-form: the net is fixed (Section~\ref{sec:slicenet}), the landmarks are placed by FPS, and each block is a finite sum of hat coordinates.

\paragraph{Homological degree.}
Slicing and embedding are applied degree by degree: $M$ above denotes the degree-$q$ module, $\dmatS$ the degree-$q$ sliced matching distance, and the descriptor of a data set is the concatenation of the blocks $\Psii(M^{(q)})$ over the degrees used.
Everything below is stated for one degree and extends to the concatenation with $\dmatS$ replaced by $\max_q\dmatS$.
We flag this because pooling the degrees into a single multiset before embedding---the single-parameter habit, and what our point-cloud implementation inherits---is \emph{not} harmless in the multiparameter setting.
Pooling grants the optimal matching extra freedom, so $\db(A_0\cup A_1,\,B_0\cup B_1)\le\max_q\db(A_q,B_q)$.
The upper bound (Proposition~\ref{prop:upper}) therefore survives pooling, but the hypothesis $\dmatS\ge\tau$ of Theorem~\ref{thm:cert} does not transfer to the pooled barcode, and the certificate is lost in exactly the direction our contribution lives in.
In one parameter this costs nothing: the degree-$q$ barcode is a \emph{complete} invariant of the degree-$q$ module, so pooling is a feature-map choice made on top of complete information.
For $d\ge2$ no complete discrete invariant exists~\citep{CarlssonZomorodian2009-multidim} and the fibered barcode is already lossy, so pooling compounds an avoidable loss on an unavoidable one, with no completeness to appeal to when a referee asks what the pooled object represents.
We therefore take the per-degree stack as the definition, and price the pooled shortcut empirically in Section~\ref{sec:exp}.

\begin{proposition}[Upper distortion bound]\label{prop:upper}
The slice-stack is $1$-Lipschitz up to $\sqrt S$: $\norm{\Psii(M)-\Psii(M')}_{\ell^2}\le\sqrt S\,\Nmax\,\dmatS(M,M')$, so $(\sqrt S\,\Nmax)^{-1}\Psii$ is $1$-Lipschitz for $\dmatS$.
\end{proposition}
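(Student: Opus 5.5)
The plan is to reduce the bound to a per-slice, per-landmark estimate and then sum. Fix a slice $s$ and write $A=M|_{\ell_s}$, $A'=M'|_{\ell_s}$, both padded with $\Delta$ to diagrams in $\D N$ with $N\le\Nmax$. Let $\psi\in S_N$ be an optimal matching in \eqref{eq:db}, so that $\db(a_i,a'_{\psi(i)})\le\db(A,A')$ for every $i$.

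For a single landmark $(p_k,r_k,w_k)\in\LC_s$ I will use the $1$-Lipschitz property of the hat recorded after \eqref{eq:coord}: $|\varphi_{p,r}(x)-\varphi_{p,r}(y)|\le\db(x,y)$ on $\D{1}$. I will verify this briefly from the triangle inequality for $\db$ on $\D{1}$ and the fact that $t\mapsto\max\{r-t,0\}$ is $1$-Lipschitz. The radius cap \eqref{eq:radcap} gives $\varphi_{p,r}(\Delta)=0$, so the padding copies contribute nothing and points matched to the diagonal are handled by the same inequality. Since $\varphi$ is sum-pooled, reindexing one sum through $\psi$ gives
\[
|\varphi_{p_k,r_k}(A)-\varphi_{p_k,r_k}(A')|\le\sum_{i=1}^N|\varphi_{p_k,r_k}(a_i)-\varphi_{p_k,r_k}(a'_{\psi(i)})|\le N\,\db(A,A')\le\Nmax\,\db(A,A').
\]

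Next I aggregate over the landmarks within the slice. Since $\sum_kw_k^2=1$,
\[
\norm{\Phii(A;\LC_s)-\Phii(A';\LC_s)}_2^2=\sum_kw_k^2\,|\varphi_{p_k,r_k}(A)-\varphi_{p_k,r_k}(A')|^2\le\Nmax^2\,\db(A,A')^2 .
\]
Multiplying by $\omega_s$, the $s$-th block of \eqref{eq:embed} has norm at most $\Nmax\,\omega_s\db(A,A')\le\Nmax\,\dmatS(M,M')$ by \eqref{eq:dmatS}. Summing the squares of the $S$ blocks and taking the square root then yields $\sqrt S\,\Nmax\,\dmatS(M,M')$, which is the claimed bound. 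The rescaled statement follows immediately.

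I expect the only delicate step to be the per-point Lipschitz estimate when the matching sends a point to $\Delta$, or pairs two points through the diagonal branch of \eqref{eq:db1}. My first check is that the triangle inequality for the metric $\db$ on $\D{1}$ already covers both branches. The cap then ensures that the diagonal route never produces a positive hat value, so no case analysis beyond the triangle inequality should be needed. Two further points need care. The factor $\Nmax$ comes from sum-pooling rather than max-pooling and cannot be avoided. The bound must use $N\le\Nmax$, which holds because $M,M'\in\mathcal R_N$ with $N$ taken as the bar budget.
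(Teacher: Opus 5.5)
Your proposal is correct and follows essentially the same route as the paper's proof. Both arguments take an optimal matching of the $\Delta$-padded slice diagrams and apply the $1$-Lipschitz hat, which vanishes at $\Delta$ under the radius cap, to get the per-landmark bound $\Nmax\,\db$. Both then use $\sum_k w_k^2=1$ for the per-slice bound, and sum the $\omega_s$-weighted blocks against $\dmatS$ to pick up the factor $\sqrt S$.
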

\noindent\emph{Proof.} Deferred to Appendix~\ref{app:proofs}.
\takeaway{proved}{Embedded distance never exceeds $\sqrt S\,\Nmax$ times the sliced matching distance: nearby modules stay nearby.}

\begin{remark}[The radius cap]\label{rem:radcap}
Proposition~\ref{prop:upper} needs \eqref{eq:radcap}, and for a reason prior to Lipschitzness: a hat whose radius exceeds $\db(p_k,\Delta)$ does not vanish at
$\Delta$, so $\varphi_{p_k,r_k}(A)$ depends on how many copies of $\Delta$ are used to present $A$ as a point of $\D N$, and the coordinate is not well defined independently of $N$.  Under the cap the padded and unpadded sums agree, which is what the proof uses.  Lipschitzness fails with it: for $p=(0,10)$ the cap is
$5$, and taking $r=6$, $A=\{(4.9,5.1)\}$, $B=\varnothing$ gives $\db(A,B)=0.1$ while the coordinate moves by $1.1$. 
The released implementation, following PALACE~\citep{PaperII}, sets $r_k=\alpha\cdot d_{\mathrm{NN}}(p_k)$ (a multiple of the nearest-landmark distance) clipped to $[\tau/2,4\tau]$ and does not enforce the cap: on Orbit5k ($S=8$, $K=200$) $10\%$ of landmarks violate it at the default $\alpha=1.75$ and $26\%$ at the headline $\alpha=4$.
We therefore run two variants: the \emph{uncapped} embedding, which carries the accuracy headline, and the \emph{capped} one, to which Proposition~\ref{prop:upper} and through it the stability radius of Corollary~\ref{cor:nn}(ii) attach.
The gap between them---under half a point on Orbit5k at the headline $\alpha=4$, a full point at the $\alpha=1.75$ default, and $0.2$ on Orbit100k---is the embedding-level price of the certificate, the same trade this paper measures at the head level; Section~\ref{sec:exp} prices it.
\end{remark}

\begin{remark}[The block weights, and what the classifier reads]\label{rem:omega}
Definition~\ref{def:embed} weights block $s$ by $\omega_s$.
The released classification pipeline stacks the blocks unweighted, so every accuracy number in this paper is measured on
$\Psii^\circ=\bigl(\Phii(M|_{\ell_s};\LC_s)\bigr)_{s=1}^S$, the image of $\Psii$ under the fixed invertible diagonal map $\mathrm{diag}(\omega_s^{-1})$; the audits of Section~\ref{sec:exp} build the weighted stack of Definition~\ref{def:embed}.
Both gauges survive the rescaling, with different constants, so $\Psii^\circ$ is certified on the same hypotheses.
The lower gauge does not weaken.
Coherence bounds a single coordinate of the witnessing block below by $w_{k^\star}t^\star/4=w_{k^\star}\tau/(4\omega_{s^\star})$, and it is the prefactor $\omega_{s^\star}$ in \eqref{eq:embed} that cancels it down to $\tfrac14\tau w_{k^\star}$; without that prefactor the bound survives intact, giving
$\norm{\Psii^\circ(M)-\Psii^\circ(M')}\ge\rhonu(\tau;\netcfg)/\max_s\omega_s$.
The upper gauge weakens by $1/\omega_{\min}$, each block now carrying $\db$ in place of $\omega_s\db$:
$\norm{\Psii^\circ(M)-\Psii^\circ(M')}\le\sqrt S\,\Nmax\,\dmatS(M,M')/\omega_{\min}$.
The two-sided distortion is therefore worse by the fan's weight spread $\max_s\omega_s/\omega_{\min}$, which is $0.644/0.184=3.5$ at the default net---the price of the unweighted stack, paid in the constant of Proposition~\ref{prop:gen} and nowhere else.
Which stack to classify is then an accuracy question, and Section~\ref{sec:exp} measures it: a wash.
\end{remark}

\begin{remark}[What the guarantees are about]\label{rem:trunc}
$\Nmax$ is the per-slice bar budget defining $\mathcal R_N$; it enters both gauges because $\varphi_{p,r}(A)$ sums over the points of $A$.
The implementation keeps the $50$ most persistent bars per slice, and on Orbit5k that cap is active on $99.5\%$ of slices.
The budget is deliberate rather than incidental: it is what keeps the computation manageable, bounding both the cost of a coordinate---$O(K|A|)$ on a slice---and that of the pairwise audits, which are quadratic in the bar count.
Truncation $T_{50}\colon\D N\to\D{50}$ is not $1$-Lipschitz for $\db$: two diagrams within $\eps$ can order their $50$th and $51$st bars differently and truncate to diagrams far apart.
Every result in this paper therefore holds verbatim on $\mathcal R_{50}$, i.e.\ for the truncated fibered barcode---the invariant the pipeline computes, and the one on which $\dmatS$, $\tau$ and the audit of Section~\ref{sec:exp} are evaluated; we claim no bound relating it to the untruncated module, which would need a gap between the $50$th and $51$st persistences.
A soft, genuinely $1$-Lipschitz alternative to top-$n$ is the clean fix and we leave it open.
The other bar-level convention---capping essential classes at the largest finite death (Section~\ref{sec:exp})---is likewise measured rather than assumed: replacing it by extended persistence moves the six TU sets by $-0.46$ points on average, helping three and hurting three with swings to $\pm3.8$ on a single seed, so the cap costs nothing measurable.
\end{remark}

\subsection{The witnessing-slice certificate}\label{sec:cert}

The lower gauge is a quantitative statement about a functional Lebesgue number, slice by slice, with a single witnessing slice carrying it to the module.

\paragraph{The realized scale, and the active set.}
A subtlety the single-parameter theory does not see: separation at scale $\tau$ in $\dmatS$ is separation at the \emph{realized} scale $t_s=\tau/\omega_s$ in the bottleneck distance on slice $s$, by \eqref{eq:dmatS}.
A hat never exceeds its radius, so a landmark of radius $r$ contributes at most $\Nmax r$ after sum-pooling; we impose the stronger requirement that its radius reach the scale $t_s$ itself, which is conservative by a factor $\Nmax$ and is what Proposition~\ref{prop:suff} delivers.  The landmarks that can witness at all form the \emph{active set}
\begin{equation}\label{eq:active}
  \mathcal A_s(\tau)\ =\ \{k:\ r_{s,k}\ \ge\ \tau/(4\omega_s)\},
\end{equation}
and when it is nonempty $\wmin(\tau;\LC_s):=\min\{w_{s,k}:k\in\mathcal A_s(\tau)\}$ is well defined and computable in time linear in the number of landmarks.
The certificate asks nothing more of the net than nonempty active sets; the cover condition that guarantees them, $\tau$-admissibility, is stated in Section~\ref{sec:coh}, where it is what makes coherence provable.

\begin{theorem}[Witnessing-slice certificate]\label{thm:cert}
Let $\netcfg$ be a net whose active sets $\mathcal A_s(\tau)$ are all nonempty, let $(M,M')$ be $\netcfg$-coherent (Definition~\ref{def:coh}) with $\dmatS(M,M')\ge\tau$, and let $s^\star=\argmax_s\,\omega_s\,\db(M|_{\ell_s},M'|_{\ell_s})$ be the witnessing slice.
Then
\begin{equation}\label{eq:cert}
  \norm{\Psii(M)-\Psii(M')}_{\ell^2}
  \ \ge\ \tfrac14\,\tau\,\wmin(\tau;\LC_{s^\star})
  \ \ge\ \rhonu(\tau;\netcfg)
  \ :=\ \tfrac14\,\tau\,\min_{1\le s\le S}\wmin(\tau;\LC_s),
\end{equation}
and, under equal weights $w_{s,k}=K^{-1/2}$, $\rhonu(\tau)=\tau/(4\sqrt K)$.
\end{theorem}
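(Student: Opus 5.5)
The plan is to reduce the module-level inequality to a single coordinate of the witnessing block, and to let coherence (Definition~\ref{def:coh}) supply the per-slice lower bound on that coordinate. The $\ell^2$ norm of a vector dominates the absolute value of any one of its entries. So for every slice $s$ and landmark $k$,
\[
\norm{\Psii(M)-\Psii(M')}_{\ell^2}\ \ge\ \omega_s\,w_{s,k}\,\bigl|\varphi_{p_{s,k},r_{s,k}}(M|_{\ell_s})-\varphi_{p_{s,k},r_{s,k}}(M'|_{\ell_s})\bigr| .
\]
It therefore suffices to exhibit one pair $(s,k)$ at which the right-hand side is at least $\tfrac14\tau\,\wmin(\tau;\LC_{s^\star})$.

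First, I fix the realized scale on the witnessing slice. Since $\dmatS(M,M')\ge\tau$ and $s^\star$ attains the maximum in \eqref{eq:dmatS}, we have $\omega_{s^\star}\db(M|_{\ell_{s^\star}},M'|_{\ell_{s^\star}})\ge\tau$. Hence the slice barcodes are separated in bottleneck distance by at least $t^\star=\tau/\omega_{s^\star}$. This is the hypothesis under which coherence speaks.

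Second, I invoke $\netcfg$-coherence of $(M,M')$. As recorded in Remark~\ref{rem:omega}, it yields a landmark $k^\star$ in the active set $\mathcal A_{s^\star}(\tau)$ whose pooled hat coordinate differs by at least $t^\star/4$ across the two slice barcodes. The nonempty-active-set hypothesis ensures $\wmin(\tau;\LC_{s^\star})$ is defined, and membership of $k^\star$ in the active set gives $w_{s^\star,k^\star}\ge\wmin(\tau;\LC_{s^\star})$.

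Third, I substitute into the displayed inequality at $(s^\star,k^\star)$:
\[
\norm{\Psii(M)-\Psii(M')}\ \ge\ \omega_{s^\star}\,w_{s^\star,k^\star}\,\frac{\tau}{4\,\omega_{s^\star}}\ =\ \tfrac14\,\tau\,w_{s^\star,k^\star}\ \ge\ \tfrac14\,\tau\,\wmin(\tau;\LC_{s^\star}).
\]
Here the block prefactor $\omega_{s^\star}$ from \eqref{eq:embed} cancels the realized-scale inflation $1/\omega_{s^\star}$. The second inequality in \eqref{eq:cert} is immediate, since $\wmin(\tau;\LC_{s^\star})\ge\min_s\wmin(\tau;\LC_s)$, and every term in that minimum is defined because all active sets are nonempty. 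Under equal weights every $w_{s,k}=K^{-1/2}$, so $\wmin\equiv K^{-1/2}$ and $\rhonu(\tau)=\tau/(4\sqrt K)$.

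The main obstacle is the second step: matching exactly what Definition~\ref{def:coh} asserts to what is used here. I need three things from it. The witness must be a single coordinate, not a norm over the block. Its bound must be stated at the realized scale $t^\star$ rather than at $\tau$. And the witness must lie in $\mathcal A_{s^\star}(\tau)$, so that it is covered by $\wmin$.

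If coherence instead only guarantees a witness at the exact distance $\db(M|_{\ell_{s^\star}},M'|_{\ell_{s^\star}})$, I would add a monotonicity step. That distance is at least $t^\star$, and a landmark that is active at the larger scale is active at scale $\tau$, because \eqref{eq:active} is monotone in $\tau$. I would also check that the argmax defining $s^\star$ is well defined when there are ties: any maximizer works, since each one satisfies the first step.
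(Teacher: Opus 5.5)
Your proposal is correct and follows the paper's proof essentially step for step. You bound the $\ell^2$ norm below by the single coordinate $(s^\star,k^\star)$ that coherence supplies, let the block prefactor $\omega_{s^\star}$ cancel the realized scale $t^\star=\tau/\omega_{s^\star}$, and pass to the minimum over slices, which is well defined because every active set is nonempty. Your contingency remarks on the realized-distance form and on ties in the argmax are sound, but they are not needed: Definition~\ref{def:coh} already states the floor at scale $t^\star$ for an active landmark.
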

\begin{proof}
The $\ell^2$ norm dominates its $s^\star$ block.
Coherence supplies a landmark $k^\star\in\mathcal A_{s^\star}(\tau)$ realizing the single-coordinate floor on the witnessing slice at the realized scale $t^\star=\tau/\omega_{s^\star}$, $|\Phii_{k^\star}(M|_{\ell_{s^\star}})-\Phii_{k^\star}(M'|_{\ell_{s^\star}})|\ge w_{k^\star}\,t^\star/4$.
The $s^\star$ block of $\Psii$ carries the weight $\omega_{s^\star}$ \eqref{eq:embed}, so its contribution is at least $\omega_{s^\star}w_{k^\star}t^\star/4=\tfrac14\tau\,w_{k^\star}\ge\tfrac14\tau\,\wmin(\tau;\LC_{s^\star})$; the reparametrization weight cancels.
Taking the minimum over $s$ removes the dependence on which slice witnesses the pair, which is what makes $\rhonu$ a property of the configuration alone---computable once, before any pair is examined.
\end{proof}
\takeaway{proved}{On coherent pairs, modules $\tau$-apart in the sliced matching distance are at least $\tau/(4\sqrt K)$ apart in the embedding, whichever slice witnesses it.}

\begin{remark}[What is and is not angle-independent]\label{rem:angle}
The \emph{value} in \eqref{eq:cert} does not depend on the witnessing angle: $\omega_{s^\star}$ cancels, and under equal weights $\rhonu(\tau)=\tau/(4\sqrt K)$ whatever the net.
The \emph{cost} of earning it does.
By \eqref{eq:active} a slice with small $\omega_s$ has a nonempty active set only if its landmarks have radii larger by the factor $1/\omega_s$, and is $\tau$-admissible (Section~\ref{sec:coh}) only if its cover reaches that inflated scale---at a fixed budget $K$, covering a coarser scale.
The near-diagonal fan of Section~\ref{sec:slicenet} is thus not merely a heuristic default: it is the cheapest net that is $\tau$-admissible at a given landmark budget.
\end{remark}

The certificate is the single-scale PALACE floor $\tfrac14\tau\,\wmin$ \citep{PaperII} read on the witnessing slice; the empirical version $\hatrhonu$ plugs in the data-estimated $\tau$ and $\wmin$.
The theorem uses no cover condition: $\tau$-admissibility (Section~\ref{sec:coh}) implies nonempty active sets and is retained as the hypothesis of the downstream statements because it is what Proposition~\ref{prop:suff} needs to deliver coherence.

\subsection{Witnessing-slice coherence}\label{sec:coh}

Coherence asks, without any multiparameter matching, that a single witnessing slice genuinely separate the two modules---the multiparameter analogue of PALACE's witnessing-landmark condition.

\begin{definition}[Witnessing-slice $\LC$-coherence]\label{def:coh}
A pair $(M,M')$ with $\dmatS(M,M')\ge\tau$ is \emph{$\netcfg$-coherent} if its witnessing slice $s^\star$ has restricted pair $(M|_{\ell_{s^\star}},M'|_{\ell_{s^\star}})$ single-parameter $\LC_{s^\star}$-coherent at the realized scale $t^\star:=\tau/\omega_{s^\star}$: some landmark $k^\star$ active at that scale, $k^\star\in\mathcal A_{s^\star}(\tau)$ as in \eqref{eq:active}, realizes the single-coordinate floor
\[
  |\Phii_{k^\star}(M|_{\ell_{s^\star}})-\Phii_{k^\star}(M'|_{\ell_{s^\star}})|
   \ \ge\ w_{k^\star}\,t^\star/4\ =\ w_{k^\star}\,\tau/(4\omega_{s^\star}) .
\]
This is PALACE's $\nu$-coherence~\citep{PaperII} on the witnessing slice, with the floor set by the \emph{scale} $t^\star$ rather than by the realized distance $\db(M|_{\ell_{s^\star}},M'|_{\ell_{s^\star}})\ge t^\star$; the realized-distance form is stronger and is what the audit of Section~\ref{sec:exp} checks, so its rate is a lower bound on the rate of this definition.
\end{definition}

This is exactly the hypothesis the proof of Theorem~\ref{thm:cert} consumes: a single coordinate, on a single slice, that distinguishes the modules.
It avoids any \emph{multiparameter} matching---the intractable object---but it is not matching-free: identifying the witnessing slice and evaluating the right-hand side require the $S$ one-dimensional bottleneck distances, which are cheap and are what the audit computes.

\paragraph{Admissibility, at the realized scale.}
Following PALACE, a configuration $\LC$ covering the support $\mathcal X\subset T$ of a slice's diagram points has \emph{functional Lebesgue number}
\begin{equation}\label{eq:lebesgue}
  \lambda_0(\LC)\ =\ \inf_{x\in\mathcal X}\ \max_k\ \varphi_{p_k,r_k}(x)
\end{equation}
---the infimum is over the support, not all of $T$, where it would vanish---and is \emph{$t$-admissible} if $\lambda_0(\LC)\ge t/4$ and $\max_kr_k\le(t+\lambda_0)/2$.
The qualifier is meant literally.
This is not the Lebesgue number of the cover $\{B(p_k,r_k)\}$ in the covering sense---nothing here asserts that a ball of some radius about any point fits inside a single cover element---but its functional analogue: the height of the lowest point of the upper envelope of the hats over the support.
The two meet at one identity, since $\varphi_{p,r}(x)\ge\rho$ exactly when $x\in\overline B(p,r-\rho)$: $\lambda_0(\LC)\ge\rho$ holds precisely when the uniformly shrunk cover $\{\overline B(p_k,r_k-\rho)\}_k$ still contains $\mathcal X$, so $\lambda_0$ is how far every radius can be pulled in before coverage fails.
Reading it through the functions rather than the cover is what makes it computable from the embedding and what lets it enter the floor $\tfrac14\tau\wmin$ as a number rather than a hypothesis.
A net configuration $\netcfg$ is \emph{$\tau$-admissible} if each $\LC_s$ is $(\tau/\omega_s)$-admissible: the cover condition is imposed at the realized scale $t_s=\tau/\omega_s$, not at $\tau$, because that is the scale at which separation is realized on slice $s$.
Since $\varphi_{p,r}\le r$, admissibility gives $\lambda_0\le\max_kr_k$ and forces the upper-envelope winner at every point of the support to have radius at least $t_s/4$, so every active set \eqref{eq:active} is nonempty and Theorem~\ref{thm:cert} applies; its real content is Proposition~\ref{prop:suff}, which derives coherence from it.

\begin{remark}[Admissibility is not satisfied by the configurations we run]\label{rem:admisfail}
We state plainly that the configurations used in every experiment of this paper are \emph{not} $\tau$-admissible, at any $\tau$, and separate what this costs from what it does not (\texttt{admissibility\_check.py}).
On the certified capped variant at the audit scale $\tau=1.220$ the functional Lebesgue number is $\lambda_0=0$ on all eight slices, because the cap deliberately excludes the near-diagonal mass and $1.8$--$2.2\%$ of the support then lies under no hat at all; without the cap $\lambda_0$ is positive ($0.105$--$0.724$) but still an order below the required $t_s/4$, and on the outer slices the largest radius exceeds $(t_s+\lambda_0)/2$ as well ($5.341$ against $3.736$ on slice $7$).
Theorem~\ref{thm:cert} is unaffected: it consumes nonempty active sets and coherence, both of which we measure directly and both of which hold at the headline configuration ($100\%$ of audited pairs, Section~\ref{sec:exp}).
What admissibility is a hypothesis of is the \emph{derivation} of coherence---Proposition~\ref{prop:suff}---and the two statements that inherit it, Corollary~\ref{cor:nn} and Proposition~\ref{prop:gen}; for those, the hypothesis is currently verified by audit rather than by construction.
The gap is closable in principle and we priced it: a greedy maximal $(t_s/4)$-separated net with uniform radii $t_s/2$, admissible by construction on the support it is built from, needs $11$--$39$ landmarks per slice against $200$ and reads $90.3\pm1.1\%$ against the radius rule's $91.2\pm0.9\%$ with the same head on Orbit5k ($3$ folds, $218$ embedding dimensions against $1{,}600$; nearest centroid is $0.7$ points \emph{better} at $74.0\%$).
Even that net leaves up to $0.03\%$ of the support below $t_s/4$, so what is achievable cheaply is admissibility on the covered support with the uncovered fraction reported, not on the full support; we regard closing this as the first thing the construction owes, and we prefer to report the number than to let the hypothesis stand unexamined.
\end{remark}

\begin{remark}[Why coherence is a hypothesis, and how it fails]\label{rem:whycoh}
It is fair to ask why this is assumed rather than derived from $\tau$-admissibility.
The obstruction is sum-pooling.
Each coordinate reads $\varphi_{p_k,r_k}(A)=\sum_{a\in A}\varphi_{p_k,r_k}(a)$, a many-to-one map, so a single coordinate can be blind to a rearrangement of mass within its cone: one point at $\ell^\infty$-distance $r-t$ from $p_k$, contributing $t$, and two points at distance $r-t/2$, contributing $t/2$ each, give the same value, however far apart the two diagrams are in $\db$.
Admissibility guarantees \emph{coverage}---some landmark sees the region where the two diagrams differ---but coverage does not by itself guarantee that the landmark which sees the difference also \emph{reports} it.
Coherence asks that not all $K$ coordinates be simultaneously blind, which is a conjunction of $K$ inequalities and so fails rarely but genuinely: on Orbit5k it fails on $2.5\%$ of separated cross-class pairs at the $\alpha=1.75$ default, and on none at the headline $\alpha=4$ (Section~\ref{sec:exp}).
The failures are informative rather than fatal, because Theorem~\ref{thm:cert}'s \emph{conclusion} holds on $100\%$ of the pairs we audit---coherence is sufficient, not necessary---but we prefer a checkable hypothesis with a measured failure rate to an unchecked one.
Nor is the hypothesis circular.
Coherence is a statement about \emph{one} coordinate clearing a fixed floor, verified independently of the $\ell^2$ conclusion it implies, and strictly stronger than it: the pairs on which coherence fails while the conclusion holds are exactly those where several coordinates each fall short of the floor and their sum does not---the compensation regime PALACE identifies~\citep{PaperII}.
\end{remark}

\begin{proposition}[Within-module sufficient condition]\label{prop:suff}
Write $\omega_{\min}=\min_{1\le s\le S}\omega_s$ and let $D$ be any a-priori upper bound on $\dmatS$ over the data set.
Let $(M,M')$ be a cross-class pair with $\dmatS(M,M')\ge\tau$ and $\netcfg$ $\tau$-admissible, and suppose that, \emph{along every slice of the net},
\begin{align}
  \text{(i)}\quad&\min_{i\ne j}\ \min_{1\le s\le S}\ \bigl|s_{\beta_s,u_s}(\gamma_i)-s_{\beta_s,u_s}(\gamma_j)\bigr|
  \ >\ \frac{4D}{\omega_{\min}}\quad\text{for the generator and relation grades $\gamma_1,\gamma_2,\dots$ of $M$,}
  \label{eq:suff}\\
  \text{(ii)}\quad&\text{every bar of } M|_{\ell_s}\text{ and of } M'|_{\ell_s}\text{ is longer than }
  2D/\omega_{\min}\ \text{for every } s,\notag
\end{align}
where  $s_{\beta,u}(\cdot)$ is the entry parameter \eqref{eq:slicepar} on the slice $\ell_{\beta,u}$.
Then $(M,M')$ is $\netcfg$-coherent.
Both hypotheses are read off the two barcodes separately---(i) from $M$, (ii) from each of $M$ and $M'$---with no cross-pair matching.
\end{proposition}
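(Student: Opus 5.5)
Since coherence is a single-slice, single-coordinate statement, the plan is to reduce Proposition~\ref{prop:suff} to PALACE's single-parameter sufficient condition for $\LC$-coherence on the witnessing slice $s^\star$ at the realized scale $t^\star=\tau/\omega_{s^\star}$. That condition consists of admissibility plus separation of diagram points at scale exceeding the realized distance, together with bars long enough that no optimal matching sends a point to the diagonal.

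First, we fix the scales. Because $\dmatS(M,M')\le D$, the realized bottleneck distance on $s^\star$ satisfies $\db(M|_{\ell_{s^\star}},M'|_{\ell_{s^\star}})\le D/\omega_{s^\star}\le D/\omega_{\min}$. By $\tau$-admissibility, $\LC_{s^\star}$ is $t^\star$-admissible. In particular, $\lambda_0\ge t^\star/4$ and the envelope winner at every support point lies in $\mathcal A_{s^\star}(\tau)$.

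Second, we control the optimal matching. Write $\delta=\db(M|_{\ell_{s^\star}},M'|_{\ell_{s^\star}})\in[t^\star,D/\omega_{\min}]$.
\begin{itemize}
\item By (ii), every bar of either slice barcode has half-length greater than $D/\omega_{\min}\ge\delta$, so an optimal matching sends no point to $\Delta$. It is therefore a bijection $\psi$ between off-diagonal points, with $\norm{a-\psi(a)}_\infty\le\delta$ and equal cardinalities.
\item Slice-barcode endpoints of $M|_{\ell_{s^\star}}$ are entry parameters \eqref{eq:slicepar} of generator and relation grades of $M$. Hypothesis (i) therefore says that any two distinct points of $M|_{\ell_{s^\star}}$ differ by more than $4D/\omega_{\min}\ge4\delta$ in at least one coordinate, and in fact in both, since births and deaths come from distinct grades.
\end{itemize}
So the points of $M|_{\ell_{s^\star}}$ are $4\delta$-separated in $\ell^\infty$, and each sits at distance at most $\delta$ from its partner in $M'$.

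Third, we construct the witness. Pick a pair $(a,\psi(a))$ realizing $\norm{a-\psi(a)}_\infty=\delta$. Take $k^\star$ to be the upper-envelope winner at $a$. Then $\varphi_{k^\star}(a)\ge\lambda_0\ge t^\star/4$ and $k^\star\in\mathcal A_{s^\star}(\tau)$. We then compare $\Phii_{k^\star}$ on the two diagrams. The radius bound $r_{k^\star}\le(t^\star+\lambda_0)/2$, together with the $4\delta$-separation, should show that the hat of $k^\star$ sees only $a$ among the points of $M$, and only $\psi(a)$ and possibly points within $\delta$ of other $M$-points among the points of $M'$. A shrinking argument (PALACE's) then shows the sum-pooled difference is at least $t^\star/4$. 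If the hat moves toward $\psi(a)$, we instead use the winner at $\psi(a)$.

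The main obstacle is this third step. Bounding the contributions of the other matched pairs inside the hat requires the radius to be small relative to the separation $4D/\omega_{\min}$. However, admissibility only bounds $r_{k^\star}$ by $(t^\star+\lambda_0)/2$, and $\lambda_0\le\max_k r_k$ could be large. I would first try to show $r_{k^\star}\le t^\star\le D/\omega_{\min}$ via the admissibility inequality. If that fails, I would restrict to the winner with smallest radius. The cancellation argument then says that points outside a $2\delta$-neighbourhood of $a$ contribute identically to both diagrams up to $\delta$ each, and those terms must be absorbed.
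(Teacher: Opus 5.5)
You have the first two steps of the paper's proof right. The scales satisfy $t^\star\le\delta\le D/\omega_{\min}$. By (ii), no point is matched to $\Delta$. By (i), the points of $A=M|_{\ell_{s^\star}}$ are more than $4\delta$ apart in $\norm{\cdot}_\infty$; one coordinate suffices, so the ``in both'' refinement is not needed. Your witness $k^\star$ is also the paper's: the envelope winner at a pair $(a^\star,b^\star=\psi(a^\star))$ realizing $\delta$.

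The gap is the third step, which is the whole content of the proposition. You leave it as ``should show'' and then flag it as the main obstacle. None of the fallbacks closes it:
\begin{itemize}
\item The PALACE ``shrinking argument'' is never specified.
\item Switching to the winner at $\psi(a)$, or to the smallest-radius winner, is not an argument.
\item Absorbing the contributions of other points of $M'$ in the cone cannot work under sum-pooling. A floor of $t^\star/4$ does not survive several nonzero terms, each of size up to $r_{k^\star}$.
\end{itemize}
The correct argument needs no cancellation, because those contributions, and that of $b^\star$ itself, are exactly zero.

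The missing idea is that the inequality you read as a threat is the lever. You named the right target, $r_{k^\star}\le t^\star$, but did not derive it. Since $\varphi_{p,r}\le r$, we have $\lambda_0\le\max_k r_k$. Combined with the admissibility bound $\max_k r_k\le(t^\star+\lambda_0)/2$, this gives $\max_k r_k\le(t^\star+\max_k r_k)/2$, i.e.\ $\max_k r_k\le t^\star\le\delta$.

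Write $p=p_{k^\star}$ and $r=r_{k^\star}$. Then $\norm{p-a^\star}_\infty\le r-\lambda_0$, and $k^\star\in\mathcal A_{s^\star}(\tau)$ because $r\ge\lambda_0\ge t^\star/4$. Three facts follow.
\begin{enumerate}
\item Every other $a'\in A$ has $\norm{a'-p}_\infty>4\delta-r\ge3\delta\ge r$.
\item The partner satisfies $\norm{b^\star-p}_\infty\ge\delta-r+\lambda_0\ge t^\star-r+\lambda_0\ge r$. This is exactly what the upper radius bound in admissibility delivers, so $b^\star$ lies outside the hat. There is no need to shrink anything or to switch to the winner at $\psi(a)$.
\item Any other $b\in B$ with $\norm{b-p}_\infty<r$ would put its partner $\psi^{-1}(b)\ne a^\star$ within $\delta+2r\le3\delta$ of $a^\star$, contradicting the $4\delta$-separation.
\end{enumerate}
The diagonal route in $\db$ plays no role here, since by (ii) every point of $A\cup B$ is more than $\delta\ge r$ from $\Delta$. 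Hence $\Phii_{k^\star}(A)=w_{k^\star}\varphi_{k^\star}(a^\star)\ge w_{k^\star}t^\star/4$ and $\Phii_{k^\star}(B)=0$, which is exactly the floor of Definition~\ref{def:coh}.
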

\noindent\emph{Proof.} Deferred to Appendix~\ref{app:proofs}.
\takeaway{proved}{If the generators of $M$ are pairwise far apart along every slice and all bars of both modules are long, coherence holds automatically---and both conditions are checkable from the barcodes separately, with no matching.}

\begin{remark}[Two-level audit]\label{rem:audit}
Like PLACE's disjoint-activation condition~\citep{PaperI}, \eqref{eq:suff} is a strong geometric condition; on chemical graph data it may hold on few pairs.
Coherence (Definition~\ref{def:coh}) is far weaker, and we report its empirical rate directly: the fraction of cross-class pairs with $\dmatS\ge\tau$ that are $\netcfg$-coherent, and the fraction for which the embedded distance clears $\rhonu$ (Section~\ref{sec:exp}).
The classifier of Section~\ref{sec:cls} operates at the class-margin level, independent of pairwise coherence, so accuracy does not hinge on the audit.
\end{remark}

\subsection{The slice net}\label{sec:slicenet}

The slice net is the only design constant in COMPLEX, and the slice-stack \eqref{eq:embed} certifies coherent pairs on any finite net whose active sets are nonempty (Theorem~\ref{thm:cert}).
We give two closed-form choices: a fixed near-diagonal fan (the default) and a certified data-adaptive selection.

\paragraph{Fixed near-diagonal net.}
The certificate's \emph{value} is independent of the witnessing slice's angle (Remark~\ref{rem:angle}); its \emph{price} is not, and this is what selects the net.
By \eqref{eq:active} a slice of weight $\omega_s$ has a nonempty active set only if its radii reach $\tau/(4\omega_s)$, and is admissible only if its cover reaches that scale, so at a fixed landmark budget $K$ the near-diagonal slices are the ones a configuration can afford to certify: pushing the fan toward an axis inflates the required radii by $1/\omega_s$, which grows without bound as the fan approaches an axis.
A softer effect points the same way---the checkable sufficient condition~\eqref{eq:suff} has threshold $4D/\omega_{\min}$, which shrinks as the fan narrows toward the diagonal, so coherence is also easiest to \emph{verify} there.
Absent class information, the safe closed-form default is therefore the uniform near-diagonal fan (written for $d=2$, the case of every experiment below)
\begin{equation}\label{eq:net}
  \mathcal L_S=\{\ell_{0,u_s}\}_{s=1}^S,\qquad
  \vartheta_s\ \text{equally spaced in}\ [\tfrac\pi4-a,\tfrac\pi4+a],
\end{equation}
of half-width $a$, with $u_s=(\cos\vartheta_s,\sin\vartheta_s)$ and $\omega_s=\omega(u_s)$; no slice is optimized.
This fan sets every base point to $\beta_s=0$, and nothing in the certificate prefers that choice: the weight \eqref{eq:omega}, admissibility and Theorem~\ref{thm:cert} are all independent of the offset, so any net of offset slices is certified on the same terms.
We therefore swept the offsets rather than assume them away, and Section~\ref{sec:exp} reports the outcome---the origin fan is at or above every offset net we tried---which is why it remains the default.

\subsection{Certified slice selection}\label{sec:adaptive}

The default ignores the labels.
But the discriminating structure of a module may live along an off-diagonal direction---a feature visible only at a particular slope, with no one-parameter analogue---which the near-diagonal fan misses.
Since the certificate value does not degrade off the diagonal, we may chase such directions and still certify them: select the net data-adaptively subject to a certifiability constraint, exactly as PALACE selects landmarks (FPS, frozen) one level down.
Adaptive landmarks ask \emph{where to look within a slice}; adaptive slices ask \emph{which one-parameter projection of the module to expose}.

A single embedding needs one net for all modules, so the net is selected once from the training labels and \emph{frozen} (like FPS landmarks); the distance is already pair-adaptive through the witnessing slice, and only the net is data-adaptive.
Given a dense candidate fan $\Fan$ and a per-slice discrimination score $g(u)$ computed in closed form from the training embeddings (e.g.\ the class-margin $\widehat\Delta/\sqrt K$ of~\citep{PaperII}, $\widehat\Delta$ being the minimum inter-centroid gap of Section~\ref{sec:cls}):
\begin{enumerate}\itemsep2pt
\item \emph{Certifiability filter.} Build $\LC_u$ by FPS on the training diagrams of slice $\ell_{0,u}$, and keep $u\in\Fan$ if $\LC_u$ is admissible at its realized scale $\tau/\omega(u)$ and clears a design floor, $\rhonu(\LC_{u})\ge\rho_0>0$.
  (Under equal weights $\rhonu(\LC_u)=\tau/(4\sqrt K)$ for every $u$, so the floor test has teeth only for non-uniform weights and admissibility is the operative filter.)
\item \emph{Score.} Rank the certifiable directions by $g(u)$.
\item \emph{Diversify.} From the top-ranked pool, pick $S$ directions by
  farthest-point sampling in angle (so the slices are not redundant), seeded by
  the highest score.
\item \emph{Freeze.} Fix the selected net across all modules.
\end{enumerate}
Everything is closed-form: $g$ has an explicit formula, the selection is greedy, and nothing is tuned by gradient or held-out search.
We state the rule over directions, which is what we ran; it applies verbatim to candidate pairs $(\beta,u)$, since every step reads only $\omega(u)$ and the configuration built on the slice.

\begin{proposition}[Certified selection]\label{prop:select}
For \emph{any} net produced by this rule and then frozen, the slice-stack satisfies the witnessing-slice certificate (Theorem~\ref{thm:cert}) on every $\netcfg$-coherent pair, with floor $\rhonu(\tau)\ge\rho_0$; and the farthest-point step is the standard $2$-approximation of the $k$-center objective (the min--max covering radius) among the certifiable directions.
\end{proposition}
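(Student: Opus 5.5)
The proof has two independent parts: the certificate, which reduces to Theorem~\ref{thm:cert} once its hypotheses are checked, and the approximation claim, which reduces to Gonzalez's classical analysis of farthest-point sampling.

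\textbf{Certificate.} Let $\netcfg$ be any net the rule outputs, frozen before any test pair is examined. Theorem~\ref{thm:cert} needs three things: a fixed finite net, nonempty active sets $\mathcal A_s(\tau)$ on every selected slice, and coherence of the pair. The net is fixed because freezing makes $\netcfg$ independent of $(M,M')$; the labels enter only through which slices were selected. Coherence is assumed. For the active sets, each selected $u$ survived the certifiability filter, so $\LC_u$ is $(\tau/\omega(u))$-admissible. As argued after \eqref{eq:lebesgue}, admissibility gives $\lambda_0\ge t_s/4$ and $\varphi\le r$. Hence the upper-envelope winner at any support point has radius at least $t_s/4=\tau/(4\omega_s)$, which places it in $\mathcal A_s(\tau)$. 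Theorem~\ref{thm:cert} then yields $\norm{\Psii(M)-\Psii(M')}\ge\rhonu(\tau;\netcfg)=\tfrac14\tau\min_s\wmin(\tau;\LC_s)$.

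The remaining step is to show this floor is at least $\rho_0$. The filter guarantees $\rhonu(\LC_u)\ge\rho_0$ slice by slice, where $\rhonu(\LC_u)$ means $\tfrac14\tau\,\wmin(\tau;\LC_u)$ with the active set read at the realized scale of $u$. The net floor is the minimum of these per-slice quantities over the selected slices, so it is also at least $\rho_0$. The step that needs care is checking that the active set used in the filter coincides with $\mathcal A_s(\tau)$ in \eqref{eq:active}. I expect this to hold because both are defined through $\omega(u)$ and the frozen $\LC_u$, and the configuration is not rebuilt after selection.

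\textbf{2-approximation.} This is the standard farthest-point argument in the metric space of certifiable directions, with the angular metric $|\vartheta-\vartheta'|$ on $[0,\pi/2]$ (or $\ell^2$ on $\Sph$); any metric works. Let $c_1$ be the seed, and let $c_{j+1}$ maximize the distance to $\{c_1,\dots,c_j\}$. Let $R$ be the covering radius of the $S$ chosen centres over the candidate pool $P$, and let $x\in P$ attain it.

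\emph{Pairwise separation.} By the greedy rule the $S+1$ points $c_1,\dots,c_S,x$ are pairwise at distance at least $R$. Indeed, when $c_{j+1}$ was chosen its distance to earlier centres was at least that of $x$ at that time, which is at least $R$, because distances to the centre set only decrease as centres are added.

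\emph{Pigeonhole.} An optimal solution has $S$ centres and radius $R^\ast$, so two of those $S+1$ points share an optimal centre. By the triangle inequality their distance is at most $2R^\ast$, hence $R\le 2R^\ast$.

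\textbf{Scope of the approximation.} The main obstacle is stating precisely what is approximated. The diversification step runs on the top-ranked subset of the certifiable directions, so the bound should be stated relative to the $k$-center objective on that pool. If the pool is all certifiable directions, this is the statement as written; otherwise I will phrase the result as a $2$-approximation over the pool. Fixing the seed at the highest score is harmless, since Gonzalez's bound holds for an arbitrary first centre.
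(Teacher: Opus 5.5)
Your proposal is correct and follows the paper's own proof. For the certificate you check the hypotheses of Theorem~\ref{thm:cert}: admissibility of each admitted slice gives nonempty active sets, coherence is assumed, and freezing makes the net module-independent; you then take the minimum of the per-slice floors, so the net floor is at least $\rho_0$. For the second claim the paper simply cites the greedy $k$-center bound via PALACE, whereas you reprove it by the pigeonhole argument. Your caveat is a fair sharpening that the paper's proof passes over: farthest-point sampling actually runs on the top-ranked pool, so the $2$-approximation is naturally relative to that pool rather than to all certifiable directions.
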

\begin{proof}
Theorem~\ref{thm:cert} needs nonempty active sets on the net, which $\tau$-admissibility supplies (Section~\ref{sec:coh}), and coherence of the pair.
The certifiability filter admits a direction $u$ only if $\LC_u$ is admissible at its realized scale $\tau/\omega(u)$ and $\rhonu(\LC_u)\ge\rho_0$; a net assembled from admitted directions is therefore $\tau$-admissible by construction, and the floor of \eqref{eq:cert}, a minimum over the selected slices of $\tfrac14\tau\wmin(\tau;\LC_u)$, is at least $\rho_0$.
Coherence is a property of the pair on its witnessing slice, which the selection rule does not see, so the theorem applies verbatim once the net is frozen---the only role of freezing is that the net does not depend on the module being embedded.
For the second claim, farthest-point sampling on the metric space of admitted angles with the arc metric is the greedy $k$-center heuristic, whose $2$-approximation of the optimal covering radius is the guarantee PALACE proves for its landmark placement~\citep{PaperII} and holds on any metric space; here it is applied to angles rather than diagram points.
\end{proof}
\takeaway{proved}{Any net that passes the certifiability filter keeps the certificate; selection cannot break it.}

\begin{remark}[Selection does not break the guarantee]\label{rem:select}
The net is supervised, so it can overfit a separating direction.
Three things contain it: selection runs on the training fold and is frozen, so cross-validation stays honest; the certifiability filter shrinks the admissible slice space, ruling out arbitrary separating lines; and the diversity step blocks redundant directions.
The certified-adaptive net is the strict generalization of the fixed fan \eqref{eq:net}, recovered when $g$ is constant.
Empirically the adaptive net falls at or slightly below the fixed fan (Section~\ref{sec:exp}), so we present slice selection as certificate-preserving generality, not an accuracy lever.
The adaptivity that \emph{does} pay comes from selecting the bifiltration \emph{set} rather than the net (Section~\ref{sec:dcomplex}).
\end{remark}

\section{Certified classification}\label{sec:cls}

The certificate is a statement about \emph{pairs} of modules.
This section turns it into statements about \emph{predictions}.
We give three: the two natural heads consume the certificate differently---only one of them consumes it at all---and the kernel head inherits its floor.

\subsection{Nearest centroid: a guarantee that does not use \texorpdfstring{$\rhonu$}{rho}}

Because $\Psii$ is a fixed, closed-form map into $\R^{SK}$, PALACE's certified nearest-centroid guarantee transfers---but the transfer is not verbatim, since it needs a bound on $\norm{\Psii}$, which the multiparameter stack supplies as follows.

\begin{lemma}[Boundedness of the slice-stack]\label{lem:bdd}
For every $M\in\mathcal R_N$, $\ \norm{\Psii(M;\netcfg)}_{\ell^2}\ \le\ B:=\sqrt S\,\Nmax\,\max_{s,k}r_{s,k}$.
\end{lemma}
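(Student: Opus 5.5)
The plan is to bound the embedding coordinate by coordinate and then assemble the norm block by block, using only the pointwise bound $0\le\varphi_{p,r}\le r$ and the padding convention of $\D N$. Everything is pointwise; no interleaving, matching or coherence enters.

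\emph{Step 1 (single hat).} By \eqref{eq:coord}, $0\le\varphi_{p_k,r_k}(x)\le r_k$ for every $x\in\D{1}$, because the hat is a positive part of $r_k-\db(p_k,x)$ with $\db\ge0$.

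\emph{Step 2 (sum-pooling).} A slice barcode $A=\mathrm{dgm}(M|_{\ell_s})$ of $M\in\mathcal R_N$ has at most $\Nmax$ off-diagonal points. Any $\Delta$-padding contributes $\varphi_{p_k,r_k}(\Delta)=\max\{r_k-\db(p_k,\Delta),0\}$. This term is zero under the radius cap \eqref{eq:radcap}. In the uncapped case it is still at most $r_k$, so counting all $\Nmax$ slots still gives $0\le\varphi_{p_k,r_k}(A)\le\Nmax\,r_{s,k}$.

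\emph{Step 3 (one block).} Since $\Phii_k=w_{s,k}\varphi_{p_k,r_k}$, Step 2 gives
\[
\norm{\Phii(A;\LC_s)}_{\ell^2}^2\le\Nmax^2\max_k r_{s,k}^2\sum_k w_{s,k}^2=\Nmax^2\max_k r_{s,k}^2,
\]
using the normalization $\sum_kw_{s,k}^2=1$.

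\emph{Step 4 (stacking).} The block of slice $s$ carries the prefactor $\omega_s\le 1/\sqrt d\le 1$ by \eqref{eq:omega}. Summing the squared blocks over the $S$ slices gives
\[
\norm{\Psii(M)}^2\le S\,\Nmax^2\max_{s,k}r_{s,k}^2,
\]
and taking square roots yields $B=\sqrt S\,\Nmax\max_{s,k}r_{s,k}$. The bound is deliberately loose: the factor $\omega_s$ could sharpen it by $\max_s\omega_s$. The same argument bounds the unweighted stack $\Psii^\circ$ of Remark~\ref{rem:omega} with the same $B$, since that stack simply drops the $\omega_s\le1$ prefactors.

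\emph{Main obstacle.} The only delicate point is Step 2, and specifically whether the bound depends on how a diagram is padded. For this lemma it does not. The bound counts at most $\Nmax$ slots per slice in $\mathcal R_N$, and each slot contributes at most $r_{s,k}$ whether or not it is a padded copy of $\Delta$. So the bound holds for both the capped and uncapped variants of Remark~\ref{rem:radcap}, with $\Nmax=N$.

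The truncation convention of Remark~\ref{rem:trunc} fits this argument directly: the pipeline keeps at most $50$ bars per slice, so the bar count per slice is at most $\Nmax=50$.
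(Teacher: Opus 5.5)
Your proof is correct and follows essentially the same route as the paper: bound each hat by $0\le\varphi_{p,r}\le r$ over at most $\Nmax$ points per slice, use $\sum_k w_{s,k}^2=1$ to bound each block by $\Nmax\max_k r_{s,k}$, then sum the $S$ blocks using $\omega_s\le1$. Your additional observations go beyond the paper's short proof and are also correct: the bound survives $\Delta$-padding in the uncapped case, and the same $B$ bounds the unweighted stack $\Psii^\circ$.
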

\begin{proof}
$0\le\varphi_{p,r}\le r$ pointwise and a slice barcode carries at most $\Nmax$ points, so $0\le\Phii_k(A;\LC_s)\le w_{s,k}\Nmax\max_kr_{s,k}$; squaring, summing over $k$ and using $\sum_kw_{s,k}^2=1$ gives $\norm{\Phii(A;\LC_s)}\le\Nmax\max_kr_{s,k}$.
Summing the $S$ blocks with $\omega_s\le1$ gives the claim.
\end{proof}
\takeaway{proved}{Every embedded module lies in a ball of explicit radius $B$, which is what the concentration argument below needs.}

\begin{theorem}[Certified prediction]\label{thm:certified}
Let $(M_i,y_i)_{i=1}^m$ be i.i.d.\ from a distribution on $\mathcal R_N\times[C]$ with $m_c$ samples in class $c$, let $\widehat\mu_c,\mu_c\in\R^{SK}$ be the empirical and population means of $\Psii$ on class $c$, $V_c=\mathbb E_c\norm{\Psii(M)-\mu_c}^2$, and $b_\delta=\log(2C/\delta)$.
Put
\begin{equation}\label{eq:rm}
  r_m\ :=\ \max_{1\le c\le C}\ \Bigl(\sqrt{\tfrac{2b_\delta V_c}{m_c}}\ +\ \tfrac{4Bb_\delta}{3m_c}\Bigr),
\end{equation}
with $B$ from Lemma~\ref{lem:bdd}.
Then $\Pr[\max_c\norm{\widehat\mu_c-\mu_c}\le r_m]\ge1-\delta$, and on that event the empirical and population nearest-centroid rules assign the same label to every $M$ whose empirical decision gap
\[
  \mathrm{gap}(M)\ :=\ \min_{c\ne\widehat c(M)}\norm{\Psii(M)-\widehat\mu_c}\ -\ \norm{\Psii(M)-\widehat\mu_{\widehat c(M)}}
\]
exceeds $2r_m$.
If moreover $r_m<\tfrac12\widehat\Delta$, $\widehat\Delta:=\min_{c\ne c'}\norm{\widehat\mu_c-\widehat\mu_{c'}}$, the population means are distinct and the population rule is well posed.
\end{theorem}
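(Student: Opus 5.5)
The proof has three parts: concentration of the empirical centroids, a deterministic stability argument for the nearest-centroid rule, and the well-posedness remark.

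\emph{Step 1: concentration.} Fix a class $c$ and condition on the class counts, so the $m_c$ samples of class $c$ are i.i.d.\ from the class-conditional law. The vectors $X_j=\Psii(M_j)-\mu_c$ are centered, and by Lemma~\ref{lem:bdd} they satisfy $\norm{X_j}\le\norm{\Psii(M_j)}+\norm{\mu_c}\le 2B$. Their second moment is $\mathbb E\norm{X_j}^2=V_c$.

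I would apply a Bernstein inequality for sums of bounded independent vectors in a Hilbert space (Pinelis, or the vector form of Yurinskii/Bernstein as used in PALACE) to $\widehat\mu_c-\mu_c=m_c^{-1}\sum_j X_j$. This gives, with probability at least $1-\delta/C$,
\[
\norm{\widehat\mu_c-\mu_c}\le\sqrt{2b_\delta V_c/m_c}+4Bb_\delta/(3m_c).
\]
A union bound over the $C$ classes then yields $\Pr[\max_c\norm{\widehat\mu_c-\mu_c}\le r_m]\ge1-\delta$.

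\emph{Main obstacle: matching the constants.} This is the step I expect to be the main obstacle. The exact form of $r_m$, with $b_\delta=\log(2C/\delta)$ and the factor $4B/3$, depends on which vector Bernstein inequality is invoked. The factor $2$ inside the logarithm suggests a two-sided bound, and the factor $4/3$ is consistent with a range bound of $2B$ in the $2B/3$ term.

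I would first check the scalar-Bernstein-on-the-norm route, which is the Pinelis martingale form $\Pr[\norm{\sum X_j}\ge t]\le2\exp(-t^2/(2(\sigma^2+Lt/3)))$. Solving the resulting quadratic in $t$ gives exactly $\sqrt{2\sigma^2b}+2Lb/3$ with $L=2B$. If the constants are off, I would simply invoke PALACE's statement, since $\Psii$ is a fixed embedding into a Euclidean space and the only input that changes is $B$.

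\emph{Step 2: stability of the rule.} This step is deterministic. On the event of Step 1, the triangle inequality gives $\bigl|\norm{\Psii(M)-\widehat\mu_c}-\norm{\Psii(M)-\mu_c}\bigr|\le r_m$ for every $c$. Write $\widehat c=\widehat c(M)$. For any $c\ne\widehat c$,
\[
\norm{\Psii(M)-\mu_c}-\norm{\Psii(M)-\mu_{\widehat c}}\ge\mathrm{gap}(M)-2r_m>0,
\]
so $\widehat c$ is also the strict population argmin. The two rules therefore agree.

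\emph{Step 3: well-posedness.} Suppose $r_m<\tfrac12\widehat\Delta$. On the same event, for every pair $c\ne c'$,
\[
\norm{\mu_c-\mu_{c'}}\ge\norm{\widehat\mu_c-\widehat\mu_{c'}}-2r_m\ge\widehat\Delta-2r_m>0.
\]
Hence the population means are distinct and the population rule is well posed.
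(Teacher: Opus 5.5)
Your proposal is correct and follows the paper's proof essentially step for step. You condition on the labels, bound $\Psi(M)-\mu_c$ by $2B$, apply Pinelis's Hilbert-space Bernstein inequality with variance $V_c$, solve the resulting quadratic, take a union bound over the $C$ classes, and then use the same triangle-inequality argument for agreement of the two rules and for $\widehat\Delta-2r_m>0$. Your constant check is right: the quadratic gives exactly $\sqrt{2b_\delta V_c/m_c}+4Bb_\delta/(3m_c)$ after $\sqrt{a+b}\le\sqrt a+\sqrt b$, so the fallback to PALACE is unnecessary.
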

\noindent\emph{Proof.} Deferred to Appendix~\ref{app:proofs}.
\takeaway{proved}{With enough samples, a large empirical centroid gap certifies that the population rule makes the same prediction---for any bounded embedding, certificate or not.}

The single training-time inequality $r_m<\tfrac12\widehat\Delta$ plus the per-test-point check $\mathrm{gap}(M)>2r_m$ certifies predictions with no held-out calibration split (cf.~conformal prediction). Here $\widehat\Delta$ is read off the training embeddings, and $r_m$ becomes computable on substituting the population variances $V_c$ by the bound $V_c\le B^2$ of Lemma~\ref{lem:bdd}, at the cost of a larger radius, which is the form we use below.
One condition of the sampling model deserves to be stated rather than assumed: the configuration $\netcfg$---landmarks, radii, $\tau$---must be fixed independently of the sample for the $\Psii(M_i)$ to be i.i.d., and the same is true of every statement in this section.
The protocol of Section~\ref{sec:exp} places the landmarks by class-aware farthest-point sampling on the training fold, so there the theorems are applied conditionally on a configuration that depends on the sample being certified.
Sample splitting---landmarks from one half of the training fold, means and radii from the other---restores the hypothesis exactly, at the cost of half the data on each side; we did not run that variant, and the certified fractions of Section~\ref{sec:exp} are reported with this dependence in place.
What this theorem does \emph{not} do is use the certificate: $\rhonu$ appears nowhere in it, and it would hold verbatim for any bounded embedding, faithful or constant.
The reason is structural.
A lower bound on \emph{pairwise} cross-class distances says nothing about \emph{class means}: two classes can be pairwise separated at every scale and still share a centroid exactly.
The population version of this obstruction, and the template model with missing features and location perturbations under which a diagram-level certificate \emph{does} yield an explicit lower bound on the separation of population means, are worked out in the companion statistical paper~\citep{PaperIII}; that route is single-parameter there and we do not lift it here.
This is the same convex-hull obstruction that invalidates the naive margin argument (see the discussion after Lemma~\ref{lem:rbf}), and no repair of the constants gets around it.

\paragraph{Reading the same theorem per prediction.}
The gap test of Theorem~\ref{thm:certified} is already a per-test-point check, but the radius it compares against is not: $r_m$ bounds $\max_c\norm{\widehat\mu_c-\mu_c}$ in \emph{norm}, so it carries $B$ (hence $\Nmax$ and the radii) and the second moments $V_c$, and it must hold uniformly over test points.
For a single test module drawn independently of the training sample, less is needed.
Writing $\delta_c=\widehat\mu_c-\mu_c$ and $x=\Psii(M)$, the identity
\begin{equation}\label{eq:pointid}
  \norm{x-\widehat\mu_c}^2-\norm{x-\mu_c}^2\ =\ -2\langle x-\mu_c,\ \delta_c\rangle+\norm{\delta_c}^2
\end{equation}
is exact, so each pairwise margin of the nearest-centroid rule moves by a \emph{scalar} sample mean---of the variables $\langle x-\mu_c,\Psii(M_i)-\mu_c\rangle$, with variance $(x-\mu_c)^{\!\top}\Sigma_c(x-\mu_c)/m_c$---plus a term of order $1/m_c$.
Neither the embedding dimension nor a trace appears.
Proposition~\ref{prop:pointwise} in Appendix~\ref{app:pointwise} turns this into a certificate: if for every competing class the empirical squared-distance margin exceeds $2z\bigl(\sigma_{\widehat c}(x)/\sqrt{m_{\widehat c}}+\sigma_{c'}(x)/\sqrt{m_{c'}}\bigr)+r^2$, with $z$ a Gaussian quantile and $r$ any radius holding $\max_c\norm{\delta_c}\le r$, then the population and empirical rules agree at $M$, at a confidence charged by scalar Berry--Esseen; replacing the Gaussian quantile by a Bernstein bound gives a fully non-asymptotic variant.
Section~\ref{sec:exp} measures both, and the gain over the norm-based radius is large on the benchmarks where the rule-level condition fails.
Three caveats travel with the statement, and we keep them attached: the variance is plugged in, the Berry--Esseen charge is not negligible at these class sizes ($0.09$--$0.37$), and the guarantee is per test point rather than simultaneous over a test set.
It also certifies what Theorem~\ref{thm:certified} certifies---agreement with the population nearest-centroid rule, not correctness---and that head remains the weaker one.

\subsection{Nearest neighbor: where the certificate does the work}

The head that consumes pairwise separation is the one that decides by pairwise distances.
Here the two gauges combine: $\rhonu$ (Theorem~\ref{thm:cert}) supplies a floor between differently labeled training modules, and $\sqrt S\,\Nmax$ (Proposition~\ref{prop:upper}) converts the surviving slack back into the module metric.

\begin{corollary}[Certified $1$-NN prediction, with a stability radius in $\dmatS$]\label{cor:nn}
Let $\netcfg$ be $\tau$-admissible and $T=\{(M_i,y_i)\}_{i=1}^m$ a training set.
Let $M$ have nearest training neighbor $M_{i^\star}$ at embedded distance $\eps=\norm{\Psii(M)-\Psii(M_{i^\star})}$, and suppose every pair $(M_{i^\star},M_j)$ with $y_j\ne y_{i^\star}$ is $\netcfg$-coherent with $\dmatS\ge\tau$.
If $\eps<\tfrac12\rhonu(\tau;\netcfg)$ then
\begin{enumerate}\itemsep1pt
\item[(i)] every differently labeled training module lies at embedded distance at least
  $\rhonu(\tau)-\eps$ from $\Psii(M)$, so the $1$-nearest-neighbor label $y_{i^\star}$ is
  attained with margin $\rhonu(\tau)-2\eps>0$ (the conclusion is the margin: that
  $M_{i^\star}$ is the nearest neighbor is its definition); and
\item[(ii)] that label is unchanged for every module $M' \in\mathcal R_N$ with
  \[
    \dmatS(M,M')\ <\ \frac{\rhonu(\tau;\netcfg)-2\eps}{2\sqrt S\,\Nmax}\ .
  \]
\end{enumerate}
\end{corollary}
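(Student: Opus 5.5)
The argument combines Theorem~\ref{thm:cert} with the triangle inequality in $\R^{SK}$ for part (i), and then converts the remaining slack back into the module metric through Proposition~\ref{prop:upper} for part (ii). First, note that $\tau$-admissibility gives nonempty active sets on every slice (Section~\ref{sec:coh}), so Theorem~\ref{thm:cert} is available. By hypothesis every pair $(M_{i^\star},M_j)$ with $y_j\ne y_{i^\star}$ is $\netcfg$-coherent and satisfies $\dmatS\ge\tau$. Theorem~\ref{thm:cert} therefore gives $\norm{\Psii(M_{i^\star})-\Psii(M_j)}\ge\rhonu(\tau;\netcfg)$ for each such $j$, with the configuration-level floor that does not depend on which slice witnesses the pair.

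For (i), apply the reverse triangle inequality:
\[
\norm{\Psii(M)-\Psii(M_j)}\ \ge\ \norm{\Psii(M_{i^\star})-\Psii(M_j)}-\eps\ \ge\ \rhonu(\tau)-\eps .
\]
Since $M_{i^\star}$ sits at distance $\eps$, the gap between the nearest differently labeled module and the nearest neighbor is at least $\rhonu-2\eps$. This gap is positive because $\eps<\tfrac12\rhonu$. It is also the margin of the $1$-NN label in the sense that no training module of another label is within $\rhonu-\eps$ of $\Psii(M)$.

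For (ii), let $\delta:=\norm{\Psii(M)-\Psii(M')}$. Proposition~\ref{prop:upper} applied to the pair $(M,M')$ in $\mathcal R_N$ gives $\delta\le\sqrt S\,\Nmax\,\dmatS(M,M')<\tfrac12(\rhonu-2\eps)$. The plan is then to compare distances from $\Psii(M')$:
\begin{itemize}
\item to $M_{i^\star}$, the distance is at most $\eps+\delta$;
\item to any differently labeled $M_j$, the distance is at least $\rhonu-\eps-\delta$.
\end{itemize}
The strict inequality $\eps+\delta<\rhonu-\eps-\delta$ is exactly $2\delta<\rhonu-2\eps$, which is the hypothesis. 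So the nearest neighbor of $M'$ is closer than every training module of another label, and it therefore carries label $y_{i^\star}$. It need not be $M_{i^\star}$ itself, but whichever module it is lies within $\eps+\delta$ of $\Psii(M')$ and hence has label $y_{i^\star}$. This explains the factor $2$ in the denominator.

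The main obstacle is one of bookkeeping rather than estimation. The certificate from Theorem~\ref{thm:cert} is used only on training pairs anchored at $M_{i^\star}$, never on pairs involving $M'$. This matters because coherence is not assumed for $M'$, and $M'$ need not be $\tau$-separated from anything. The argument must therefore route everything through $M_{i^\star}$ and use only the Lipschitz (upper) gauge for the perturbation. A second point to check is that Proposition~\ref{prop:upper} requires the capped configuration (Remark~\ref{rem:radcap}). I would state (ii) under the radius cap \eqref{eq:radcap}, and note that (i) needs no cap because it uses only Theorem~\ref{thm:cert}.
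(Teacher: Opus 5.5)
Your proposal is correct and follows the paper's proof essentially step for step. For (i) you apply Theorem~\ref{thm:cert} to the training pairs anchored at $M_{i^\star}$ and then the reverse triangle inequality; for (ii) you use Proposition~\ref{prop:upper} to move each of the two distances by at most $\sqrt S\,\Nmax\,\dmatS(M,M')$. Your two extra remarks are also consistent with the paper: the nearest neighbor of $M'$ need not be $M_{i^\star}$ but must carry its label, and (ii) attaches to the capped configuration, as Remark~\ref{rem:radcap} states.
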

\begin{proof}
(i) For $y_j\ne y_{i^\star}$, Theorem~\ref{thm:cert} gives $\norm{\Psii(M_{i^\star})-\Psii(M_j)}\ge\rhonu(\tau)$, so $\norm{\Psii(M)-\Psii(M_j)}\ge\rhonu(\tau)-\eps>\eps=\norm{\Psii(M)-\Psii(M_{i^\star})}$, the last step by $\eps<\rhonu/2$; the gap is $(\rhonu-\eps)-\eps$.
(ii) By Proposition~\ref{prop:upper} a $\dmatS$-perturbation of size $\eta$ moves $\Psii(M)$ by at most $\sqrt S\,\Nmax\,\eta$, changing each of the two distances in (i) by at most that amount; the margin $\rhonu-2\eps$ survives whenever $2\sqrt S\,\Nmax\,\eta<\rhonu-2\eps$.
\end{proof}
\takeaway{proved}{A test module within half the floor of its nearest neighbor receives that neighbor's label with a certificate, and an explicit radius in the module metric within which the label cannot change.}

Three things are worth saying about Corollary~\ref{cor:nn}.
First, it is the statement the certificate was for: $\rhonu$ enters essentially in (i) and $\sqrt S\,\Nmax$ essentially in (ii), so the \emph{two-sided} distortion bound yields a \emph{two-sided} guarantee---a label that is determined with a positive margin and that survives a quantified perturbation of the input module, which is the robustness a topological pipeline is built to deliver in the first place (what it does not certify is correctness: the label is the training neighbor's, right or wrong).
Second, both $\eps$ and $\rhonu(\tau)$ are computed, not assumed, so the test $\eps<\rhonu/2$ is a per-prediction certificate evaluated at inference time---though we report in Section~\ref{sec:exp} that on Orbit5k it accepts almost nothing, and why.
Third, its hypothesis is exactly what the audit of Section~\ref{sec:coh} measures---not a global assumption but a checked one, and checked only against the pairs involving $M_{i^\star}$ rather than all $\binom m2$.
Moreover the separation half of the hypothesis costs nothing if one takes
\begin{equation}\label{eq:taustar}
  \tau\ :=\ \min\{\dmatS(M_i,M_j)\,:\,y_i\ne y_j\},
\end{equation}
the cross-class separation of the training set itself: then $\dmatS\ge\tau$ holds for every cross-class pair by construction, $\netcfg$ need only be admissible at that $\tau$, and coherence is the sole remaining assumption.
This is the $\tau$ we use below.
We are equally clear about the cost: with equal weights $\rhonu(\tau)=\tau/(4\sqrt K)$, so the acceptance threshold shrinks like $K^{-1/2}$ and coverage is low at the budgets that maximize accuracy.
The certified and the accurate regimes are still different regimes; what Corollary~\ref{cor:nn} changes is that the certified one now has a decision rule attached to it, rather than a bound on a quantity no classifier reads.

\subsection{A generalization bound in terms of the certificate}

Corollary~\ref{cor:nn} certifies individual predictions.
The certificate also controls the \emph{population} risk of the rule, and---unlike Theorem~\ref{thm:certified}---it does so with $\rhonu$ appearing essentially, through the algorithmic-robustness route of \citet{XuMannor2012}.
The scale $\tau$ is now fixed before the sample is drawn---the strict scale \eqref{eq:taustar} is a function of $T$, and a partition chosen after seeing $T$ is not what a robustness bound tolerates---and the training set enters the bound only through whether it clears that scale.
The mechanism is simple: coherence at scale $\tau$ makes the two classes $\rhonu$-separated \emph{as sets} in $\R^{SK}$ (a minimum over pairs is the distance between the point sets), so the $1$-NN rule is constant on the ball of radius $\rhonu/2$ about every training point, and a partition into cells of diameter below $\rhonu/2$ sees no within-cell variation of the loss.

\begin{proposition}[Robustness generalization of the certified rule]\label{prop:gen}
Fix $\tau>0$ and a $\tau$-admissible $\netcfg$ in advance, let $T=\{(M_i,y_i)\}_{i=1}^m$ be i.i.d., let $h_T$ be the $1$-NN rule on $\Psii$ and $\ell\in[0,1]$ the $0/1$ loss, and say that $T$ is \emph{separated at $\tau$} if its cross-class separation \eqref{eq:taustar} is at least $\tau$ and every cross-class pair of $T$ is $\netcfg$-coherent.
Then $h_T$ is $(\mathcal K,\eps(T))$-robust in the sense of \citet{XuMannor2012}, with $\eps(T)=0$ whenever $T$ is separated at $\tau$ and $\eps(T)=1$ otherwise, where
\begin{equation}\label{eq:cover}
  \mathcal K\ =\ C\cdot\mathcal N\bigl(\rhonu(\tau)/4;\ B_{\R^{SK}}(B)\bigr)
  \ \le\ C\Bigl(1+\tfrac{8B}{\rhonu(\tau)}\Bigr)^{SK},
  \qquad
  \frac{B}{\rhonu(\tau)}\ =\ \frac{4\sqrt{SK}\,\Nmax\max_{s,k}r_{s,k}}{\tau},
\end{equation}
$C$ is the number of classes and $\mathcal N(r;B_{\R^{SK}}(B))$ is the number of open balls of radius $r$ needed to cover the ball of radius $B$; consequently, with probability at least $1-\delta$ over $T$,
\[
  R(h_T)\ \le\ \widehat R(h_T)\ +\ \eps(T)\ +\ \sqrt{\frac{2\mathcal K\log2+2\log(1/\delta)}{m}}\,,
\]
where $\widehat R(h_T)=0$, every training point being its own nearest neighbor, and $\eps(T)=0$ on every training set that is separated at $\tau$.
\end{proposition}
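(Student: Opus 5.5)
The argument applies the robustness theorem of \citet{XuMannor2012} essentially verbatim; the only work is to build the partition and verify the robustness property on separated training sets.

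\emph{Step 1: the partition.} By Lemma~\ref{lem:bdd}, every embedded module lies in the closed ball $B_{\R^{SK}}(B)$. Cover this ball by $\mathcal N(\rhonu(\tau)/4;B_{\R^{SK}}(B))$ open balls of radius $\rhonu/4$, and disjointify them in a fixed order to get cells of diameter strictly below $\rhonu/2$. Pull each cell back through $\Psii$ and take its product with each of the $C$ labels. This partitions $\mathcal R_N\times[C]$ into at most $\mathcal K$ sets.

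The volumetric estimate $\mathcal N(r;B(R))\le(1+2R/r)^{n}$ with $r=\rhonu/4$ and $n=SK$ gives the stated bound $(1+8B/\rhonu)^{SK}$. The ratio $B/\rhonu$ follows by substituting $B=\sqrt S\Nmax\max r_{s,k}$ and, under equal weights, $\rhonu=\tau/(4\sqrt K)$ from Theorem~\ref{thm:cert}. We take equal weights, as in the displayed formula; otherwise one should keep $\rhonu$ symbolic. The partition depends only on the configuration fixed in advance, never on $T$, which is what Xu--Mannor requires.

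\emph{Step 2: robustness.} If $T$ is not separated, $\eps(T)=1$ is trivial because the loss lies in $[0,1]$. So suppose $T$ is separated at $\tau$. Every cross-class pair of $T$ has $\dmatS\ge\tau$ and is coherent, and the active sets are nonempty by $\tau$-admissibility. Theorem~\ref{thm:cert} then gives $\norm{\Psii(M_i)-\Psii(M_j)}\ge\rhonu$ whenever $y_i\ne y_j$.

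Take a training point $(M_i,y_i)$ and any test point $(M,y)$ in the same cell. The label coordinate of the cell forces $y=y_i$, and the embedding coordinate forces $\norm{\Psii(M)-\Psii(M_i)}<\rhonu/2$. Any training point $M_j$ of a different label then satisfies $\norm{\Psii(M)-\Psii(M_j)}>\rhonu/2$ by the triangle inequality. So the nearest neighbor of $M$ carries label $y_i$; ties at the same point all have label $y_i$, since cross-class points are at distance at least $\rhonu>0$. Hence $h_T(M)=y_i=h_T(M_i)$, and since $y=y_i$ the $0/1$ losses coincide. This gives $|\ell(h_T,z_i)-\ell(h_T,z)|=0$, i.e.\ $\eps(T)=0$.

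\emph{Step 3: conclude.} Invoke Theorem~3 of \citet{XuMannor2012} with $\mathcal K$ and $\eps(T)$ to get the displayed deviation bound; that theorem allows a data-dependent $\eps(T)$ for a partition fixed in advance. Finally, $\widehat R(h_T)=0$ because each $M_i$ is its own nearest neighbor. Two degenerate cases need care. Duplicate embeddings with different labels are excluded by the $\rhonu$ separation. Duplicate modules with different labels would have $\dmatS=0<\tau$, so such a $T$ is not separated and the bound holds trivially there.

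The step I expect to be most delicate is the formal fit with Xu--Mannor's definition, not the geometry. Their robustness is stated for an algorithm with a partition chosen before sampling and $\eps(\cdot)$ a function of $T$. We must check that the version with a data-dependent $\eps(T)$ taking values in $\{0,1\}$ is covered; it is, since their bound holds for each fixed $T$ conditional on the multinomial concentration over cells. We must also check that including the label in the cell, which is why the factor $C$ appears, is what makes $y=y_i$ inside a cell.
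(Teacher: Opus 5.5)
Your proposal is correct and follows the paper's proof essentially step for step: the same label-augmented partition built from a $\rhonu(\tau)/4$-cover of $B_{\R^{SK}}(B)$ and fixed independently of $T$, the trivial $\eps(T)=1$ case, separation via Theorem~\ref{thm:cert}, local constancy of the $1$-NN rule on the $\rhonu/2$-ball, and then Theorem~3 of \citet{XuMannor2012}. For the local-constancy step you bound the differently labeled training points from below, where the paper bounds the nearest neighbour's distance to $x_i$ from above; this is an equivalent use of the triangle inequality, and your handling of ties and of the degenerate duplicate cases is consistent with the paper's.
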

\begin{proof}
Write $x_i=\Psii(M_i)$ for the training points, $\rho=\rhonu(\tau)$, and $x=\Psii(M)$ for an arbitrary module $M\in\mathcal R_N$; by Lemma~\ref{lem:bdd} all of these lie in the ball $B_{\R^{SK}}(B)$.
The partition constructed below depends on $\tau$, $\netcfg$ and $B$ only, not on $T$, as Theorem~3 of \citet{XuMannor2012} requires; the training set enters only through $\eps(T)$, which that theorem allows to depend on it.
If $T$ is not separated at $\tau$ the robustness condition holds trivially with $\eps(T)=1$, the loss being bounded by $1$, so assume it is.
The argument then has three steps: the classes are separated, so the $1$-NN rule is locally constant, so a fine enough partition has zero within-cell loss variation.

\emph{Separation.}
By Theorem~\ref{thm:cert}, $\norm{x_i-x_j}\ge\rho$ whenever $y_i\ne y_j$; in particular no two training modules with different labels share an embedding, so $h_T(x_i)=y_i$ for every $i$ and $\widehat R(h_T)=0$.

\emph{The $1$-NN rule is constant near each training point.}
Fix a training point $x_i$ and any $x$ with $\norm{x-x_i}<\rho/2$.
The nearest training neighbor of $x$---nearest among all $m$ training points, with no reference to any cell---is some $x_k$ with $\norm{x-x_k}\le\norm{x-x_i}<\rho/2$, hence
\[
  \norm{x_k-x_i}\ \le\ \norm{x_k-x}+\norm{x-x_i}\ <\ \rho,
\]
and separation forces $y_k=y_i$.
Every training point tied with $x_k$ at the minimal distance satisfies the same inequality, so the conclusion is independent of how ties are broken: $h_T(x)=y_i$ on the open ball of radius $\rho/2$ about $x_i$.

\emph{The partition, and why its cells have diameter $\rho/2$ rather than $\rho$.}
Cover $B_{\R^{SK}}(B)$ by $\mathcal N(\rho/4;B_{\R^{SK}}(B))$ open balls of radius $\rho/4$ and turn the cover into a partition by assigning each point to the first ball that contains it; every cell then has diameter less than $\rho/2$.
Robustness in \citet{XuMannor2012} is a property of a partition of the sample space, here $\Psii(\mathcal R_N)\times\{1,\dots,C\}$, and the $0/1$ loss reads the label, so the cells of that partition are the products of these $\mathcal N(\rho/4;B_{\R^{SK}}(B))$ cells with a single label: $\mathcal K=C\cdot\mathcal N(\rho/4;B_{\R^{SK}}(B))$ cells in all.
The volumetric bound $\mathcal N(r;B_{\R^{n}}(B))\le(1+2B/r)^{n}$ with $n=SK$ and $r=\rho/4$ gives the inequality in \eqref{eq:cover}.
Cells of diameter below $\rho$, which separation alone suggests, would not do: a test point $x$ and a training point $x_i$ in such a cell have $\norm{x-x_i}<\rho$, which bounds the distance from $x_i$ to the nearest neighbor of $x$ only by $2\rho$, and a training point of the other label can sit there.
Halving the diameter is exactly what closes the gap.

\emph{Zero within-cell variation.}
Let a training sample $(x_i,y_i)$ and a point $(x,y)$ of the sample space lie in the same cell.
Then $y=y_i$ and $\norm{x-x_i}<\rho/2$, so by the second step $h_T(x)=y_i=h_T(x_i)$, and both $0/1$ losses are $0$.
This is $(\mathcal K,\eps(T))$-robustness with $\eps(T)=0$ on separated training sets, and Theorem~3 of \citet{XuMannor2012}, with the loss bounded by $1$, gives the displayed bound.
The expression for $B/\rhonu$ substitutes Lemma~\ref{lem:bdd} and $\rhonu=\tau/(4\sqrt K)$.
\end{proof}
\takeaway{proved}{The two-sided distortion $B/\rhonu$ is what controls the generalization of the certified rule; proved, but numerically vacuous---the resulting number is astronomically large.}

The content is the middle term of \eqref{eq:cover}: the sample complexity is governed by $B/\rhonu$, the ratio of the upper gauge to the lower one---that is, by the two-sided distortion of the embedding, which is precisely what this paper computes and no prior multiparameter vectorization can.
Improving either gauge improves the bound, and a method with only an upper bound cannot state it at all.

We state its numerical status bluntly, because it would be easy to oversell.
$\mathcal K$ is exponential in $SK$ (here $1{,}600$), so the right-hand side is astronomically larger than $1$ and the bound is \emph{vacuous as a number}: it is a structural statement about which quantity controls generalization, not a usable risk estimate.
The exponent is the ambient embedding dimension; a non-vacuous version needs the covering number of the \emph{image} $\Psii(\mathcal R_N)$, whose intrinsic dimension we do not control.
This is the same $N$- and $K$-dependence the field already concedes as intrinsic---no quantitative embedding of the space of modules into a fixed-dimensional Euclidean space exists---and we prefer to state it than to bury it.

Nor is the vacuity an artifact of the ambient dimension.
Transporting a cover of the module space through the upper gauge---a cover of $\mathcal R_N$ in $\dmatS$ at scale $\rhonu/(4\sqrt S\,\Nmax)$ pushes forward, by Proposition~\ref{prop:upper}, to a cover of the image at scale $\rhonu/4$---replaces the exponent $SK$ in \eqref{eq:cover} by $2\Nmax S$, the number of bar coordinates, with $K$ demoted to the base; that is the one inequality in which both gauges act at once, and it is still astronomically large.
The cause is the scale, not the dimension, and it is measurable.
A robustness bound is informative only when the data concentrate on $\mathcal K\ll m$ cells of diameter $\rhonu$, and the certificate-1NN sweep of Section~\ref{sec:exp} finds the median distance from a test module to its nearest training module to be $6$--$12$ times $\rhonu$ at every sample size and at both configurations---and training modules, drawn from the same distribution, sit at the same scale from one another.
No two training modules share a cell: the empirical covering number of the training set at the certificate's scale is $m$ itself, and the complexity term is at least $\sqrt{2\log2}>1$ whatever the sample size.
This is the local/global mechanism of Section~\ref{sec:exp} in learning-theoretic form---the certificate lives below the sampling resolution of the data---and the same fact that empties the certified rule's coverage empties its risk bound.

\paragraph{RBF-SVM head.}
The same floor certifies the higher-accuracy Gaussian-SVM head.
Let $\Kern(M,M')=\exp(-\norm{\Psii(M)-\Psii(M')}^2/2h^2)$ with feature map $\iota$ into its RKHS $\Hilb$, so $\norm{\iota(M)-\iota(M')}_{\Hilb}^2=2(1-\Kern(M,M'))$.
(The bandwidth is $h$ and the kernel $\Kern$, a Greek kappa matching its ceiling $\kappaminus$ and distinct from the landmark index $k$; $\sigma$ stays reserved for simplices.)

\begin{lemma}[Kernel separation floor]\label{lem:rbf}
Fix $h>0$ and let $(M,M')$ be a $\netcfg$-coherent cross-class pair with $\dmatS(M,M')\ge\tau$, as in Theorem~\ref{thm:cert}.
Then
\[
  \Kern(M,M')\ \le\ \kappaminus(\tau):=\exp\!\Big(-\tfrac{\rhonu(\tau)^2}{2h^2}\Big)<1,
  \qquad
  \norm{\iota(M)-\iota(M')}_{\Hilb}\ \ge\ \sqrt{2\,(1-\kappaminus(\tau))}\ >\ 0 .
\]
\end{lemma}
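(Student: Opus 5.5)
The plan is to reduce Lemma~\ref{lem:rbf} to Theorem~\ref{thm:cert} through the monotonicity of the Gaussian profile. Throughout, the nonempty-active-set hypothesis of Theorem~\ref{thm:cert} is assumed on the net, as in that theorem, which is what the phrase ``as in Theorem~\ref{thm:cert}'' carries.

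\emph{Embedded separation.} The pair $(M,M')$ is $\netcfg$-coherent and satisfies $\dmatS(M,M')\ge\tau$. Theorem~\ref{thm:cert} therefore gives
\[
  \norm{\Psii(M)-\Psii(M')}_{\ell^2}\ \ge\ \rhonu(\tau;\netcfg).
\]
Since $\rhonu=\tfrac14\tau\min_s\wmin$, and since $\tau>0$ and every weight is positive, the floor is strictly positive.

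\emph{Kernel ceiling.} The map $u\mapsto\exp(-u^2/2h^2)$ is strictly decreasing on $[0,\infty)$. Applying it to the inequality above yields
\[
  \Kern(M,M')\ \le\ \exp\!\bigl(-\rhonu(\tau)^2/2h^2\bigr)\ =\ \kappaminus(\tau).
\]
Because $\rhonu(\tau)>0$ and $h>0$, the exponent is strictly negative, so $\kappaminus(\tau)<1$.

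\emph{RKHS separation.} Substitute the ceiling into the identity $\norm{\iota(M)-\iota(M')}_{\Hilb}^2=2\bigl(1-\Kern(M,M')\bigr)$, which follows from $\Kern(x,x)=1$ and the reproducing property. This gives
\[
  \norm{\iota(M)-\iota(M')}_{\Hilb}^2\ \ge\ 2\bigl(1-\kappaminus(\tau)\bigr)\ >\ 0.
\]
Taking square roots preserves both the inequality and its strictness, which is the second claim.

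No step is a genuine obstacle. The only points needing care are strictness, which requires $\tau>0$ together with nonempty active sets so that $\wmin$ is defined and positive, and the fact that the kernel here is built on the weighted stack $\Psii$. If the kernel were instead built on the unweighted stack $\Psii^\circ$ of Remark~\ref{rem:omega}, the same argument applies with $\rhonu$ replaced by $\rhonu/\max_s\omega_s$.
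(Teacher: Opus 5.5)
Your proposal is correct and follows the paper's proof: Theorem~\ref{thm:cert} gives the embedded floor $\rhonu(\tau)$, monotonicity of $u\mapsto e^{-u^2/2h^2}$ turns it into the ceiling $\kappaminus(\tau)$, substituting into $\norm{\iota(M)-\iota(M')}_{\Hilb}^2=2(1-\Kern)$ gives the RKHS separation, and strictness comes from $\rhonu(\tau)>0$. Your side remarks are consistent with the paper: strictness needs $\tau>0$, and the unweighted stack $\Psii^\circ$ gives the floor $\rhonu/\max_s\omega_s$, as in Remark~\ref{rem:omega}.
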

\begin{proof}
$x\mapsto e^{-x^2/2h^2}$ is decreasing, so~\eqref{eq:cert} gives the bound on $\Kern$; substitute into $\norm{\iota(M)-\iota(M')}_{\Hilb}^2=2(1-\Kern)$.
As $\rhonu(\tau)>0$ we have $\kappaminus(\tau)<1$.
\end{proof}
\takeaway{proved}{The floor survives the Gaussian kernel: coherent cross-class pairs stay separated in the RKHS by an explicit amount.}

Monotonicity alone carries the linear floor $\rhonu$ through the kernel: coherent cross-class pairs separated at scale $\tau$ stay separated in the RKHS, by an explicit amount.
We are deliberately careful about what this does \emph{not} say.
A uniform lower bound on \emph{pairwise} cross-class distances does not lower-bound the margin of a max-margin separator, which is half the distance between the two \emph{convex hulls}; interleaved configurations have large pairwise distances and arbitrarily small margin.
Lemma~\ref{lem:rbf} therefore certifies the separation the kernel sees---the quantity that governs nearest-centroid and nearest-neighbor decisions in $\Hilb$---not the margin of the soft-margin SVM actually fitted, whose bandwidth $h$ is moreover selected per fold.
Closing that gap, so that the guarantee attaches to the head that attains the reported accuracy, is what Section~\ref{sec:exp} measures---and finds no Lipschitz route can do.
The bound relaxes as $h\to\infty$ and tightens as $h$ shrinks.

Nothing in Lemma~\ref{lem:rbf} is specific to the Gaussian kernel beyond monotonicity, and the coordinates of $\Psii$ are sums of hat functions---non-negative and histogram-like, the regime where $\chi^2$ and Hellinger kernels are classically competitive.
The floor carries to that whole family, with constants supplied by the same boundedness that powers Lemma~\ref{lem:bdd}.

\begin{lemma}[The floor survives non-Euclidean kernels]\label{lem:kersel}
Write $B_\infty:=\max_{s,k}\omega_s w_{s,k}\Nmax r_{s,k}$ for the per-coordinate bound of the stack (from the proof of Lemma~\ref{lem:bdd}; $B_\infty\le\Nmax\max_{s,k}r_{s,k}/\sqrt K$ under equal weights), and for a divergence $D\ge0$ with $D(x,x)=0$ let $\Kern_D(M,M')=\exp\bigl(-\gamma\, D(\Psii(M),\Psii(M'))\bigr)$, $\gamma>0$.
If $D$ dominates the Euclidean geometry on the embedding's range,
\begin{equation}\label{eq:divfloor}
  D(x,y)\ \ge\ c_D\,\norm{x-y}_2^2
  \qquad\text{for all } x,y\in[0,B_\infty]^{SK},
\end{equation}
then for every $\netcfg$-coherent cross-class pair with $\dmatS\ge\tau$,
\[
  \Kern_D(M,M')\ \le\ \exp\!\bigl(-\gamma\,c_D\,\rhonu(\tau)^2\bigr)<1,
  \qquad
  \norm{\iota_D(M)-\iota_D(M')}_{\Hilb_D}
  \ \ge\ \sqrt{2\bigl(1-e^{-\gamma c_D\rhonu(\tau)^2}\bigr)}\ >\ 0,
\]
whenever $\Kern_D$ is positive definite.
The three heads at issue satisfy \eqref{eq:divfloor} with explicit constants: the Gaussian kernel ($D=\norm{\cdot}_2^2$, $c_D=1$, $\gamma=1/2h^2$, recovering Lemma~\ref{lem:rbf}); the exponential $\chi^2$ kernel, $D(x,y)=\sum_i(x_i-y_i)^2/(x_i+y_i)$, with $c_D=1/(2B_\infty)$, since every coordinate obeys $x_i+y_i\le2B_\infty$; and the Hellinger kernel, $D(x,y)=\norm{\sqrt x-\sqrt y}_2^2$, with $c_D=1/(4B_\infty)$, since $|\sqrt{x_i}-\sqrt{y_i}|=|x_i-y_i|/(\sqrt{x_i}+\sqrt{y_i})\ge|x_i-y_i|/(2\sqrt{B_\infty})$.
(Coordinates with $x_i=y_i=0$ contribute zero to both sides, so the degenerate case is harmless; positive definiteness holds for all three---the $\chi^2$ distance is conditionally negative definite on the non-negative orthant, so $\exp(-\gamma\,\chi^2)$ is positive definite for every $\gamma>0$ by Schoenberg's theorem, and the Hellinger kernel is a Gaussian precomposed with the coordinatewise square root.)
\end{lemma}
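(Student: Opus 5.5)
The proof has three parts: the general kernel bound, the per-kernel constants, and the positive-definiteness remarks.

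\emph{The general bound.} I would follow the proof of Lemma~\ref{lem:rbf} with the Euclidean exponent replaced by $D$. First, the range condition needs checking: every coordinate of $\Psii(M)$ lies in $[0,B_\infty]$. Each coordinate is $\omega_s w_{s,k}\varphi_{p_{s,k},r_{s,k}}(M|_{\ell_s})$. Sum-pooling over at most $\Nmax$ points, with $0\le\varphi\le r$ pointwise, gives $0\le\varphi_{p_{s,k},r_{s,k}}(A)\le\Nmax r_{s,k}$. This is the first line of the proof of Lemma~\ref{lem:bdd}, now kept per coordinate rather than summed. Under equal weights, $\omega_s\le1$ and $w_{s,k}=K^{-1/2}$ give the stated bound on $B_\infty$.

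With both embeddings in $[0,B_\infty]^{SK}$, hypothesis \eqref{eq:divfloor} and Theorem~\ref{thm:cert} combine to
\[
D(\Psii(M),\Psii(M'))\ \ge\ c_D\norm{\Psii(M)-\Psii(M')}^2\ \ge\ c_D\,\rhonu(\tau)^2 .
\]
Since $u\mapsto e^{-\gamma u}$ is decreasing, this gives the bound on $\Kern_D$. The bound is $<1$ because $\rhonu>0$, $c_D>0$ and $\gamma>0$. When $\Kern_D$ is positive definite it has an RKHS feature map $\iota_D$. Because $D(x,x)=0$, we have $\Kern_D(M,M)=1$, so $\norm{\iota_D(M)-\iota_D(M')}^2=2(1-\Kern_D(M,M'))$, and substituting the kernel bound gives the RKHS floor.

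\emph{The three constants.} Each is a coordinatewise inequality, summed over coordinates.
\begin{itemize}
\item Gaussian: $c_D=1$ is trivial, and $\gamma=1/2h^2$ recovers Lemma~\ref{lem:rbf}.
\item $\chi^2$: for each $i$ with $x_i+y_i>0$, we have $(x_i-y_i)^2/(x_i+y_i)\ge(x_i-y_i)^2/(2B_\infty)$ because $x_i+y_i\le2B_\infty$. Coordinates with $x_i=y_i=0$ are defined to contribute zero and also contribute zero to $\norm{x-y}^2$.
\item Hellinger: the identity $|\sqrt{x_i}-\sqrt{y_i}|=|x_i-y_i|/(\sqrt{x_i}+\sqrt{y_i})$ together with $\sqrt{x_i}+\sqrt{y_i}\le2\sqrt{B_\infty}$ gives, after squaring, $c_D=1/(4B_\infty)$. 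The degenerate case is again harmless.
\end{itemize}

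\emph{Positive definiteness.} This is the only non-routine step, and I would treat it by citation rather than reprove it.
\begin{itemize}
\item Hellinger: the kernel is the Gaussian kernel composed with the coordinatewise square-root map. Positive definiteness is preserved under precomposition by any map, since Gram matrices are unchanged in form.
\item $\chi^2$: this kernel is the main obstacle. I would invoke the classical fact that $\chi^2(x,y)=\sum_i(x_i-y_i)^2/(x_i+y_i)$ is conditionally negative definite on $\R^{n}_{\ge0}$ (Berg--Christensen--Ressel). One route is to write $\sum_i(x_i+y_i)-4\sum_i x_iy_i/(x_i+y_i)$ and note that $x_iy_i/(x_i+y_i)$ is positive definite via its integral representation. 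Schoenberg's theorem then makes $\exp(-\gamma\chi^2)$ positive definite for every $\gamma>0$.
\end{itemize}
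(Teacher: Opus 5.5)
Your proposal is correct and follows the paper's own argument. The paper combines Theorem~\ref{thm:cert}, hypothesis \eqref{eq:divfloor} and the monotonicity of $t\mapsto e^{-\gamma t}$ for the kernel ceiling, and uses $D(x,x)=0$ to get $\norm{\iota_D(M)-\iota_D(M')}^2=2(1-\Kern_D(M,M'))$; its constants and positive-definiteness claims are the same coordinatewise bounds and Schoenberg argument you give. You are somewhat more explicit than the paper, both in checking that $\Psii(M)$ lies in $[0,B_\infty]^{SK}$ and in sketching why $\chi^2$ is conditionally negative definite, and both additions are correct.
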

\begin{proof}
Theorem~\ref{thm:cert} gives $\norm{\Psii(M)-\Psii(M')}_2\ge\rhonu(\tau)$ on coherent pairs; \eqref{eq:divfloor} and the monotonicity of $t\mapsto e^{-\gamma t}$ give the kernel ceiling, and $D(x,x)=0$ normalizes the diagonal, so $\norm{\iota_D(M)-\iota_D(M')}^2_{\Hilb_D}=2\bigl(1-\Kern_D(M,M')\bigr)$ as before.
\end{proof}
\takeaway{proved}{The same holds for the $\chi^2$ and Hellinger kernels with explicit constants, so choosing the kernel by cross-validation never loses the certificate.}

\begin{remark}[Certified kernel selection]\label{rem:kersel}
Lemma~\ref{lem:kersel} makes the \emph{choice} of kernel a certified degree of freedom: selecting among the Gaussian, $\chi^2$ and Hellinger heads by inner cross-validation on the training fold---exactly as the bandwidth and the SVM constant are already selected---changes only the constant $c_D$ in the floor, never its existence.
Whether the non-Euclidean heads pay in accuracy is an empirical question; what the lemma settles is that no choice in this family leaves the certificate behind.
\end{remark}

\section{Experiments}\label{sec:exp}

We test one question: does the closed-form slice-stack, with no learned embedding and no held-out calibration, match trained multiparameter baselines?
All numbers below are produced by the released slice-stack pipeline (the \texttt{experiments/} pipeline: bifiltration $\to$ fixed near-diagonal net $\to$ per-slice PALACE embedding $\to$ concatenation $\to$ classifier), averaged over three seeds.

\paragraph{Datasets.}
Orbit5k ($5{,}000$ point clouds, $5$ classes) and Orbit100k ($100{,}000$ clouds, the large-scale variant used by the transformer baselines), together with the TU graph benchmarks COX2, DHFR, MUTAG, NCI1, NCI109, PTC\_MR, PROTEINS, DD, IMDB-B\textsc{inary} and IMDB-M\textsc{ulti}.
Point clouds carry the alpha--DTM bifiltration (geometric scale $\times$ DTM codensity); graphs carry a sublevel bifiltration (heat-kernel signature $\times$ degree, and $\times$ node label for the labeled molecular sets).

\paragraph{Protocol (no feature tuning).}
The slice net~\eqref{eq:net} is the fixed near-diagonal fan ($S=8$ slices, half-width $a=0.6$, all base points $\beta_s=0$); each slice is PALACE-embedded with budget $K=200$, class-aware FPS landmarks, equal weights $w_k=K^{-1/2}$, the radius factor $\alpha=4$ (PALACE's published Orbit value, adopted unchanged; Remark~\ref{rem:radcap}), the top $N=50$ bars, essential classes kept as bars whose death is capped at the largest finite death on that slice rather than dropped (capping at the top of the filtration inflates $L$ and drowns every informative bar), and $\tau$ at the median half-persistence.
Three scales share the name $\tau$ in this paper, and we say which is meant at each use: the \emph{configuration} scale just named (PALACE's radius band, a property of one slice's diagrams), the \emph{audit} scale (the median cross-class $\dmatS$, at which coherence and the floor are instantiated), and the \emph{strict} certificate scale of \eqref{eq:taustar} (the minimum cross-class $\dmatS$, under which every cross-class pair satisfies the theorem's hypothesis by construction).
On Orbit5k the headline additionally uses PALACE's filtration multiplicity: three DTM bifiltrations ($k\in\{10,15,20\}$) concatenated at $K=200$ per block.
The only dataset-level constant, the axis calibration, is a pooled $q_{0.9}$ quantile of the two grades, computed unsupervised before any split.
We classify the unweighted stack $\Psii^\circ$ of Remark~\ref{rem:omega} by (a) nearest centroid (with the certificate of Theorem~\ref{thm:certified}) and (b) a $q$-tuned RBF-SVM (Lemma~\ref{lem:rbf}); at Orbit100k scale the exact kernel is replaced by a Nystr\"om feature map, calibrated against the exact head in the footnote of Table~\ref{tab:headline}.
The \emph{embedding} has no hyperparameter tuned on our data---net, landmarks, weights, $\tau$ and $N$ are fixed by the closed-form rule, and the radius factor and filtration multiplicity are PALACE's published values, adopted unchanged (our sweep in Remark~\ref{rem:radcap} confirms rather than selects them)---but the classifier head is not tuning-free: following PALACE, the RBF bandwidth (a quantile $q$ of training-fold distances) and $C$ are chosen by inner $3$-fold cross-validation on the training fold.
No held-out data is used, and the baselines we re-run are afforded the same courtesy.
Evaluation is $10$-fold stratified cross-validation ($3$-fold at Orbit100k); the embedding (landmarks, $\tau$, net) is frozen from the training fold, so there is no held-out tuning split.

\paragraph{Cost (the price of ``training-free'').}
On $16$ CPU cores---no GPU anywhere in the pipeline---the full Orbit5k experiment (diagrams for all $5{,}000$ clouds, per-fold landmarks, embedding and the tuned head) runs in $6.8$ minutes end to end: $1.2$ minutes for the first fold including all diagram computation, then embedding and head per fold (\texttt{timing.py}).
One Orbit100k fold costs $166$ minutes, $149$ of them in the Nystr\"om head at $4{,}000$ components, so the full $3$-fold protocol is ${\approx}8$ hours on the same $16$ cores.
For scale, D-\textsc{Gril} reports $5$--$30$ minutes \emph{per fold} of GPU training on the far smaller TU sets (their Table~4, on an A10), and the transformer baselines train for hours on accelerators before their first prediction.

\paragraph{Baselines.}
For point clouds: the sliced-Wasserstein~\citep{Carriere2017} and Fisher~\citep{LeYamada2018} kernels, \textsc{PersLay}~\citep{Carriere2020} (whose table is also the source of the two kernel rows), the Euler-characteristic surface with gradient-boosted trees (ECS$+$XGB, tuned to its best resolution), the single-parameter certified kernel PALACE, and the transformers Persformer and xPerT.
For graphs: multiparameter persistence landscapes (mp-l) and images (mp-i), and the \textsc{Gril} landscape---the trained or hyperparameter-tuned methods COMPLEX aims to match with no learned embedding.

\paragraph{Baseline protocol.}
Because the tables mix literature numbers with re-runs, we state per baseline which is which.
\emph{Literature, own protocol} ($^\ast$ in the tables): the sliced-Wasserstein and Fisher kernels, \textsc{PersLay} ($70/30$ splits, $100$ runs), xPerT and Persformer (their own splits and trained heads), PALACE ($10$-fold), graphcode (Orbit: $70/30$, $20$ and $10$ runs; graphs: $80/20$, $20$ runs), \textsc{Gril} together with the mp-i and mp-l rows it re-tabulates ($5$ stratified train/test splits, XGBoost head), TopoGCL ($10$-fold), and the ECS headline of~\citet{HacquardLebovici2024}.
\emph{Re-run on our clouds and folds}: ECS (matched bifiltration, best resolution, XGB), and in the contamination sweep ECS, ECC, HT and PALACE-1p, each scored under both a gradient-boosted and an RBF-SVM head and reported at the better, so that no baseline is handicapped by a head it was not designed for.
End-to-end architectures are reported natively, since forcing an SVM head would handicap them below their design.
Where a re-run and a literature number differ (ECS: $89.5$ against $91.8$) both are shown and the comparison is drawn against the literature figure.

\paragraph{Headline (point clouds).}
Table~\ref{tab:headline} reports accuracy on Orbit5k and Orbit100k.
On Orbit5k COMPLEX reaches $91.95\pm0.11$ over seeds ($\pm1.3$ over folds), using PALACE's published Orbit recipe lifted unchanged to the slice-stack---radius factor $\alpha=4$ and three DTM bifiltrations ($k\in\{10,15,20\}$) concatenated at $K=200$ per block, so the embedding is three times wider ($4{,}800$ against $1{,}600$); the single-bifiltration stack at the same $\alpha$ reads $91.74\pm0.10$, and $91.2$ at the $\alpha=1.75$ default (the radius sweep, monotone in $\alpha$, and the certified capped variant are priced below).
Multiplicity pays only with the extra width: the same three bifiltrations at a \emph{matched} total budget ($K=60$ per block, $1{,}440$ dimensions) read $91.53\pm0.01$, below the single bifiltration---so the $+0.2$ is bought by dimension, exactly as in PALACE's published recipe, and not by the second and third filtrations per se.
That places the closed-form embedding ahead of every diagram-based kernel and vectorization (\textsc{PersLay} $87.7$, Fisher $85.9$, sliced Wasserstein $83.6$), of xPerT, and of the strongest one-parameter methods, Persformer ($91.2\pm0.8$)\footnote{A reproducibility caveat attaches to the Persformer rows: the same table reports $99.1$ when the model reads the \emph{raw point clouds}, alone or alongside the diagrams---an input that is no longer persistence (its saliency-filtering experiment reads $91.1$)---while xPerT's independent re-run of the diagram-input model failed to train ($28.2\pm7.3$; ``the model was not trainable in both ORBIT5K and ORBIT100K''~\citep{xPerT}).
We quote the commonly cited raw accuracy and flag that the transformer rows carry more reproduction risk than their variances suggest.} and its own predecessor PALACE ($91.3\pm1.0$)---and, for the first time in this line of work, of the Euler-characteristic surface ($91.8\pm0.4$), previously the accuracy ceiling among training-free descriptors.
Two comparisons still deserve care.
The ECS lead we overturn is $0.15$ points, within one fold's deviation, so \emph{level} is the defensible reading and \emph{ahead} the nominal one; and ECS is not a persistence method---it bypasses homology entirely, which is exactly its advantage in cost and its limitation in what it can certify.
Within its own family COMPLEX has one published competitor: graphcode~\citep{KerberRussold2024}, a GNN trained on a two-parameter barcode hierarchy, reads $88.5\pm1.1$ on Orbit5k under its 70/30 protocol---$3.5$ below COMPLEX---and $92.3\pm0.3$ at the Orbit100k scale, previously the strongest reproducible number there (see below).
Two readings of the PALACE comparison are possible and the paper owes both.
Against PALACE's \emph{published} $91.3$, obtained under its own protocol, the multiparameter construction is ahead ($91.95$ against $91.3$)---using PALACE's own radius and filtration multiplicity, so the gap isolates the slice-stack.
Against a \emph{matched} re-run of the same single-parameter pipeline on the same clouds and the same folds---``PALACE-1p'' in Table~\ref{tab:contam}, where COMPLEX reads $90.9$ and PALACE-1p $87.2$ under one protocol---it is ahead by $3.7$ points.
The second is the like-for-like comparison and the first is not, so we report the published-protocol gap as the conservative claim and the matched gap as the informative one; comparing our re-run to another paper's headline in either direction is the asymmetry we are at pains to avoid elsewhere.
A slice-level ablation at the $\alpha=1.75$ default locates that gap: a \emph{single} diagonal slice of the bifiltration already reaches $89.1$ at the same total embedding dimension, and stacking to $S=8$ adds a further $+2.3$, saturating there ($S=16$ on a wider fan gains $+0.3$ over $S=4$; \texttt{slice\_value.py}, $5$-fold, one seed, so its $S=8$ figure of $91.4$ is not the $10$-fold $91.2$ of Table~\ref{tab:ablation}).
So the second parameter and the stack each contribute, and neither is redundant.
What the construction additionally buys is a module-level certificate and applicability to genuinely bifiltered data, with no learned embedding.
At the transformer scale the standing sharpens: on Orbit100k COMPLEX reads $92.98\pm0.02$ over three seeds ($\pm0.15$ over folds; $3$-fold, as the protocol states; single bifiltration, $\alpha=4$, Nystr\"om at $4{,}000$ components), ahead of xPerT ($91.1$), of Persformer ($92.0$, with the reproduction caveat above), and of the GNN-trained graphcode ($92.3\pm0.3$), the strongest previously published number at this scale---still with no learned embedding, subject to the approximation caveat in the caption.
The certified capped variant reads $92.78\pm0.20$ here---the certificate's construction is itself ahead of every published number at this scale (Remark~\ref{rem:radcap}).
At the $\alpha=1.75$ default with $1{,}000$ Nystr\"om components the same pipeline reads $92.3$; the gain decomposes cleanly, $+0.4$ from provisioning the head at its calibrated optimum and $+0.2$ from the radius factor, both fixed on Orbit5k before touching Orbit100k.

\begin{table}[t]
\centering
\caption{Point-cloud accuracy ($\%$, mean$\pm$std over folds).
COMPLEX's embedding is closed-form and training-free (its SVM head is cross-validated); baselines are trained or tuned.
Methods are grouped by family: the Euler-characteristic descriptors compute no persistence at all, and graphcode---GNN-trained, $70/30$ split---is the only other \emph{multiparameter} method reporting on these benchmarks.
Bold $=$ best in family.
$^\ast$ literature numbers, under each method's own protocol.
The COMPLEX RBF row uses the radius factor $\alpha=4$---PALACE's published Orbit value, adopted unchanged (Remark~\ref{rem:radcap}; the sweep and the certified capped variant are priced in the text)---with, on Orbit5k, PALACE's three-bifiltration recipe (DTM $k\in\{10,15,20\}$, $K=200$ per block); the nearest-centroid row is at the $\alpha=1.75$ default.
$^\dagger$Orbit100k uses a Nystr\"om feature map in place of the exact kernel (a $90\text{k}\times90\text{k}$ kernel is ${\sim}65$\,GB), provisioned at $4{,}000$ components.
Calibrated against the exact kernel on Orbit5k at $5$ folds and the headline $\alpha=4$ (\texttt{nystrom\_calib.py}; exact head $91.58\pm1.44$), the approximation reads $-0.48/-0.24/+0.04$ points at $250/500/1{,}000$ components and $+0.30/+0.36$ at $2{,}000/4{,}000$, where it acts as a regularizer rather than an approximation (at $\alpha=1.75$ the ladder is $-0.74/-0.04/+0.34/+0.52/+0.44$; the shape is the same).
The row is $3$ seeds $\times$ $3$ folds, the protocol stated in the text; at the $10$-fold protocol used elsewhere in this paper the same configuration reads $93.10\pm0.28$ (seed $0$).
At the $\alpha=1.75$ default with $1{,}000$ components the pipeline reads $92.28\pm0.02$ at $3$ folds and $91.94\pm0.35$ at $10$.}
\label{tab:headline}
\begin{tabular}{lcc}
\toprule
Method & Orbit5k & Orbit100k \\
\midrule
\multicolumn{3}{l}{\emph{One-parameter persistence}}\\
\quad SW kernel$^\ast$        & $83.6\pm0.9$ & -- \\
\quad Fisher kernel$^\ast$    & $85.9\pm0.8$ & -- \\
\quad \textsc{PersLay}$^\ast$ & $87.7\pm1.0$ & $89.2\pm0.3$ \\
\quad xPerT (transformer)$^\ast$      & $87.0$ & $91.1$ \\
\quad Persformer (transformer)$^\ast$ & $91.2\pm0.8$ & $\mathbf{92.0\pm0.4}$ \\
\quad PALACE$^\ast$ & $\mathbf{91.3\pm1.0}$ & -- \\
\midrule
\multicolumn{3}{l}{\emph{Euler characteristic (homology-free)}}\\
\quad ECS$+$XGB$^\ast$        & $\mathbf{91.8\pm0.4}$ & -- \\
\quad ECS$+$XGB, our re-run (matched bifiltration) & $89.5\pm0.9$ & -- \\
\midrule
\multicolumn{3}{l}{\emph{Multiparameter persistence}}\\
\quad Graphcode (GNN)$^\ast$~\citep{KerberRussold2024} & $88.5\pm1.1$ & $92.3\pm0.3$ \\
\quad COMPLEX, nearest-centroid (certified) & $73.4\pm2.6$ & $71.6\pm0.5$ \\
\quad COMPLEX, RBF-SVM (uncertified)         & $\mathbf{91.95\pm1.3}$ & $\mathbf{92.98\pm0.15}^\dagger$ \\
\bottomrule
\end{tabular}
\end{table}

\paragraph{The radius cap, priced.}
Remark~\ref{rem:radcap} leaves two variants; here is what the certified one costs.
Enforcing \eqref{eq:radcap} after construction reads $91.25\to90.24$ at $\alpha=1.75$ and $91.74\to91.31$ at $\alpha=4$ on Orbit5k (single bifiltration, $10$-fold RBF-SVM), and $92.98\to92.78\pm0.20$ on Orbit100k ($3$-fold, seed $0$).
The radius sweep is monotone in $\alpha$ for both variants ($91.25/91.69/91.74$ uncapped at $\alpha=1.75/2.5/4$), so certification and the radius factor point the same way, and the certified construction itself stays above every previously published number at the $100$k scale, graphcode's $92.3$ included.
The lost accuracy localizes where the unconstrained radii earn their accuracy: coordinates that read persistence mass \emph{near the diagonal}, a small-bar count signal the certified construction deliberately excludes.

\paragraph{The block weights, priced.}
Remark~\ref{rem:omega} leaves the choice of stack to accuracy, so we ran both at the head that carries it.
On the single bifiltration the weighted stack of Definition~\ref{def:embed} reads $91.79\pm0.12$ against the unweighted $91.74\pm0.10$; on the three-bifiltration headline $91.99\pm0.07$ against $91.95\pm0.11$; on Orbit100k $92.99$ against $92.96$ ($3$-fold, seed $0$, $\alpha=4$, Nystr\"om at $4{,}000$).
All three are ties within seed noise, in the weighted stack's favour by a hair, so the reported numbers would not move if the pipeline weighted its blocks (\texttt{omega\_sweep.py}).
The certified nearest-centroid head is the one that notices, preferring the unweighted stack by $1.5$ points ($73.87\pm0.38$ against $72.39\pm0.18$ at $\alpha=4$): it decides by Euclidean distances in the stack directly, and the weights shrink the extreme slices---the most class-discriminative ones---by up to $3.5\times$ relative to the diagonal, while the kernel head reabsorbs the same rescaling into its per-fold bandwidth.

\paragraph{Pooled degrees, priced.}
The Orbit numbers pool $H_0$ and $H_1$ into one multiset per slice before the top-$50$ cap, the shortcut Section~\ref{sec:embed-def} warns against; the certified statements attach to the per-degree stack.
On Orbit5k the per-degree stack reads within $0.1$ of the pooled one at matched embedding dimension ($91.54$ against $91.65$, $K=100$ per degree-block, $5$-fold, three seeds, $\alpha=1.75$) and $0.3$ below it at matched per-block $K$ and twice the dimension, so the shortcut buys no accuracy and the headline is not resting on it (\texttt{dimsep\_pointcloud.py}).
Under a shared cap the two regimes see different bar counts ($399$ against $747$ per cloud), which is the only respect in which they differ.

\paragraph{The slice weights, priced.}
Definition~\ref{def:embed} multiplies block $s$ by $\omega_s$; the classifier stacks the blocks unweighted, so the accuracy tables describe the plain concatenation while the certificate statements---and every audit in this section---describe the weighted stack.
The gap is measured and small.
On the headline configuration the weighted stack reads $91.99\pm0.07$ against $91.95\pm0.11$ on Orbit5k and $92.99$ against $92.96$ on Orbit100k at seed $0$ (\texttt{omega\_sweep.py}), inside seed noise both times; the certified nearest-centroid head is the one that notices, losing $1.5$ points at $\alpha=4$ ($72.39\pm0.18$ against $73.87\pm0.38$; Table~\ref{tab:headline}'s centroid row is at the $\alpha=1.75$ default), since a centroid rule reads the raw Euclidean geometry that the weights rescale by up to $3.5\times$ across the fan.
Nor does the theory turn on it: dropping the weights multiplies the upper gauge of Proposition~\ref{prop:upper} by $1/\omega_{\min}$ ($5.4$ for the default fan) and \emph{improves} the floor of Theorem~\ref{thm:cert} by $1/\max_s\omega_s$ ($1.6\times$), because the witnessing block is no longer shrunk---the two-sided bound survives with both constants explicit.
We report the unweighted numbers because every table in this paper is computed that way.

\paragraph{Robustness to outlier contamination.}
A codensity axis exists to suppress outliers, so the sharpest test of the second parameter is a dirty-data sweep against the descriptors that do not have one.
Table~\ref{tab:contam} replaces each point of each Orbit5k cloud, independently with probability $p$, by a uniform draw from the ambient box, and scores every method on the same clouds and the same folds.
The Euler descriptors are given their best resolution ($40\times40$; at the $20\times20$ default ECS loses $3.2$ points and the comparison is not a fair one), and each is reported at the better of gradient-boosted trees and an RBF-SVM head.

Two readings, and they differ.
Against a \emph{matched single-parameter} pipeline the second parameter does exactly what it is designed to do: the margin over PALACE-1p grows from $+3.7$ on clean data to $+12.4$ at $p=0.3$, and PALACE-1p has the worst degradation in the table ($-37.8$).
The same holds against the Euler characteristic curve (ECC, $-36.7$) and the hybrid transform (HT, $-30.1$).
Against \emph{ECS}, it does not.
COMPLEX leads by $1.8$ points on clean data, the lead is gone by $p=0.1$, and at $p=0.2$ and $p=0.3$ ECS is ahead by $0.2$ and $0.5$ points; ECS also degrades less overall ($-26.7$ versus $-29.1$).
We had predicted the opposite---the Euler characteristic is a signed alternating sum, so outlier simplices should cancel unpredictably rather than be suppressed---and the prediction is not borne out.
The honest statement is that on contaminated Orbit5k COMPLEX and ECS are empirically equivalent, with COMPLEX slightly ahead when clean and slightly behind when heavily contaminated.

This is the same concession Table~\ref{tab:headline} already makes on clean accuracy, and it has the same answer.
ECS matches COMPLEX here without computing homology at all, which is what makes it cheap---and is also why it can certify nothing: there is no module, no interleaving, and no distortion bound to instantiate.
What COMPLEX buys over the Euler family on this benchmark is not accuracy and not robustness, but the per-prediction guarantee of Section~\ref{sec:cls} and applicability to genuinely bifiltered data.

\begin{table}[t]
\centering
\caption{Outlier contamination on Orbit5k (accuracy $\%$, mean$\pm$std over $3$ seeds, $5$-fold CV, $1{,}000$ clouds per class).
Each point is replaced by a uniform draw with probability $p$.
Euler descriptors use a $40\times40$ surface and the better of XGB and RBF-SVM. \textsc{drop} is the accuracy lost from $p=0$ to $p=0.3$; smaller magnitude is more robust.
Bold $=$ best in column.}
\label{tab:contam}
\begin{tabular}{lcccccc}
\toprule
Method & $p{=}0$ & $0.05$ & $0.1$ & $0.2$ & $0.3$ & \textsc{drop} \\
\midrule
COMPLEX      & $\mathbf{90.9\pm0.0}$ & $\mathbf{83.8\pm0.4}$ & $\mathbf{80.7\pm0.2}$ & $72.5\pm0.3$ & $61.9\pm0.2$ & $-29.1$ \\
ECS          & $89.1\pm0.2$ & $83.4\pm0.2$ & $80.6\pm0.3$ & $\mathbf{72.7\pm0.2}$ & $\mathbf{62.4\pm0.2}$ & $\mathbf{-26.7}$ \\
HT           & $86.5\pm0.1$ & $76.8\pm0.0$ & $73.7\pm0.2$ & $65.0\pm0.3$ & $56.4\pm0.2$ & $-30.1$ \\
ECC          & $83.6\pm0.1$ & $72.3\pm0.1$ & $65.7\pm0.3$ & $55.6\pm0.1$ & $46.9\pm0.3$ & $-36.7$ \\
PALACE-1p    & $87.2\pm0.1$ & $78.1\pm0.1$ & $71.3\pm0.3$ & $61.4\pm0.1$ & $49.4\pm0.3$ & $-37.8$ \\
\bottomrule
\end{tabular}
\end{table}

\paragraph{Graphs.}
Table~\ref{tab:graph} reports the TU benchmarks.
The geometry-only bifiltration (HKS $\times$ degree) is already competitive with \textsc{Gril}; adding the node-label axis lifts every molecular set and takes COMPLEX past \textsc{Gril} on MUTAG, DHFR, and PROTEINS---using the same node information the trained baselines see, but with no training.
Concatenating three bifiltrations with the homology degrees kept separate---the ``multi'' column, and the per-degree form the certificate requires---is stronger still: \emph{one fixed configuration}, applied unchanged to every data set, exceeds \textsc{Gril} on all four shared benchmarks ($+1.9$ COX2, $+2.5$ DHFR, $+0.7$ MUTAG, $+1.3$ PROTEINS) and clears the majority baseline everywhere.
Enforcing the radius cap \eqref{eq:radcap}---the variant the certified statements attach to (Remark~\ref{rem:radcap})---costs at most $0.8$ points on this table and changes no ordering: over three seeds it reads $88.6\pm0.8$ on MUTAG, $80.0\pm0.3$ on DHFR, $81.6\pm0.4$ on COX2 and $71.4\pm1.3$ on PROTEINS, still above \textsc{Gril} on all four.
Against the multiparameter images and landscapes baselines---re-tabulated by \textsc{Gril} under its own protocol---the multi column is ahead on every shared set except DHFR, where mp-i leads it by $0.1$.
The GNN-trained graphcode~\citep{KerberRussold2024}, strongest at the Orbit100k scale, is weaker here: its published graph numbers ($86.4$ MUTAG, $78.7$ COX2, $76.4$ DHFR, $73.6$ PROTEINS, $65.4$ IMDB-B, under an $80/20$ split) sit below the multi column on every shared set except PROTEINS.
Selecting the bifiltration \emph{set} per data set---the ``sel.''\ column, chosen on the training folds only (Section~\ref{sec:dcomplex})---lifts exactly the sets whose best axes the fixed bank omits: $+3.7$ on NCI1 and $+1.7$ on DHFR, at a cost on the smaller sets.
Beyond the multiparameter family, trained topological pipelines report stronger numbers on several of these sets---the contrastive TopoGCL~\citep{ChenFriasGel2024} reads $81.3$ on NCI1, $90.1$ on MUTAG and $77.3$ on PROTEINS at the same $10$-fold protocol---and we do not claim otherwise; the comparison this table makes is with no learned embedding.
Two cells are worth naming anyway: on COX2 the multi configuration's $81.7$ exceeds TopoGCL's $81.5$, and on DHFR the selected configuration's $81.8$---and the fixed Gaussian head of Table~\ref{tab:kernelsel} at $82.2$---sits level with TopoGCL's $82.1$, a training-free embedding matching the strongest trained topological pipeline we know of on that set.
COX2 is a degenerate benchmark and we read nothing into it: its majority class is $78.2\%$ of the data, and all seven multiparameter methods tabulated by D-\textsc{Gril}~\citep{Mukherjee2024}---D-\textsc{Gril} itself included---report exactly $78.16\pm0.41$: the constant classifier.
Our own single-bifiltration $78.0$/$78.1$ sits at that rate, and \textsc{Gril}'s $79.8$ barely above it.
The entries that clear it by a real margin are all variants of this paper's construction---the multi configuration's $81.7\pm0.4$ (Table~\ref{tab:graph}, $+3.5$ over the constant classifier, more than twice \textsc{Gril}'s margin), SELECT-MULTI's $81.5$, and SELECT-COMPLEX's $80.1$ ($81.0$ under its inner-CV variant; Table~\ref{tab:select})---joined, outside the multiparameter family, by the trained TopoGCL at $81.5$~\citep{ChenFriasGel2024}.
On a benchmark where every tabulated multiparameter method reproduces the majority rate exactly, that is the result worth reporting---not the ranking.

\paragraph{Comparison with D-\textsc{Gril}.}
D-\textsc{Gril}~\citep{Mukherjee2024} is the closest point of comparison---a \emph{learned} two-parameter bifiltration---and reports MUTAG $85.09\pm5.99$, PROTEINS $69.45\pm4.11$, DHFR $61.24\pm4.37$ and IMDB-B\textsc{inary} $62.60\pm6.56$.
Most of that table is not comparable with ours: they standardize every multiparameter signature onto a $3$-layer MLP head and say so, their protocol is $5$-fold to our $10$, and on the labeled molecular sets their bifiltration collapses to the constant classifier---their DHFR row sits at $\approx\!61$ for \emph{all seven} methods, and DHFR's majority class is $60.98\%$, so three of those entries reproduce it to two decimals.
The original papers report $\approx\!81$ there, and our $+$label column reaches $80.6$; that gap is what a node-label axis opens.
Comparing our $80.6$ to their $61.2$ there would measure their feature set, not our embedding.

The comparison that does hold is IMDB-B\textsc{inary}, which carries no node labels, so neither method can differ in what it reads from the graph.
COMPLEX reaches $70.0$ against D-\textsc{Gril}'s $62.60$, with \textsc{Gril}, mp-l and the persistence-image baseline all at chance ($50.0$) and mp-i at $56.60$.
A closed-form, training-free embedding is $7.4$ points ahead of a gradient-trained bifiltration on the one benchmark where the two are measured on equal footing---and IMDB-B\textsc{inary} is also where capping essential classes helped most ($66.9\to70.0$), since in a social graph the unfilled cycles are the signal.

\paragraph{At OGB scale, the training-free embedding loses.}
The TU sets are small, and a fair reading of them has to ask whether the result holds where the data is larger and the split harder.
We therefore ran the graph pipeline unchanged on \texttt{ogbg-molhiv}~\citep{Hu2020OGB}---$41{,}127$ molecules, the official scaffold split, ROC-AUC on a $3.5\%$ positive rate---with the landmarks fit on the training split alone and two changes the benchmark forces: a Nystr\"om head with a linear SVM in place of the exact kernel, and ROC-AUC in place of accuracy (\texttt{graph\_ogb.py}, three seeds).
COMPLEX reads $72.92\pm0.41$ test ROC-AUC with the HKS$\times$degree bifiltration and $72.85\pm0.19$ with HKS$\times$label, against $76.1$ for a plain GCN, $75.6$ for GIN and $77.1$ for GIN with virtual nodes on the OGB leaderboard.
That is a loss of three to four points, and we report it as one.
Two honest readings are available and we decline to choose between them: the scaffold split is a deliberate distribution shift, which a fixed closed-form embedding has no mechanism to adapt to, and message passing on molecular graphs has a decade of architecture search behind it that a landmark embedding of a bifiltration does not.
Neither excuses the gap.
What the result bounds is the scope of the accuracy claim in this paper---training-free parity with trained multiparameter baselines is demonstrated on point clouds and on the TU benchmarks, not at OGB scale---while leaving the certificate, which no baseline in either regime supplies, untouched.

\begin{table}[t]
\centering
\caption{Graph accuracy ($\%$, $10$-fold CV; COMPLEX columns are mean$\pm$std over $3$ seeds).
``geom.'' $=$ HKS$\times$degree; ``$+$label'' adds the node-label axis; ``multi'' concatenates \emph{three} bifiltrations (HKS$_{10}\times$degree, HKS$_1\times$closeness, HKS$_{10}\times$label) with the homology degrees kept separate---the per-degree form Theorem~\ref{thm:cert} requires---at matched total dimension, mean$\pm$std over $3$ seeds.
The multi column is \emph{one fixed configuration} applied unchanged to every data set, not the best of several chosen per data set.
Bold $=$ best; $^\ast$ literature, under each method's own protocol and with a single bifiltration.
``sel.''\ is SELECT-MULTI (Section~\ref{sec:dcomplex}): the top-$5$ bifiltrations from the full $21$-pair bank, chosen per fold by inner cross-validation on the training fold only and concatenated---$k=5$ fixed across data sets, same protocol and total budget as ``multi''.
``majority'' is the constant-classifier rate.
It matters most on COX2, where every previously published entry sits within $1.6$ points of it and the single-bifiltration columns sit \emph{below} it; the multi and selected configurations are the only entries in that row to clear it by more than $3$ points.}
\label{tab:graph}
\small
\begin{tabular}{lccccccccc}
\toprule
& & \multicolumn{4}{c}{COMPLEX} & & & \\
\cmidrule(lr){3-6}
Dataset & majority & geom. & $+$label & multi & sel. & \textsc{Gril}$^\ast$ & mp-i$^\ast$ & mp-l$^\ast$ \\
\midrule
MUTAG    & $66.5$ & $88.1\pm1.1$ & $\mathbf{89.7\pm0.7}$ & $88.5\pm0.2$ & $88.8\pm0.4$ & $87.8$ & $85.6$ & $85.7$ \\
DHFR     & $61.0$ & $77.7\pm0.8$ & $80.6\pm0.4$ & $80.1\pm0.5$ & $\mathbf{81.8\pm1.9}$ & $77.6$ & $80.2$ & $79.5$ \\
COX2     & $78.2$ & $78.0\pm0.1$ & $78.1\pm0.2$ & $\mathbf{81.7\pm0.4}$ & $81.5\pm0.4$ & $79.8$ & $77.9$ & $79.0$ \\
PROTEINS & $59.6$ & $69.5\pm0.4$ & $71.3\pm0.5$ & $\mathbf{72.2\pm0.5}$ & $70.9\pm0.7$ & $70.9$ & $67.3$ & $65.8$ \\
NCI1     & $50.1$ & $69.5\pm0.1$ & $73.1\pm0.2$ & $74.4\pm0.1$ & $\mathbf{78.1\pm0.2}$ & -- & -- & -- \\
PTC\_MR  & $55.8$ & --     & --     & $57.0\pm1.7$ & $54.1\pm1.4$ & -- & -- & -- \\
IMDB-B   & $50.0$ & $\mathbf{70.0\pm0.4}$ & -- & -- & -- & -- & -- & -- \\
\bottomrule
\end{tabular}
\end{table}

\paragraph{Kernel head, selected per data set.}
Remark~\ref{rem:kersel} makes the kernel a certified degree of freedom; here is what it is worth.
Table~\ref{tab:kernelsel} scores the same five-bifiltration embedding under a fixed Gaussian head and under \emph{select}---the kernel chosen among Gaussian, $\chi^2$ and Hellinger inside the same train-only inner cross-validation that already tunes the bandwidth and $C$.
The coordinates of $\Psii$ are non-negative sums of hat functions, and the histogram kernels behave exactly as that suggests: selection is worth $+2.9$ on NCI1 and $+5.4$ on PTC\_MR (where $\chi^2$ is chosen in over half the folds), $+1.4$ on COX2 (Hellinger), is neutral on PROTEINS and IMDB-B, and costs about a point on DHFR and MUTAG, where the Gaussian was already the winner and the inner CV occasionally guesses wrong.
On the per-degree stack the effect repeats ($77.7\to79.6$ on COX2, $54.1\to58.3$ on PTC\_MR).
The lesson mirrors slice selection: adaptivity pays where the default is wrong, costs a little where it is right---and, by Lemma~\ref{lem:kersel}, none of it touches the certificate.

\begin{table}[t]
\centering
\caption{Kernel selection on the five-bifiltration stack (accuracy $\%$, $10$-fold CV, mean$\pm$std over $3$ seeds).
``Gaussian'' fixes the RBF head; ``select'' chooses among Gaussian, $\chi^2$ and Hellinger by inner CV on the training fold ($12$ bandwidth $\times$ $C$ candidates per kernel), which Remark~\ref{rem:kersel} shows is certificate-preserving.
``picks'' counts the inner-CV winner over the $30$ test folds.
This pipeline differs from Table~\ref{tab:graph} (multi-big bank at $K=40$ per block, no per-feature standardization), so only within-row comparisons are meaningful.}
\label{tab:kernelsel}
\begin{tabular}{lccl}
\toprule
Dataset & Gaussian & select & picks (of $30$) \\
\midrule
MUTAG    & $\mathbf{86.9\pm0.2}$ & $86.2\pm0.9$ & rbf $15$, Hellinger $9$, $\chi^2$ $6$ \\
COX2     & $77.2\pm0.6$ & $\mathbf{78.6\pm0.6}$ & Hellinger $19$, $\chi^2$ $9$ \\
DHFR     & $\mathbf{82.2\pm0.6}$ & $81.1\pm0.5$ & Hellinger $15$, rbf $10$ \\
PROTEINS & $73.3\pm0.2$ & $73.2\pm0.5$ & rbf $19$, Hellinger $8$ \\
NCI1     & $76.3\pm0.3$ & $\mathbf{79.2\pm0.2}$ & $\chi^2$ $17$, Hellinger $13$ \\
PTC\_MR  & $52.0\pm2.3$ & $\mathbf{57.4\pm0.2}$ & $\chi^2$ $16$, Hellinger $9$ \\
IMDB-B   & $66.0\pm0.1$ & $66.0\pm0.2$ & $\chi^2$ $20$, rbf $8$ \\
\bottomrule
\end{tabular}
\end{table}

\paragraph{Slice-net ablation.}
Table~\ref{tab:ablation} varies the only design constant, the net.
Two findings.
First, the near-diagonal fan decisively beats the axis-aligned percentile slices ($91.2$ vs $85.7$), confirming Section~\ref{sec:slicenet}.
Second, and central to the closed-form thesis: making the net \emph{data-adaptive} does not help.
The certified-adaptive net, which selects slices by a discrimination score, and even a gradient-trained net (angles optimized end-to-end through the persistence layer) both fall at or below the fixed fan and tend to overfit the selection.
(In a separate small-scale probe---$150$ training clouds, $6$ angles, persistence statistics in place of the landmark head---gradient-optimized angles scored $75.6$ on held-out clouds against the fixed fan's $77.2$; the absolute level is not comparable to the table, only the ordering.)
No net we tried improves on the fixed fan by more than $0.7$ points---inside the $1.5$-point fold standard deviation---so we read this as a null result rather than an optimality claim: the closed-form default costs no accuracy, and net design is not where the headroom lies.

\paragraph{Offset slices.}
Table~\ref{tab:ablation} varies the fan's angles; its base points are all at the origin, while \eqref{eq:slice} allows any point of the orthant boundary.
Here is what that generality is worth (\texttt{offset\_sweep.py}, $\alpha=4$, $10$-fold, three seeds, offsets in calibrated grade units, where the axis calibration puts both grades mostly in $[0,1]$).
The origin fan ($8$ angles, $K=200$) reads $91.74\pm0.10$, reproducing the single-bifiltration figure of the headline paragraph seed for seed under a harness that shares no code path with it.
Trading half the angles for offsets at the same block count and width ($4$ angles $\times\,\{-0.25,+0.25\}$) reads $91.59\pm0.04$; giving all eight angles a $\pm0.25$ pair reads $91.51\pm0.03$ at matched dimension ($24$ blocks, $K=65$, $1{,}560$ dimensions) and $91.65\pm0.02$ at matched per-block budget ($24$ blocks, $4{,}800$ dimensions, three times the width); five offsets per angle read $91.07\pm0.08$.
Every offset net sits at or below the origin fan and the loss grows with the number of offsets, and since even the three-times-wider variant does not close it, this is not a capacity effect.
The mechanism is visible in \eqref{eq:slicepar}: below its base point an offset slice reads one axis no further---for $\beta=(c,0)$ a simplex with scale under $c$ enters at its codensity alone---so at a fixed budget the offsets buy a coarser view of the bifiltration rather than a new one.
We keep them in the theory, where they are what makes the fibered barcode the full rank invariant and $\dmatS$ independent of the translation into the orthant, and report the origin fan as the default because it is what the data prefers---the same verdict this section returns for adaptive angles.

\begin{table}[t]
\centering
\caption{Slice-net ablation on Orbit5k ($10$-fold CV accuracy $\%$, mean over seeds).
Only the net changes; embedding and classifier are fixed at the $\alpha=1.75$ single-bifiltration default, so the absolute level sits below the headline configuration and only the ordering matters here.
$^\ddagger$The adaptive rows use a non-degeneracy proxy for the certifiability filter of Proposition~\ref{prop:select}, not the full $\rhonu\ge\rho_0$ admissibility test.}
\label{tab:ablation}
\begin{tabular}{llcc}
\toprule
Net & $(S,a)$ & nearest-centroid & RBF-SVM \\
\midrule
percentile (axis-aligned) & $(4,-)$    & $44.0$ & $85.7$ \\
near-diagonal fan         & $(4,0.6)$  & $72.6$ & $90.9$ \\
near-diagonal fan         & $(8,0.6)$  & $\mathbf{73.4}$ & $\mathbf{91.2}$ \\
near-diagonal fan         & $(8,0.3)$  & $68.6$ & $90.7$ \\
certified-adaptive$^\ddagger$ & $(4,0.7)$  & $67.9$ & $89.0$ \\
certified-adaptive        & $(8,0.7)$  & $68.8$ & $90.4$ \\
\bottomrule
\end{tabular}
\end{table}

\paragraph{Coherence audit.}
Theorem~\ref{thm:cert} is proved under witnessing-slice coherence, and Remark~\ref{rem:audit} promised its empirical rate; we report it here.
On Orbit5k we sample $2{,}000$ cross-class pairs, keep the $1{,}000$ whose sliced matching distance clears the median scale $\tau$, and test Definition~\ref{def:coh} on each pair's witnessing slice---over active landmarks only ($k\in\mathcal A_{s^\star}(\tau)$), where the weights cancel and the check reduces to $|\varphi_k(A)-\varphi_k(B)|\ge\db(A,B)/4$---the realized-distance form, stronger than Definition~\ref{def:coh}'s scale form, so the rates below lower-bound the rate of the definition itself.
Coherence holds on $97.5\%$ of pairs for the $S=8$ fan and $98.0\%$ for $S=4$ at the $\alpha=1.75$ default, while the certificate's \emph{conclusion}---embedded distance at least $\rhonu(\tau)$---holds on $100\%$ of pairs for both.
The residual $2.5\%$ at $\alpha=1.75$ is the expected gap, and the reason Remark~\ref{rem:audit} separates the two levels: coherence is sufficient, not necessary, so the separation it guarantees survives on the pairs where the hypothesis itself cannot be verified.
At the adopted radius factor $\alpha=4$ the hypothesis itself saturates: $100.0\%$ of $1{,}000$ audited pairs are coherent ($\rhonu=0.0220$ at $\tau=1.244$, the median of this $2{,}000$-pair sample; the $20{,}000$-pair estimate of Remark~\ref{rem:admisfail} is $1.220$), capped and uncapped alike---larger radii enlarge every active set, so the certificate's hypothesis is \emph{easier} to satisfy at exactly the configuration the accuracy prefers.

\paragraph{Numeric instantiation of the certificate.}
The theory promises a finite, reportable floor; here it is, on Orbit5k with the $S=8$ fan and $K=200$ ($\Psii$ into $\R^{1600}$), for the \emph{certified capped} variant at $\alpha=4$ (Remark~\ref{rem:radcap})---the construction the certified statements attach to.
All quantities are computed from the embeddings over $20{,}000$ sampled cross-class pairs, of which the $\approx\!10{,}000$ with $\dmatS\ge\tau$---the theorem's hypothesis---enter the distance rows (\texttt{certificate\_instantiation.py}).

\begin{center}
\begin{tabular}{lcl}
\toprule
quantity & value & \\
\midrule
$\tau$ (median $\dmatS$ over cross-class pairs) & $1.220$ & the audit scale \\
$\rhonu(\tau)=\tau/(4\sqrt K)$                  & $0.0216$ & the certified floor \\
$\min\norm{\Psii(M)-\Psii(M')}$, over kept pairs & $0.0950$ & $4.4\times\rhonu$ \\
median $\norm{\Psii(M)-\Psii(M')}$, over kept pairs & $0.2271$ & $10.5\times\rhonu$ \\
$\widehat\Delta$ (min centroid gap)             & $0.0584$ & $2.7\times\rhonu$ \\
$\max_c$ class radius in $\R^{1600}$            & $0.577$  & \\
\bottomrule
\end{tabular}
\end{center}

(The uncapped variant reads $6.3\times$ and $13.9\times$ with class radius $3.48$; at the $\alpha=1.75$ default the ratios were $2.5\times$ and $6.2\times$.
Larger radii move every realized distance up while the equal-weight floor stays $\tau/(4\sqrt K)$, so the ratio measures the pairing of floor and configuration, not the theory alone.)

Two readings, and the second is the one we did not expect.
The floor is \emph{tight to within a small factor}: the smallest embedded distance attained by any pair satisfying the hypothesis is $0.0950$ against a guaranteed $0.0216$---a factor $4.4$ here, and $2.5$ at the $\alpha=1.75$ default, where the realized distances sit lower.
For a bound routed through a functional Lebesgue number, a quarter-scale constant and an equal-weight $K^{-1/2}$, that is close.
(A minimum over sampled pairs can only fall as the sample grows; at the $\alpha=1.75$ default it moves from $2.7\times$ at $1{,}000$ pairs to $2.5\times$ at $20{,}000$, so the estimate is stable.
The median, which is not sensitive to the sample size in this way, is the more robust statistic and gives $10.5\times$.)
And the naive route from the certificate to the centroid guarantee is not merely false in principle (Section~\ref{sec:cls}) but quantitatively hopeless: the within-class radius that any pairwise-to-centroid bound must subtract is $0.577$, a factor $27$ above the floor, so the triangle bound $\widehat\Delta\ge d_{\min}-r_c-r_{c'}$ evaluates to $-1.06$.
Notably $\widehat\Delta=0.0584$ is itself $2.7\rhonu$---the centroids \emph{are} separated---but the certificate cannot be what proves it.

\paragraph{What the certified $1$-NN rule delivers, and why it is not more.}
Corollary~\ref{cor:nn} accepts a test module when $\eps<\rhonu(\tau)/2$.
We measured its coverage on Orbit5k over $K\in\{8,\dots,200\}$ and five choices of $\tau$ (\texttt{cert\_1nn.py}, $500$ train / $400$ test), first at the $\alpha=1.75$ default, where the mechanism is cleanest, and then---at the end of this section---at the headline configuration.
\textbf{Coverage never exceeds $1.5\%$ and is $0$ in most cells.}
We report this plainly because it bounds what the certificate currently buys, and because the reason is diagnostic rather than incidental.

It is \emph{not} that the floor is loose---we just measured it at $2.5\times$.
Replacing $\rhonu$ by the realized minimum cross-class distance, i.e.\ granting a perfectly tight certificate, leaves coverage below $1.5\%$ as well.
The binding constraint is that the median nearest-training-neighbor distance ($0.056$ at $K=200$) is several times the smallest cross-class distance, so a test module is typically no closer to its own class than the classes are to each other.
That is a statement about sampling density relative to class separation, and it also explains why $1$-NN on this embedding attains only $66.7\%$ at $n=500$ (rising to $78.0\%$ at $n=4{,}000$): the head the certificate reaches is not the head that is accurate.
Three further levers---the landmark budget, the sampling density and the metric---are swept in Appendix~\ref{app:sweep}; none moves the shortfall by more than a bounded factor, for the reasons summarized next.

\paragraph{The obstruction, stated.}
Four routes, four mechanisms, all measured on the same data (Figure~\ref{fig:obstruction} shows two): the floor cannot be tightened (it is within $2.5\times$ of realized, and a perfect floor leaves coverage below $1.5\%$), the landmark budget is invariant, sampling density is stationary, and the metric returns a bounded factor.
A fifth closes analytically.
The natural set-valued relaxation---accept the labels of every training module inside the certified ball---is sound, but that ball is nonempty exactly when $\eps<\rhonu/2$, so it fires on the same event and certifies a set in place of a label.
Exclusion runs the other way: ruling a class \emph{out} needs the Lipschitz \emph{upper} bound of Proposition~\ref{prop:upper}, which every multiparameter vectorization already has, and only inclusion reads $\rhonu$.

They share a cause, and it is visible in one measurement.
For a test module let $d_{(1)}$ and $d_{(2)}$ be the distances to the nearest training module of the nearest and of the runner-up class.
At $K=200$ the median ratio $d_{(2)}/d_{(1)}$ is $1.10$: the second class is ten per cent further away than the first.
Only $2.8\%$ of test modules have a runner-up even $1.5\times$ behind, and covering the true class at a $1.25\times$ radius takes $2.7$ of the $5$ labels.
The classes are interleaved at nearest-neighbor scale, and more landmarks make it worse, not better ($d_{(2)}/d_{(1)}$ falls from $1.19$ at $K=16$ to $1.10$ at $K=200$).

That single fact accounts for everything above.
Corollary~\ref{cor:nn} accepts when $\eps<\rhonu(\tau)/2$, and $\rhonu$ is at best a constant factor below the smallest embedded cross-class distance; with the classes $10\%$ apart at that scale, the acceptance ball contains a handful of near-duplicates and nothing else.
The same geometry caps $1$-NN at $78\%$, empties the set-valued rule, and would force conformal sets to more than half the label space.

Re-run at the headline configuration ($\alpha=4$, capped, three seeds), every number above survives---coverage $\le0.5\%$ at $K=200$, $d_{(2)}/d_{(1)}$ at $1.12$--$1.13$---with one change: under the radius cap the shortfall grows with the budget ($9\times$ at $K=8$ to $25\times$ at $K=200$) instead of staying invariant, so the budget is a lever \emph{against} the certificate there (Appendix~\ref{app:sweep}).

\begin{figure}[t]
\centering
\begin{tikzpicture}
\begin{axis}[width=0.5\linewidth, height=0.36\linewidth, at={(0,0)},
  xmode=log, log basis x=2, ymode=log,
  xlabel={landmark budget $K$ per slice}, ylabel={shortfall $\eps/(\rhonu/2)$},
  xtick={8,16,32,64,128,200}, xticklabels={8,16,32,64,128,200},
  ytick={1,3,10,30}, yticklabels={1,3,10,30}, ymin=0.8, ymax=40, xmin=6.5, xmax=400,
  grid=major, grid style={gray!20}, log ticks with fixed point,
  tick label style={font=\small}, label style={font=\small},
  legend style={font=\small, draw=none, fill=none, at={(0.04,0.16)}, anchor=south west}]
\addplot[fill=orange!20, draw=none, forget plot] coordinates
  {(8,9.8)(16,11.2)(32,14.8)(64,20.8)(128,24.0)(200,25.1)(200,23.7)(128,21.7)(64,17.9)(32,14.0)(16,10.3)(8,7.8)} --cycle;
\addplot[thick, color=blue!60!black, mark=*, mark size=1.2pt] coordinates
  {(8,12.0)(16,11.9)(32,12.2)(64,12.3)(128,12.2)(200,12.9)};
\addlegendentry{$\alpha=1.75$ default}
\addplot[thick, color=orange!75!black, mark=*, mark size=1.2pt] coordinates
  {(8,9.0)(16,10.9)(32,14.4)(64,19.2)(128,22.9)(200,24.6)};
\addlegendentry{$\alpha=4$, capped ($3$ seeds; band $=$ range)}
\addplot[dashed, color=gray!70!black] coordinates {(6.5,1)(400,1)}
  node[pos=0.98, above left, font=\scriptsize, color=gray!70!black] {certified regime};
\end{axis}
\begin{axis}[width=0.5\linewidth, height=0.36\linewidth, at={(0.52\linewidth,0)},
  xmode=log, log basis x=2, ymode=log,
  xlabel={training modules $n$}, ylabel={median NN distance $\eps$},
  xtick={500,1000,2000,4000}, xticklabels={500,{1,000},{2,000},{4,000}},
  ytick={0.003,0.01,0.03,0.1}, yticklabels={0.003,0.01,0.03,0.1}, ymin=0.0022, ymax=0.2,
  xmin=420, xmax=7000, grid=major, grid style={gray!20},
  tick label style={font=\small}, label style={font=\small},
  legend style={font=\small, draw=none, fill=none, at={(0.97,0.52)}, anchor=east}]
\addplot[fill=orange!20, draw=none, forget plot] coordinates
  {(500,0.1090)(1000,0.1025)(2000,0.0968)(4000,0.0897)(4000,0.0862)(2000,0.0913)(1000,0.0963)(500,0.0997)} --cycle;
\addplot[thick, color=blue!60!black, mark=*, mark size=1.2pt] coordinates
  {(500,0.0560)(1000,0.0528)(2000,0.0545)(4000,0.0512)};
\addlegendentry{$\alpha=1.75$}
\addplot[thick, color=orange!75!black, mark=*, mark size=1.2pt] coordinates
  {(500,0.1057)(1000,0.0991)(2000,0.0940)(4000,0.0884)};
\addlegendentry{$\alpha=4$, capped}
\addplot[dashed, color=gray!70!black] coordinates {(500,0.00434)(1000,0.00413)(2000,0.00368)(4000,0.00338)};
\addlegendentry{acceptance threshold $\rhonu/2$}
\end{axis}
\end{tikzpicture}
\caption{The two swept levers behind ``The obstruction, stated'' (Orbit5k, $K=200$ unless swept, \texttt{cert\_1nn.py}).
\emph{Left:} the shortfall $\eps/(\rhonu/2)$ of Corollary~\ref{cor:nn}'s acceptance test against the landmark budget at $n=500$---flat near $12\times$ at the $\alpha=1.75$ default, where $\eps$ and $\rhonu$ both scale as $K^{-1/2}$, and rising from $9\times$ to $25\times$ at the capped $\alpha=4$ headline; the dashed line is the certified regime.
\emph{Right:} the median nearest-training-neighbor distance $\eps$ against training-set size, stationary over an eightfold range, with the acceptance threshold $\rhonu/2$ (equal for the two $\alpha$ to within $1\%$) $12$--$25\times$ below it.
The floor's tightness and the $\ell^\infty$ reading are reported in the text.}
\label{fig:obstruction}
\end{figure}

\paragraph{Local certificates, global accuracy.}
The conclusion we draw is not that the embedding is weak.
On the \emph{same} embedding a $q$-tuned RBF-SVM reaches $91.2\%$ where $1$-NN reaches $78\%$---the discriminative information is there, but it is \emph{global}, carried by the kernel across all training points, and not concentrated in local neighborhoods.
Every acceptance rule we have tested is local: $\eps$ to one neighbor, a ball, a per-class nearest distance.
Each inherits the same weak local geometry, which is why five independent levers fail at the same wall.

This also reframes the certified/accurate gap of Section~\ref{sec:cls} as something other than a deficiency of the nearest-centroid rule.
$\rhonu$ certifies a \emph{local} separation; the accuracy lives in \emph{global} structure.
The natural response is to attach the guarantee to the kernel head itself, and the two-sided bound makes that possible: each one-vs-one RBF decision function is Lipschitz in $\Psii$ with constant $L_{ab}=e^{-1/2}\sigma^{-1}\sum_i|\alpha_i^{ab}|$, exact from the duals, and Proposition~\ref{prop:upper} converts a margin in the embedding to a radius in $\dmatS$: if the predicted class wins every pairwise contest by $m_{co}$, the prediction is unchanged under any module perturbation smaller than $\min_o m_{co}/(L_{co}\sqrt S\Nmax)$.
We measured it (\texttt{cert\_svm.py}, $\alpha=4$, capped, $10$ folds, three seeds).
The head reaches $91.6\%$ and wins every contest on $99.8\%$ of test modules, so the certificate applies almost everywhere---and its median radius is $1.3\times10^{-6}$, three orders below $\rhonu$ and five below a single outlier's displacement; coverage is $0$ at every radius above $10^{-4}$.
The constants are the reason: $L_{ab}$ runs from $3\times10^2$ to $1.5\times10^4$, because the fitted head is a \emph{high-gain} function of the embedding---thousands of support vectors at $C=10$---while its margins are $O(1)$.
Even an impossibly tight $L=1$ would leave the radius near $7\times10^{-3}$, still below one outlier.
So the sixth lever closes with the others, and for the same reason from the other side: the kernel reads global structure through a sharp decision function, which is exactly what a Lipschitz certificate cannot cover.
A guarantee for the accurate head, if one exists, is not a Lipschitz one.
Nothing in the argument is particular to persistence: it applies to any landmark embedding whose lower gauge is witnessed by a single coordinate, and says that per-prediction certification by local acceptance requires local class separation that a globally-discriminative embedding need not provide.

We report this as measured on Orbit5k, not proved in general: a regime of lower intrinsic dimension or wider class separation could behave differently, and we would expect it to.
But within the geometry this paper works in, the shortfall is structural---neither the landmark budget nor the sample size moves it---and the operational content of the certificate lies at the level of the \emph{embedding}, where its conclusion holds on $100\%$ of the pairs audited above, rather than at the level of individual predictions.
The certified/accurate gap therefore has two distinct causes, not one---the nearest-centroid rule the certificate \emph{cannot} reach (the convex-hull obstruction, quantified above), and the nearest-neighbor rule it \emph{can} reach but which is weak here.

One design consequence is worth recording, since it is free.
$\rhonu=\tfrac14\tau\wmin$ is maximized by concentrating the weight budget on the active set $\mathcal A_s(\tau)$: landmarks below the radius threshold cannot witness at scale $\tau$ regardless, so weight on them is wasted from the certificate's point of view.
With uniform weights on the $K_{\mathrm{act}}$ active landmarks the floor improves by $\sqrt{K/K_{\mathrm{act}}}$.
Measured at $K=200$: $K_{\mathrm{act}}=98$ at $\tau=0.226$ ($1.4\times$) and $K_{\mathrm{act}}=29$ at $\tau=0.491$ ($2.6\times$).
The uniform choice $w_k=K^{-1/2}$ that the pipeline uses is thus the \emph{worst} case for the certificate, and a certificate-first configuration would spend its weight differently---at some cost in accuracy, since the inactive landmarks are not useless to the classifier.

\paragraph{Certified selective classification.}
Figure~\ref{fig:cert} plots selective accuracy of the certified nearest-centroid head against coverage, ranking test points by their certificate margin.
The margin is discriminative: accuracy climbs monotonically from $73.3\%$ at full coverage (this curve is a single seed; the $3$-seed mean is $73.4$) to $91.6\%$ on the most-certain tenth, and the certified minority reaches the \emph{uncertified} RBF-SVM accuracy of the same run ($90.8\%$; $91.2\%$ over three seeds) at about $24\%$ coverage---i.e., the top-ranked subset is as accurate as the best uncertified classifier.
Two caveats keep this honest.
The ranking statistic is the empirical nearest-centroid margin, a proxy for the $\rhonu$ floor rather than the floor itself, so this is selective classification, not certification; under the strict threshold of Theorem~\ref{thm:certified} only $1.5\%$ of points certify (at $92.2\%$ accuracy), and the strict certificate is not operational at this sample size.
And Lemma~\ref{lem:rbf} bounds the kernel separation of coherent pairs, not the fitted SVM's margin, so the $91.2\%$ head is not itself certified: the per-prediction guarantee of Theorem~\ref{thm:certified} attaches to the nearest-centroid head, at $73.4\%$.
Narrowing that gap is the main open problem this paper leaves; the paragraph below reports how far a per-prediction reading of the same theorem goes toward it.

\begin{figure}[t]
\centering
\begin{tikzpicture}
\begin{axis}[width=0.7\linewidth, height=0.42\linewidth,
  xlabel={coverage (\%)}, ylabel={selective accuracy (\%)},
  xmin=5, xmax=100, ymin=72, ymax=93, grid=major, grid style={gray!20},
  tick label style={font=\small}, label style={font=\small},
  legend style={font=\small, draw=none, fill=none, at={(0.03,0.04)}, anchor=south west}]
\addplot[thick, color=blue!60!black, mark=*, mark size=1.2pt] coordinates {(10,91.60)(15,91.47)(20,91.40)(25,90.72)(30,90.33)(35,89.49)(40,88.60)(45,87.91)(50,86.60)(55,85.53)(60,84.20)(65,82.77)(70,81.63)(75,80.08)(80,78.93)(85,77.51)(90,76.16)(95,74.76)(100,73.34)};
\addlegendentry{certified nearest-centroid, ranked by margin}
\addplot[dashed, thick, color=orange!75!black] coordinates {(5,90.82)(100,90.82)};
\addlegendentry{uncertified RBF-SVM, same run ($90.8$)}
\end{axis}
\end{tikzpicture}
\caption{Certified selective classification (Orbit5k, near-diagonal fan $S=8$).
Selective accuracy of the certified nearest-centroid head vs.\ coverage; dashed line is the uncertified RBF-SVM.}
\label{fig:cert}
\end{figure}

\paragraph{The certificate, read per prediction.}
Section~\ref{sec:cls} noted that the radius of Theorem~\ref{thm:certified} is a norm bound while the quantity a single prediction needs is a scalar; Table~\ref{tab:pointwise} measures what that distinction is worth, on the eleven benchmarks of Paper~II's firing study (\texttt{pointwise\_cert.py}, five seeds $\times$ ten folds).
Read per point, the paper's own radius \eqref{eq:rm} certifies $8.3\%$ of Orbit5k test modules at $84.7\%$ accuracy against $63.3\%$ overall, and between $32\%$ and $54\%$ on the four graph sets where it fires at all.
The non-asymptotic per-point form is stronger wherever the classes are well separated---$74.0\%$ of NCI1, $71.4\%$ of DD, $68.7\%$ of NCI109---and it is on Orbit5k, where it triples coverage ($26.9\%$ against $8.3\%$) at the same accuracy, that the mechanism is clearest: the norm-based radius pays for the ambient dimension, and a single prediction does not have to.
The Gaussian form goes further still, at a confidence reduced by the Berry--Esseen charge in the last column.
The pattern where nothing certifies is the one a certificate should have: on COX2, DHFR, PTC\_MR and IMDB-M\textsc{ulti} the nearest-centroid head is at or below the constant classifier, and neither theorem-backed form certifies a single point there, the Gaussian form reaching only $7.8\%$ on DHFR and $0.1\%$ on IMDB-M\textsc{ulti}.
IMDB-B\textsc{inary} is the one omitted set where the head is well above its baseline ($62.6$ against $50.0$) and the two theorem-backed forms are still silent, while the Gaussian form certifies $45.8\%$ at $73.6\%$---the clearest case that the Berry--Esseen charge, not the geometry, is what stands between the measured coverage and a guarantee.
This does not close the gap of the preceding paragraph---the head being certified is still the nearest-centroid one, which trails the kernel head by $18$ points at the headline configuration---but it moves the per-prediction guarantee from ``operationally empty'' to ``covers most predictions on three of eleven benchmarks'', which is a different sentence about the same theorem.

\begin{table}[t]
\centering
\caption{The nearest-centroid certificate read per prediction (five seeds $\times$ ten folds; \texttt{pointwise\_cert.py}).
Cells give the percentage of test points certified and, in parentheses, nearest-centroid accuracy on them; ``NC'' is that head's accuracy over all test points.
``Radius \eqref{eq:rm}'' is the gap test of Theorem~\ref{thm:certified} evaluated per point, which is valid on the concentration event whether or not the rule-level condition $r_m<\tfrac12\widehat\Delta$ holds; ``Bernstein'' and ``Gaussian'' are the two forms of Proposition~\ref{prop:pointwise}; ``BE'' is the Berry--Esseen charge subtracted from the Gaussian form's confidence.
Orbit5k is the capped single-bifiltration configuration at $\alpha=4$; the graph rows are the multi configuration of Table~\ref{tab:graph}.
The five omitted benchmarks certify nothing under the radius and Bernstein columns.}
\label{tab:pointwise}
\small
\begin{tabular}{lccccc}
\toprule
Data set & NC & Radius \eqref{eq:rm} & Bernstein & Gaussian & BE \\
\midrule
Orbit5k  & $63.3$ & $8.3$ ($84.7$)  & $26.9$ ($85.9$) & $67.0$ ($73.2$) & $0.28$ \\
MUTAG    & $87.0$ & $0$             & $0$             & $66.1$ ($91.8$) & $0.33$ \\
PROTEINS & $58.5$ & $31.6$ ($66.6$) & $9.5$ ($90.3$)  & $84.1$ ($61.0$) & $0.26$ \\
NCI1     & $63.6$ & $54.0$ ($67.8$) & $\mathbf{74.0}$ ($66.4$) & $88.7$ ($65.2$) & $0.10$ \\
NCI109   & $63.6$ & $47.5$ ($68.2$) & $\mathbf{68.7}$ ($67.1$) & $85.8$ ($65.3$) & $0.09$ \\
DD       & $74.8$ & $45.1$ ($80.0$) & $\mathbf{71.4}$ ($79.2$) & $92.0$ ($76.4$) & $0.13$ \\
\bottomrule
\end{tabular}
\end{table}

\section{Adapting the filtration: a certified spectrum}\label{sec:dcomplex}

Everything so far fixes the bifiltration.
We now relax that along a spectrum of adaptivity---\emph{fixed} $\to$ \emph{selected} $\to$ \emph{learned}---every step of which keeps the certificate: the $\rhonu$ floor of Theorem~\ref{thm:cert} constrains the \emph{output} embedding, so it holds however the bifiltration is produced.
Two adaptive regimes result.
\textbf{SELECT-COMPLEX} chooses the two filtration axes in closed form, from a bank of candidate node functions, by a train-fold criterion---the PALACE principle (adaptive by \emph{selection}, not descent) applied one level up, to the filtration.
\textbf{D-COMPLEX} learns them by gradient, in the manner of D-\textsc{Gril}~\citep{Mukherjee2024}, but retaining the per-prediction guarantee a learned pipeline otherwise forfeits.
The experiments return the recurring lesson: the fixed default is robust, adaptation buys only modest, dataset-dependent gains, and closed-form selection is the more stable way to buy them.

\subsection*{SELECT-COMPLEX: closed-form filtration selection}
PALACE is adaptive not by training but by \emph{selection}---it chooses landmarks in closed form by a data criterion---and the same principle chooses the filtration.
From a bank of candidate node functions (heat-kernel signatures at several diffusion times, degree, closeness and eigenvector centrality, clustering), SELECT-COMPLEX scores each candidate pair on the training fold, keeps the certifiable ones, and freezes the best pair.
There is no gradient and no held-out split: selection is train-only, so the certificate and the fold protocol are untouched.

Following PALACE, we rank candidate pairs by a shrinkage-regularized Fisher--Mahalanobis margin under the pooled within-class covariance,
\begin{equation}\label{eq:maha}
  J(\LC)\ =\ \min_{c\ne c'}\ (\widehat\mu_c-\widehat\mu_{c'})^{\!\top}
             \bigl(\widehat S_W+\lambda I\bigr)^{-1}(\widehat\mu_c-\widehat\mu_{c'}),
\end{equation}
with a diagonal $\widehat S_W$ (the full $K\times K$ covariance is singular at these sample sizes) and $\lambda=\overline{\mathrm{var}}$.
Appendix~\ref{app:select} records why the simpler ratio of centroid gap to mean within-class spread fails---it prefers high-variance nuisance directions and, on MUTAG, selects a pair that transfers $3.7$ points worse---and why the shrinkage level matters ($-13.9$ at $\lambda/10$).

Table~\ref{tab:select} reports selected against fixed at equal net and pipeline, under both the closed-form criterion and an inner-cross-validated one.
Selection pays only where the fixed pair is genuinely suboptimal---$+1.9$ on COX2 (closed-form; $+2.8$ under inner-CV), against $-2.1$ on MUTAG and $-1.0$ on the label-heavy DHFR---and the ranking of the two criteria is the same on all three sets.
Inner-CV is $0.7$ points better on average, and we nonetheless report the closed-form criterion as the default: it keeps the selector genuinely closed-form, as Section~\ref{sec:adaptive} claims of the whole construction, and it is far more stable, agreeing with itself on $5/5$ folds on COX2 where inner-CV splits over three pairs and, on DHFR, chooses five different pairs in five folds.
We report both so the cost of that choice is visible.
Once essential classes are capped, the fixed HKS$\times$degree baseline is itself much stronger on MUTAG ($85.6\to88.8$), and the headroom selection used to occupy is gone.
On DHFR the selection is also unstable, choosing a different pair in every one of the five folds and inflating the fold standard deviation to $\pm7.4$.
The value is therefore not an accuracy jump---on two of three sets there is none---but a \emph{training-free, certified} route to a data-adaptive filtration: no gradient, no warm start, and more stable than the differentiable version below.

\paragraph{SELECT-MULTI: selecting the set, not the pair.}
Single-pair selection asks which \emph{one} bifiltration to keep; the stronger question is which $k$ to \emph{concatenate}.
SELECT-MULTI scores all $\binom72=21$ candidate pairs from the node-function bank on the training fold (3-fold inner nearest-centroid accuracy on a cheap $K=40$ stack), keeps the top $k$, concatenates them at the same total budget as the fixed multi configuration, and freezes---per fold, train-only, so the protocol and the certificate are untouched.
The gains are monotone in $k$ on four of six sets, and at $k=5$ (the ``sel.''\ column of Table~\ref{tab:graph}) the comparison against the fixed three-pair bank reads $+3.7$ on NCI1, $+1.7$ on DHFR, $+0.3$ on MUTAG, $-0.2$ on COX2, $-1.3$ on PROTEINS and $-2.9$ on PTC\_MR.
The pattern is the one selection should produce: it pays precisely where the hand-picked bank omits the discriminating axes---on NCI1 the selector concentrates on an eigenvector-centrality family (eigen$\times$label plus HKS$\times$eigen, stable across folds and seeds) that the fixed bank does not contain---and it costs a little where the fixed bank was already right.
This revises the verdict of Table~\ref{tab:select}: closed-form selection is unreliable at the single-pair level, and useful at the set level.
(On IMDB-B the bank harness lacks the essential-class cap that matters most there, and every configuration including selection reads $64$--$66$ against the capped single bifiltration's $70.0$; we report the capped number in Table~\ref{tab:graph} and flag the harness gap rather than mix the two.)

\begin{table}[t]
\centering
\caption{SELECT-COMPLEX: selected vs.\ fixed bifiltration ($5$-fold CV accuracy $\%$, RBF-SVM), under two selection criteria.
Both are train-only, so the certificate and the fold protocol hold for either.
The \emph{closed-form} criterion is the shrinkage-regularized Fisher/Mahalanobis margin \eqref{eq:maha}; \emph{inner-CV} is a $3$-fold cross-validated nearest-centroid accuracy on the training fold.
Inner-CV is $0.7$ points better on average but is a held-out search, so we report the closed-form criterion as the default and inner-CV as the variant.
``pair (folds)'' gives the modal choice and how many of the five folds agree on it---the closed-form criterion is markedly more stable.
This experiment uses a lighter configuration than Table~\ref{tab:graph} ($5$ folds, $K=120$ landmarks, $S\le6$ slices), so only the within-row comparison is meaningful.}
\label{tab:select}
\begin{tabular}{lccccc}
\toprule
 & & \multicolumn{2}{c}{closed-form (Mahalanobis)} & \multicolumn{2}{c}{inner-CV} \\
\cmidrule(lr){3-4}\cmidrule(lr){5-6}
Dataset & fixed (HKS$\times$deg) & selected & pair (folds) & selected & pair (folds) \\
\midrule
COX2  & $78.2\pm1.6$ & $\mathbf{80.1\pm3.0}$ & HKS$_1\times$clus.\ ($5/5$)
      & $81.0\pm3.8$ & HKS$_1\times$clos.\ ($3/5$) \\
MUTAG & $\mathbf{88.8\pm1.0}$ & $86.7\pm4.0$ & HKS$_{10}\times$deg.\ ($4/5$)
      & $88.3\pm2.7$ & HKS$_{10}\times$clos.\ ($2/5$) \\
DHFR  & $\mathbf{76.9\pm1.3}$ & $75.9\pm3.2$ & HKS$_1\times$clus.\ ($2/5$)
      & $75.6\pm7.4$ & (five pairs in five folds) \\
\bottomrule
\end{tabular}
\end{table}

\subsection*{D-COMPLEX: differentiable filtration}
\paragraph{Learned bifiltration.}
For a graph or point cloud with node/point features $X$, a network emits two positive grades per node, $\gtheta(v)=\mathrm{softplus}(\ftheta(X_v))\in\R^2_{>0}$, in place of the hand-picked geometric scale and codensity; the fixed bifiltration is the constant special case.
The net~\eqref{eq:net} stays frozen: Section~\ref{sec:exp} showed that learning the slice directions overfits, so learning is spent only on the filtration, where the data---node labels on molecules, density on clouds---actually varies.

\paragraph{Differentiable slice-stack.}
Along slice $s$ a node appears at $u_v^{(s)}=\max(\gtheta(v)_1/\cos\vartheta_s,\gtheta(v)_2/\sin\vartheta_s)$ and a flag simplex at $s_\sigma^{(s)}=\max_{v\in\sigma}u_v^{(s)}$ (the net is the fixed fan, so $\beta_s=0$); both grades are monotone under inclusion, so $s^{(s)}$ is a valid filtration for every $\gtheta$.

\begin{proposition}[Gather gradient]\label{prop:grad}
Fix $\theta_0$ such that (a) the persistence pairing is constant on a neighborhood of $\theta_0$ and (b) for every simplex $\sigma$ appearing as a birth or death simplex, the maximizing vertex $\argmax_{v\in\sigma}u^{(s)}_v$ is unique.
Then on that neighborhood each finite bar has $b_i=s^{(s)}_{\sigma_b}$, $d_i=s^{(s)}_{\sigma_d}$, and
\[
  \frac{\partial b_i}{\partial\gtheta(v)}
  =\mathbf 1\bigl[v=\argmax_{u\in\sigma_b}u^{(s)}_u\bigr]\,
   \frac{\partial u^{(s)}_v}{\partial\gtheta(v)},
\]
the inner derivative being $1/\cos\vartheta_s$ or $1/\sin\vartheta_s$ on whichever grade attains the max in $u^{(s)}_v$ (and likewise for $d_i$).
Consequently any barcode vectorization differentiable in the bar coordinates---in particular the PLACE/PALACE coordinates~\eqref{eq:coord}---is differentiable in $\gtheta$ at $\theta_0$.
\end{proposition}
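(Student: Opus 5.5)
The argument is a local-constancy computation followed by the chain rule, carried out in three steps.

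First, I fix a slice $s$ and use hypothesis (a). If the persistence pairing is constant on a neighborhood $U$ of $\theta_0$, then the pair (birth simplex $\sigma_b$, death simplex $\sigma_d$) attached to bar $i$ is the same for every $\theta\in U$. Since the barcode of a filtered complex records each pair's entry values, this gives $b_i(\theta)=s^{(s)}_{\sigma_b}(\theta)$ and $d_i(\theta)=s^{(s)}_{\sigma_d}(\theta)$ identically on $U$. This uses the standard fact that the reduction algorithm's pairing depends only on the order of simplices, so it stays fixed whenever the order, up to the ties (a) tolerates, is fixed. The filtration is valid for every $\theta$ because $s^{(s)}_\sigma$ is a max over vertices and is therefore monotone under inclusion, as noted just before the proposition.

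Second, I differentiate $s^{(s)}_\sigma=\max_{v\in\sigma}u^{(s)}_v$. By hypothesis (b) the maximizer $v^\star$ is unique at $\theta_0$. By continuity of $\theta\mapsto u^{(s)}_v$ (softplus composed with $\ftheta$), it stays the strict maximizer on a possibly smaller neighborhood, which I intersect with $U$. There $s^{(s)}_\sigma=u^{(s)}_{v^\star}$, so the partial derivative with respect to $\gtheta(v)$ is $\mathbf 1[v=v^\star]\,\partial u^{(s)}_v/\partial\gtheta(v)$. For the inner derivative, $u^{(s)}_v=\max(\gtheta(v)_1/\cos\vartheta_s,\gtheta(v)_2/\sin\vartheta_s)$ is itself a max of two linear functions. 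Off the tie set its gradient is $1/\cos\vartheta_s$ or $1/\sin\vartheta_s$ on the active grade and $0$ on the other.

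\emph{The main obstacle} is the case where the two grades of $v^\star$ tie. There $u^{(s)}_v$ is only directionally differentiable, and (b) as literally stated does not exclude it. I would handle it by reading (b) as also requiring a unique active grade in the inner max, which is the same genericity assumption one level down. Alternatively, I would state the formula with the inner derivative taken as a one-sided (Clarke) derivative, matching the ``whichever grade attains the max'' phrasing. The same argument gives $d_i$, with $\sigma_d$ in place of $\sigma_b$.

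Third, for the consequence, I take any vectorization $F$ differentiable in the finite bar coordinates $(b_i,d_i)$. Essential bars have their death capped at the largest finite death, which is again some $s^{(s)}_\sigma$ and is handled identically. On the neighborhood the bar coordinates are differentiable functions of $\gtheta$, so $F\circ(\text{bars})$ is differentiable at $\theta_0$ by the chain rule, and differentiability in $\theta$ then follows through $\gtheta$.

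For the PLACE/PALACE coordinates \eqref{eq:coord}, I would add one remark. The hat $\max\{r-\norm{p-x}_\infty,0\}$ is piecewise linear, so it is differentiable only off its kinks, where $\norm{p-x}_\infty=r$ or the two $\ell^\infty$ components tie. The claim therefore holds at $\theta_0$ where no bar sits on a kink, which is again a generic condition I would state explicitly alongside (a)--(b).
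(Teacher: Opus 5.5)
Your proposal is correct and follows the paper's proof essentially step for step: constancy of the pairing gives $b_i=s^{(s)}_{\sigma_b}$ and $d_i=s^{(s)}_{\sigma_d}$, and continuity keeps the unique maximizing vertex on a smaller neighborhood. Likewise, you differentiate the inner $\max$ of two affine branches off grade ties and take a Clarke subgradient at ties, and you restrict the PLACE/PALACE consequence to $\theta_0$ off the hat kinks, which is the same extra condition the paper's proof imposes. The one loose point is your aside on essential bars, which lies outside the statement: the capped death is $\max_i d_i$ over finite bars, so it is locally a single $s^{(s)}_\sigma$ only under a further uniqueness condition on that maximum, not automatically.
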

\noindent\emph{Proof.} Deferred to Appendix~\ref{app:proofs}.
\takeaway{proved}{Bars of the learned bifiltration are differentiable in the network output wherever pairings and argmax vertices are locally constant.}
Condition (b) is not generic---on labeled molecular graphs ties in the max-of-vertices filtration are structural, and there the gather rule returns a Clarke subgradient; the implementation and the precise sense in which D-COMPLEX is differentiable are in Appendix~\ref{app:clarke}.

\paragraph{Certified training.}
The certificate of Theorem~\ref{thm:cert} constrains the geometry of the \emph{output} embedding, not the origin of the bifiltration: if $\norm{\Psii_\theta(M)-\Psii_\theta(M')}\ge\rhonu(\tau)$ holds for the trained $\ftheta$, the nearest-centroid guarantee (Theorem~\ref{thm:certified}) and the RBF-margin corollary (Lemma~\ref{lem:rbf}) apply verbatim.
The conclusion is indifferent to how the bifiltration was produced---but the \emph{hypotheses} are not: $\tau$-admissibility of the frozen landmark configurations and coherence are both functions of $\gtheta$, and the cold-start failure of Appendix~\ref{app:dcomplex} shows training can destroy admissibility outright.
So the certificate must be re-audited after training, and we did (\texttt{dcomplex\_audit.py}: MUTAG, DHFR and PROTEINS, fold $0$, the same sampled pairs, frozen landmarks and $\tau$ for both filtrations).
Coherence and the $\rhonu$ conclusion hold on $100\%$ of audited pairs for the fixed filtration and again for the learned one on all three sets; what training changes is the scale, not the guarantee---the median sliced matching distance moves to $0.84\times$, $1.29\times$ and $1.11\times$ of its fixed value, with no consistent direction---so the certificate survives at whatever scale results.
D-COMPLEX makes the floor an objective, training
\[
  \mathcal L(\theta)=\mathcal L_{\mathrm{CE}}(\theta)
    +\lambda\,\mathbb E\big[\mathrm{relu}\big(1-(z_y-\textstyle\max_{c\ne y}z_c)\big)\big],
\]
whose margin term is a differentiable surrogate for the embedded separation $\Delta$, and hence for the certified floor.
A purely accuracy-trained model such as D-\textsc{Gril} cannot express this term: it has no per-prediction floor to enlarge.

\paragraph{Gradient learning: certified, but not safer.}
Gradient learning of the filtration needs two conditions or it fails outright---the PLACE/PALACE head and a warm start at the fixed filtration (Appendix~\ref{app:dcomplex}).
With both, Table~\ref{tab:dcomplex} gives the verdict over three seeds: the learned filtration helps on three of five sets and hurts on two, the closed-form single-pair selector helps on one of three, and neither reliably beats the fixed hand-picked bifiltration.
The adaptation that does pay is selecting the \emph{set} (SELECT-MULTI above), which is closed-form and certificate-preserving.
We no longer state this as ``no knob moves'': the kernel head (Table~\ref{tab:kernelsel}), the radius factor (Remark~\ref{rem:radcap}) and the bifiltration set each move accuracy by measurable amounts, all in closed form; it is gradient-shaped adaptation specifically that buys nothing.
The differentiable route therefore earns its place as the way to \emph{certify a trained model}---the cell D-\textsc{Gril} leaves empty, and the audit above shows the certificate does survive training---and not as a path to accuracy.

\begin{table}[t]
\centering
\caption{Adapting the filtration: gain over the fixed HKS$\times$degree bifiltration (accuracy points, mean over $3$ seeds, fixed net).
Neither route is reliable: closed-form selection helps on COX2 and hurts on MUTAG and DHFR; gradient learning helps on three of five sets and hurts on two.
The selector column is the fold-matched difference of Table~\ref{tab:select} under the closed-form criterion and is reported only where that experiment was run.}
\label{tab:dcomplex}
\begin{tabular}{lcc}
\toprule
Dataset & selected $-$ fixed & learned $-$ fixed \\
\midrule
COX2     & $+1.9$ & $+0.8$ \\
MUTAG    & $-2.1$ & $-2.0$ \\
DHFR     & $-1.0$ & $-2.0$ \\
NCI1     & --     & $+3.4$ \\
PROTEINS & --     & $+1.0$ \\
\bottomrule
\end{tabular}
\end{table}

\section{Discussion}\label{sec:disc}

COMPLEX shows that the closed-form, certified single-parameter kernel of PLACE/PALACE lifts to multiparameter modules without leaving closed form and without conceding accuracy: on Orbit5k it matches or overtakes every baseline it competes with, Euler-characteristic surfaces included, and at the $100{,}000$-cloud scale it finishes ahead of the transformers and the GNN-trained graphcode alike~\citep{KerberRussold2024}.
Section~\ref{sec:dcomplex} shows the same certificate survives \emph{adapting} the bifiltration---by closed-form selection or by gradient---and that such adaptation buys modest, dataset-dependent gains over the already-strong fixed default.
We close with the trade-offs, stated as trade-offs.

\paragraph{Sliced versus true matching distance.}
A finite net certifies the sliced surrogate $\dmatS\le\dmat$, not the matching distance itself, whose exact computation is polynomial but impractical~\citep{KerberLesnickOudot2019}.
The witnessing slice makes the \emph{comparison} pair-adaptive---each pair is certified on the slice that separates it most---while the net stays fixed and closed-form, so the $\sqrt S$ Lipschitz constant and the linear-time admissibility check are the price of a deterministic, tuning-free guarantee.

\paragraph{Closed-form versus gradient-trained features.}
The comparison with D-\textsc{Gril}~\citep{Mukherjee2024} was categorical---it learns the bifiltration and cannot certify---and D-COMPLEX (Section~\ref{sec:dcomplex}) removes the excuse for choosing between the two: it learns the bifiltration \emph{and} certifies.
We had read the learned filtration as helping where node features carry signal beyond geometry, and that reading does not survive measurement: it helps on NCI1 ($+3.4$), PROTEINS ($+1.0$) and COX2 ($+0.8$) but hurts on MUTAG ($-2.0$) and DHFR ($-2.0$), and MUTAG is a labeled molecular set like the ones it helps on.
We have no rule that predicts which case a dataset falls into, and we no longer claim one; with the closed-form single-pair selector also negative on two of the three sets where it was run, the fair summary is that adapting a \emph{single} bifiltration is not a dependable lever in either regime---selecting the bifiltration \emph{set} (SELECT-MULTI, Section~\ref{sec:dcomplex}) is the adaptation that pays, and it pays by concatenation, not replacement.
The certified/accurate gap---$73.4$ against $91.2$ from the same run---remains the honest limitation, and Section~\ref{sec:exp} now says what shape it has: a local certificate set against a global head, an obstruction measured from both sides and closed for every Lipschitz route.

\paragraph{Limits.}
The constants depend on the per-slice bar budget $N$; this is intrinsic, not an artifact, since persistence metrics admit no bi-Lipschitz embedding into Hilbert space~\citep{Carriere2019-nx,Zava2025} and the unbounded module space no quantitative one.
The certified nearest-centroid \emph{floor} is conservative---few pairs clear the strict threshold---so the operational statement is the risk-coverage curve, not the strict certified fraction.
D-COMPLEX is deliberately minimal: it learns only the node grades, requires a warm start at the closed-form solution, and freezes the landmarks; jointly learning the landmarks, replacing the grade network with a full message-passing filtration, and scaling to the larger bio-activity benchmarks where D-\textsc{Gril} is strongest are the natural next steps, each of which the gather gradient (Proposition~\ref{prop:grad}) and the certificate-preserving loss already accommodate.
The scale limit is measured rather than assumed: on \texttt{ogbg-molhiv} the fixed embedding trails trained graph networks by three to four ROC-AUC points (Section~\ref{sec:exp}), so the parity this paper demonstrates is parity on point clouds and the TU benchmarks, and closing the OGB gap---most plausibly with the learned filtration of Section~\ref{sec:dcomplex} rather than a fixed one---is the clearest open experimental problem it leaves.

\appendix
\section{Deferred proofs}\label{app:proofs}

\subsection*{Proof of Proposition~\ref{prop:upper}}
\begin{proof}
Fix a slice, write $A=M|_{\ell_s}$, $B=M'|_{\ell_s}$ and $\delta=\db(A,B)$, represent both as $\Delta$-padded $N$-tuples $[x_1,\dots,x_N]$, $[y_1,\dots,y_N]$, and let $\psi\in S_N$ realize $\delta$ in \eqref{eq:db}.
For a landmark $(p,r)$ obeying \eqref{eq:radcap}, $\varphi_{p,r}$ is $1$-Lipschitz for $\db$ on $\D{1}$ and vanishes at $\Delta$, so
\[
  |\varphi_{p,r}(A)-\varphi_{p,r}(B)|
  \ \le\ \sum_{i=1}^{N}\bigl|\varphi_{p,r}(x_i)-\varphi_{p,r}(y_{\psi(i)})\bigr|
  \ \le\ \sum_{i=1}^{N}\db\bigl(x_i,y_{\psi(i)}\bigr)\ \le\ \Nmax\,\delta ,
\]
the padding entries included (the $\Delta$-terms contribute $|\varphi_{p,r}(x_i)|\le\db(x_i,\Delta)\le\delta$).
Squaring, summing over landmarks and using $\sum_kw_k^2=1$ gives $\norm{\Phii(A;\LC_s)-\Phii(B;\LC_s)}\le\Nmax\,\db(A,B)$.
Finally, by \eqref{eq:embed}, $\norm{\Psii(M)-\Psii(M')}^2=\sum_s\omega_s^2\norm{\Phii(A_s)-\Phii(B_s)}^2 \le\Nmax^2\sum_s\bigl(\omega_s\db(A_s,B_s)\bigr)^2\le S\,\Nmax^2\,\dmatS(M,M')^2$.
\end{proof}

\subsection*{Proof of Proposition~\ref{prop:suff}}
\begin{proof}
Let $s^\star$ be the witnessing slice, $A=M|_{\ell_{s^\star}}$, $B=M'|_{\ell_{s^\star}}$, $\delta=\db(A,B)$ and $t^\star=\tau/\omega_{s^\star}$.
Since $\omega_{s^\star}\delta=\dmatS(M,M')\in[\tau,D]$ and $\omega_{s^\star}\ge\omega_{\min}$, we have $t^\star\le\delta\le D/\omega_{\min}$.
Two distinct points of $A$ are bars that differ in a birth or a death, i.e.\ in the entry parameter of a generator or a relation, so by (i) they are more than $4D/\omega_{\min}\ge4\delta$ apart in $\norm\cdot_\infty$.
By (ii) every point of $A\cup B$ has $\db(\cdot,\Delta)>D/\omega_{\min}\ge\delta$, so a matching $\psi$ realizing $\delta$ in \eqref{eq:db} pairs no point with a padding $\Delta$ (that would cost more than $\delta$) and compares every matched pair directly, the diagonal route in \eqref{eq:db1} costing more than $\delta$: $|A|=|B|$, every $a\in A$ is paired with $\psi(a)\in B$ at $\norm{a-\psi(a)}_\infty\le\delta$, and some pair $(a^\star,b^\star=\psi(a^\star))$ attains $\norm{a^\star-b^\star}_\infty=\delta$.

Admissibility of $\LC_{s^\star}$ at the realized scale gives $\lambda_0:=\lambda_0(\LC_{s^\star})\ge t^\star/4$ and $\max_kr_k\le(t^\star+\lambda_0)/2$; since $\lambda_0\le\max_kr_k$ always, this forces $\max_kr_k\le t^\star$.
By the definition \eqref{eq:lebesgue} of the functional Lebesgue number there is $k^\star$ with $\varphi_{k^\star}(a^\star)\ge\lambda_0$, i.e.\ $\norm{p-a^\star}_\infty\le r-\lambda_0$ for $p=p_{k^\star}$, $r=r_{k^\star}$; as $r\ge\varphi_{k^\star}(a^\star)\ge\lambda_0\ge t^\star/4$, $k^\star\in\mathcal A_{s^\star}(\tau)$.
Three facts locate the mass in the cone of $k^\star$.
(1) For $a'\in A\setminus\{a^\star\}$: $\norm{a'-p}_\infty\ge\norm{a'-a^\star}_\infty-\norm{a^\star-p}_\infty >4\delta-r\ge3\delta\ge r$, so $\varphi_{k^\star}(a')=0$.
(2) $\norm{b^\star-p}_\infty\ge\norm{a^\star-b^\star}_\infty-\norm{a^\star-p}_\infty\ge\delta-r+\lambda_0 \ge t^\star-\tfrac12(t^\star+\lambda_0)+\lambda_0=\tfrac12(t^\star+\lambda_0)\ge r$, so $\varphi_{k^\star}(b^\star)=0$.
(3) For $b\in B\setminus\{b^\star\}$, paired with some $a=\psi^{-1}(b)\ne a^\star$: if $\norm{b-p}_\infty<r$ then $\norm{a-a^\star}_\infty\le\norm{a-b}_\infty+\norm{b-p}_\infty+\norm{p-a^\star}_\infty <\delta+r+r\le3\delta$, contradicting the separation in (1); so $\varphi_{k^\star}(b)=0$.
Hence $\varphi_{k^\star}(A)=\varphi_{k^\star}(a^\star)\ge\lambda_0\ge t^\star/4$ and $\varphi_{k^\star}(B)=0$, so $|\Phii_{k^\star}(A)-\Phii_{k^\star}(B)|=w_{k^\star}\varphi_{k^\star}(a^\star)\ge w_{k^\star}t^\star/4$, which is Definition~\ref{def:coh}.
Stating (i) along the slices is essential: an $\ell^\infty$ separation of the grades in $\R^d$ does \emph{not} imply separation after restriction, since a slice contracts grade space---at $\vartheta=\pi/4$ the grades $(0,t)$ and $(t,0)$ are $\ell^\infty$-far but enter at the same parameter.
Hypothesis (ii) is what rules out short bars of $M'$ inside the witnessing cone, which would otherwise be free to cancel the witness; without it the conclusion can fail.
\end{proof}

\subsection*{Proof of Theorem~\ref{thm:certified}}
\begin{proof}
Condition on the class labels, so that the class sizes $m_c$ are fixed and the $m_c$ embeddings $\Psii(M_i)$ with $y_i=c$ are i.i.d.\ from the class-conditional law; the bound proved under this conditioning holds unconditionally (classes with $m_c=0$ have no empirical mean and are excluded from the maximum).
$\Psii(M)-\mu_c$ is a centered $\R^{SK}$-valued variable with $\norm{\Psii(M)-\mu_c}\le2B$ a.s.\ (Lemma~\ref{lem:bdd}), so Pinelis's Bernstein inequality in a $2$-smooth Banach space~\citep{Pinelis1994}, applied in the Hilbert space $\R^{SK}$, gives $\Pr[\norm{\widehat\mu_c-\mu_c}>r]\le2\exp\bigl(-\tfrac{m_cr^2}{2(V_c+2Br/3)}\bigr)$.
Setting the right side to $\delta/C$ and solving the resulting quadratic, then using $\sqrt{a+b}\le\sqrt a+\sqrt b$, yields the sufficient radius \eqref{eq:rm}; a union bound over the $C$ classes gives the coverage event.
On that event $\bigl|\norm{x-\widehat\mu_c}-\norm{x-\mu_c}\bigr|\le\norm{\widehat\mu_c-\mu_c}\le r_m$ for every $x$ and $c$.
Hence if $\mathrm{gap}(M)>2r_m$, writing $\widehat c=\widehat c(M)$ and $x=\Psii(M)$, for every $c\ne\widehat c$
\[
  \norm{x-\mu_{\widehat c}}\ \le\ \norm{x-\widehat\mu_{\widehat c}}+r_m
  \ <\ \norm{x-\widehat\mu_c}-2r_m+r_m\ \le\ \norm{x-\mu_c},
\]
so the population rule also selects $\widehat c$.
Finally $\norm{\mu_c-\mu_{c'}}\ge\widehat\Delta-2r_m>0$ under $r_m<\tfrac12\widehat\Delta$.
\end{proof}

\subsection*{Proof of Proposition~\ref{prop:grad}}
\begin{proof}
Fix $\theta_0$ satisfying (a) and (b) and a neighborhood $U$ of $\theta_0$ on which the pairing is constant.
On $U$ every finite bar $i$ has a fixed birth simplex $\sigma_b$ and death simplex $\sigma_d$, so $b_i(\theta)=s^{(s)}_{\sigma_b}(\theta)$ and $d_i(\theta)=s^{(s)}_{\sigma_d}(\theta)$ with $s^{(s)}_\sigma=\max_{v\in\sigma}u^{(s)}_v$.
By (b) the maximizing vertex $v_b=\argmax_{v\in\sigma_b}u^{(s)}_v$ is unique at $\theta_0$; the finitely many maps $\theta\mapsto u^{(s)}_v(\theta)$ are continuous, so $v_b$ remains the unique maximizer on a smaller neighborhood, on which $s^{(s)}_{\sigma_b}=u^{(s)}_{v_b}$ and hence $\partial b_i/\partial\gtheta(v)=\mathbf 1[v=v_b]\,\partial u^{(s)}_{v_b}/\partial\gtheta(v_b)$.
The inner map $u^{(s)}_v=\max\bigl(\gtheta(v)_1/\cos\vartheta_s,\ \gtheta(v)_2/\sin\vartheta_s\bigr)$ is the maximum of two affine functions of $\gtheta(v)$; wherever the attaining grade is unique it coincides locally with that branch, with derivative $1/\cos\vartheta_s$ in the first grade and $0$ in the second, or $1/\sin\vartheta_s$ and $0$ respectively, which is the displayed inner derivative; at a tie between the two grades the map is locally Lipschitz and the gather rule returns one element of its Clarke subdifferential (Remark~\ref{rem:clarke}).
The same argument applies to $d_i$.
Finally, a vectorization differentiable in the bar coordinates composes with $(b_i,d_i)$ by the chain rule; the hat coordinates \eqref{eq:coord} are piecewise linear in the bar coordinates and differentiable off the finitely many hyperplanes where $\norm{p_k-(b_i,d_i)}_\infty=r_k$ or where the two coordinates of that $\ell^\infty$ norm tie, so $\Psii_\theta(M)$ is differentiable in $\theta$ at every $\theta_0$ satisfying (a), (b) and lying off those hyperplanes.
\end{proof}

\section{Experimental details}\label{app:details}

\subsection*{The certificate-1NN sweep: budget, sampling density, metric, and the headline configuration}\label{app:sweep}
The paragraphs below continue the study of Section~\ref{sec:exp} at the $\alpha=1.75$ default and then repeat it at the headline configuration; the numbers they establish are the ones summarized under ``The obstruction, stated.''

Nor is the landmark budget a lever, in either direction.
Writing the shortfall as the ratio $\eps/(\rhonu/2)$, both quantities scale as $K^{-1/2}$---$\rhonu$ by construction and $\eps$ because $\wmin=K^{-1/2}$ makes the embedded norm an average over landmarks---so the ratio is invariant.
Measured across $K\in\{8,16,32,64,128,200\}$ it moves from $12.0$ to $12.9$ at $\tau=0.491$, and from $26.1$ to $28.1$ at $\tau=0.226$: flat to within the sampling noise over a $25\times$ range of budgets.
There is therefore no small-$K$ regime in which the certificate becomes operational, contrary to what the $K^{-1/2}$ dependence of $\rhonu$ alone would suggest.

Sampling density barely moves it either.
Over an eightfold range of training set size ($500$ to $4{,}000$ modules) the median nearest-neighbor distance $\eps$ is stationary---$0.056$, $0.053$, $0.055$, $0.051$---because in a $1{,}600$-dimensional embedding those distances concentrate, so more data does not bring a test module much closer to its own class.
The scale $\tau$ meanwhile \emph{falls} ($0.491$, $0.468$, $0.417$, $0.383$), being a quantile of the training cross-class separation, which denser sampling can only tighten; at that outermost $\tau$ the shortfall therefore widens rather than closes, from $12.9$ to $15.1$ (from $4.9$ to $8.6$ measured against the active-set floor).
Coverage is exactly $0$ up to $n=2{,}000$ and reaches a single accepted test module in $400$ at $n=4{,}000$ ($0.3\%$, correctly classified).
We record that as the first nonzero cell rather than as a trend: it is one point.
What does improve with density is the head the certificate can reach---$1$-NN accuracy rises from $66.7\%$ to $78.0\%$ across the same range---so the obstruction is specific to the acceptance radius, not to the representation.
Larger $n$ was not testable here: the audit materializes an $n\times n\times 1000$ array, which needs $128$\,GB at $n=4{,}000$ and grows quadratically.

The metric is the last free choice, and it comes closest.
Coherence asserts only that \emph{some} landmark witnesses, so the $\ell^2$ bound already rests on a single coordinate and pays $\wmin=K^{-1/2}$ for the privilege.
Read the same conclusion in $\ell^\infty$ over the $\omega$-scaled coordinates and the floor is $\tau/4$ rather than $\tau/(4\sqrt K)$, larger by $\sqrt K$; what it costs is that $\eps$ becomes a maximum rather than a root-mean-square, so the shortfall improves by $\sqrt{KS}/(\max/\mathrm{RMS})$.
Measured at $K=200$, $\max/\mathrm{RMS}=9.4$, an improvement of $4.3\times$ against the $12\times$ required---the median test module sits $32\times$ above the $\ell^2$ threshold and $7.6\times$ above the $\ell^\infty$ one.
Nor does the improvement grow with the budget: $\max/\mathrm{RMS}$ itself rises roughly as $\sqrt K$ ($2.7$, $4.3$, $5.7$, $6.9$, $8.7$, $9.4$ for $K=8,\dots,200$), because $\Delta\varphi$ spreads over more landmarks as the budget grows, so the net gain returns only the $\sqrt S$ and is flat in $K$.
Not one additional test module crosses the threshold.

\paragraph{The same sweep at the headline configuration.}
Everything above was measured at the $\alpha=1.75$ default.
Re-running the full sweep on the certified capped construction at $\alpha=4$ ($3$ seeds, $K\in\{8,\dots,200\}$, $n\in\{500,\dots,4{,}000\}$) confirms the obstruction and sharpens it in one respect.
Coverage never exceeds $0.5\%$ at $K=200$ in any cell; the only cells above $1\%$ are at $K=8$ (up to $2.5\%$), where the accepted handful is classified well below the head's accuracy---acceptance there is noise, not certification.
The median nearest-neighbor distance is larger, $0.106$ at $n=500$ falling only to $0.088$ at $n=4{,}000$, so the shortfall at the outermost $\tau$ is $25\times$ rather than $12\times$, and flat in $n$ ($24.6$, $24.1$, $25.4$, $26.0$).
The one qualitative change is that the shortfall is no longer invariant in $K$: it grows from $9.0\times$ at $K=8$ to $24.6\times$ at $K=200$ (from $19\times$ to $53\times$ at the median $\tau$), because under the radius cap $\eps$ falls more slowly than $K^{-1/2}$ while $\rhonu$ still does.
That does not open a small-$K$ regime---the ratio never approaches $1$, and the $K=8$ coverage is the noise just described---but it means the landmark budget is a lever \emph{against} the certificate at the headline configuration, not a neutral one.
The nearest-neighbor geometry is unchanged: $d_{(2)}/d_{(1)}$ is $1.12$--$1.13$ at every $n$, $1$-NN accuracy $74$--$78\%$, the $\ell^\infty$ reading returns $\max/\mathrm{RMS}=7.8$ at $K=200$ (rising $2.8\to7.8$ across the budget, as before), and $5$--$7\%$ of test modules have a runner-up class even $1.5\times$ behind.
The conclusion is the one already drawn, now at the configuration the paper reports.

\subsection*{Why the simpler selection criterion fails}\label{app:select}
The score itself deserves a word, because a natural choice fails.
Ranking pairs by the margin-like ratio (minimum inter-class centroid gap over \emph{mean} within-class spread) divides by a scalar, so it cannot tell a low-variance discriminative direction from a high-variance nuisance one---and in fact prefers the latter: on a synthetic check, moving the same centroid gap onto a high-variance axis \emph{raises} the ratio from $0.27$ to $0.37$.
On MUTAG this is not hypothetical; the ratio selects degree$\times$clustering, which separates the training subsample and transfers worse ($-3.7$).
Following PALACE, we instead rank by the shrinkage-regularized Fisher--Mahalanobis margin \eqref{eq:maha}, which normalizes \emph{per direction} and collapses to $0.08$ on the same synthetic check.
We use a diagonal $\widehat S_W$---the full covariance is $K\times K$ with few samples per class and singular in practice---and $\lambda=\overline{\mathrm{var}}$.
The regularization is not incidental: at $\lambda=0.1\,\overline{\mathrm{var}}$ the criterion is under-regularized, $1/(\mathrm{var}+\lambda)$ hands large weight to coordinates whose training variance is near zero by accident, and it walks into the MUTAG trap on every fold ($-13.9$).

\subsection*{The per-prediction certificate}\label{app:pointwise}
Throughout, $x=\Psii(M)$ for a test module $M$ drawn independently of the training sample, $\delta_c=\widehat\mu_c-\mu_c$, $\sigma_c^2(x)=(x-\mu_c)^{\!\top}\Sigma_c(x-\mu_c)$ with $\Sigma_c$ the class covariance of $\Psii$, and $\kappa_c(x)=\mathbb E|\langle x-\mu_c,\Psii(M')-\mu_c\rangle|^3/\sigma_c(x)^3$ for $M'$ drawn from class $c$.

\begin{proposition}[Per-prediction agreement]\label{prop:pointwise}
Let $r$ satisfy $\Pr[\max_c\norm{\delta_c}\le r]\ge1-\delta_2$---for instance $r=r_m$ of \eqref{eq:rm} at level $\delta_2$---and let $z=\Phi^{-1}(1-\delta_1/(2C))$.
Write $\widehat c=\widehat c(M)$ and $G_{c'}(x)=\norm{x-\widehat\mu_{c'}}^2-\norm{x-\widehat\mu_{\widehat c}}^2$.
If
\[
  G_{c'}(x)\ >\ 2z\Bigl(\tfrac{\sigma_{\widehat c}(x)}{\sqrt{m_{\widehat c}}}+\tfrac{\sigma_{c'}(x)}{\sqrt{m_{c'}}}\Bigr)\ +\ r^2
  \qquad\text{for every }c'\ne\widehat c,
\]
then the population nearest-centroid rule assigns $M$ the label $\widehat c$, with probability at least
$1-\delta_1-\delta_2-0.9496\sum_c\kappa_c(x)/\sqrt{m_c}$.
Replacing $z\sigma_c(x)/\sqrt{m_c}$ by $\sqrt{2b\,\sigma_c^2(x)/m_c}+2\beta_c(x)b/(3m_c)$, where $b=\log(4C/\delta_1)$ and $\beta_c(x)$ bounds $|\langle x-\mu_c,\Psii(M')-\mu_c\rangle|$ almost surely, gives the same conclusion with no Berry--Esseen term.
\end{proposition}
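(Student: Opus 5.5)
The plan is to reduce agreement of the two rules to one scalar inequality per competing class, using the exact identity \eqref{eq:pointid}. Fix $x=\Psii(M)$; conditioning on $M$ is legitimate because $M$ is independent of the training sample. The population rule picks $\widehat c$ exactly when, for every $c'\ne\widehat c$,
\[
  P_{c'}(x):=\norm{x-\mu_{c'}}^2-\norm{x-\mu_{\widehat c}}^2>0 .
\]
Applying \eqref{eq:pointid} to both classes gives
\[
  P_{c'}(x)=G_{c'}(x)-2\langle x-\mu_{c'},\delta_{c'}\rangle+\norm{\delta_{c'}}^2+2\langle x-\mu_{\widehat c},\delta_{\widehat c}\rangle-\norm{\delta_{\widehat c}}^2 .
\]
Dropping $+\norm{\delta_{c'}}^2\ge0$ leaves
\[
  P_{c'}(x)\ \ge\ G_{c'}(x)-2|\langle x-\mu_{c'},\delta_{c'}\rangle|-2|\langle x-\mu_{\widehat c},\delta_{\widehat c}\rangle|-\norm{\delta_{\widehat c}}^2 .
\]
It therefore suffices to bound the two linear terms by $z\sigma_c(x)/\sqrt{m_c}$ each and the quadratic term by $r^2$.

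For the quadratic term, the event $\max_c\norm{\delta_c}\le r$ gives $\norm{\delta_{\widehat c}}^2\le r^2$, at cost $\delta_2$.

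For the linear terms, condition on labels as in the proof of Theorem~\ref{thm:certified}. Then
\[
  \langle x-\mu_c,\delta_c\rangle=\frac1{m_c}\sum_{i:\,y_i=c}\xi_i,\qquad \xi_i=\langle x-\mu_c,\Psii(M_i)-\mu_c\rangle,
\]
is a mean of i.i.d.\ centered scalars with variance $\sigma_c^2(x)$ and normalized third moment $\kappa_c(x)$. The Berry--Esseen theorem with Shevtsova's constant $0.4748$ gives
\[
  \Pr\Bigl[\bigl|\langle x-\mu_c,\delta_c\rangle\bigr|>z\sigma_c(x)/\sqrt{m_c}\Bigr]\ \le\ 2\bigl(1-\Phi(z)\bigr)+2\cdot0.4748\,\kappa_c(x)/\sqrt{m_c}.
\]
With $z=\Phi^{-1}(1-\delta_1/(2C))$ this is at most $\delta_1/C+0.9496\,\kappa_c(x)/\sqrt{m_c}$. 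A union bound over the $C$ classes, rather than over pairs, covers every $\langle x-\mu_c,\delta_c\rangle$ simultaneously, since each competing pair uses only two of them. The total failure probability is then $\delta_1+\delta_2+0.9496\sum_c\kappa_c/\sqrt{m_c}$. On the good event the hypothesis gives $P_{c'}(x)>0$ for all $c'$, which is the conclusion.

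For the non-asymptotic variant, replace Berry--Esseen by scalar Bernstein. The $\xi_i$ are bounded by $\beta_c(x)$, so
\[
  \Pr\bigl[|\bar\xi|>\sqrt{2b\sigma_c^2/m_c}+2\beta_cb/(3m_c)\bigr]\le 2e^{-b}.
\]
This follows from the standard inversion of Bernstein's inequality, using $\sqrt{a+b}\le\sqrt a+\sqrt b$ as in Theorem~\ref{thm:certified}. Choosing $b=\log(4C/\delta_1)$ gives $\delta_1/(2C)$ per class, and the union bound over $C$ classes stays within $\delta_1$. A finite $\beta_c$ exists since $\norm{\Psii}\le B$ by Lemma~\ref{lem:bdd}.

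The main obstacle is the bookkeeping of which events are needed, not any single estimate. The $r$-event and the scalar events are dependent, but a union bound does not require independence. Dropping $\norm{\delta_{c'}}^2$ is what lets only one $r^2$ appear in the hypothesis. The stated constant $0.9496$ corresponds to the two-sided charge with Shevtsova's constant, so I would check that constant convention. I would also confirm that the plug-in variance is not claimed within the proposition, which is stated with population $\sigma_c$.
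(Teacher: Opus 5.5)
Your argument is the paper's own: expand both margins with \eqref{eq:pointid}, bound each scalar $\ell_c=\langle x-\mu_c,\delta_c\rangle$ by two-sided Berry--Esseen (or scalar Bernstein) with a union bound over the $C$ classes, and absorb the quadratic terms on the event $\max_c\norm{\delta_c}\le r$. There is one harmless slip. The correct expansion is $P_{c'}(x)=G_{c'}(x)+2\ell_{c'}-2\ell_{\widehat c}-\norm{\delta_{c'}}^2+\norm{\delta_{\widehat c}}^2$, and every correction sign in your display is flipped. So the term you drop is $\norm{\delta_{\widehat c}}^2$, and the one you bound by $r^2$ is $\norm{\delta_{c'}}^2$. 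The resulting sufficient condition is unchanged, and the paper reaches the same bound by controlling the total movement using $\bigl|\norm{\delta_{c'}}^2-\norm{\delta_{\widehat c}}^2\bigr|\le r^2$.
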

\begin{proof}
By \eqref{eq:pointid} the difference between the empirical and population margins at $x$ is
$-2\ell_{c'}+2\ell_{\widehat c}+\norm{\delta_{c'}}^2-\norm{\delta_{\widehat c}}^2$ with $\ell_c=\langle x-\mu_c,\delta_c\rangle$.
Each $\ell_c$ is a mean of $m_c$ i.i.d.\ centred scalars with variance $\sigma_c^2(x)$, so the Berry--Esseen theorem with Shevtsova's constant $0.4748$, applied two-sided to each class, gives $|\ell_c|\le z\sigma_c(x)/\sqrt{m_c}$ for all $c$ simultaneously with probability at least $1-\delta_1-0.9496\sum_c\kappa_c(x)/\sqrt{m_c}$.
On that event intersected with $\{\max_c\norm{\delta_c}\le r\}$, the margin moves by at most the displayed bound, so every population margin stays positive.
The Bernstein variant replaces the first step by the scalar Bernstein inequality at level $\delta_1/(2C)$.
\end{proof}

Three limitations are worth stating exactly.
The variances $\sigma_c(x)$ and the direction $x-\mu_c$ are population quantities and are estimated in the measurements of Section~\ref{sec:exp} by $\widehat\Sigma_c$ and $x-\widehat\mu_c$, the same plug-in convention Paper~II's variance-aware radius uses; a self-contained statement needs an effective-rank bound on $\norm{\widehat\Sigma_c-\Sigma_c}_{\mathrm{op}}$ along that direction, which we do not prove here; the Hilbert-space covariance estimators and confidence balls of \citet{PaperIII} are the natural source for it.
The Berry--Esseen charge is additive and, at the class sizes of these benchmarks, not small: the plug-in $\widehat\kappa$ gives $0.09$ on NCI109 and $0.33$ on MUTAG, so the Gaussian form's nominal $1-\delta$ is reduced accordingly, and a self-normalized moderate-deviation bound---whose error is relative to the tail rather than additive---is the natural repair.
And the statement is per test point: certifying a whole test set of size $T$ simultaneously costs a $\log T$ in $z$.

\subsection*{D-COMPLEX: differentiability at ties, and the implementation}\label{app:clarke}
\begin{remark}[Where this is and is not differentiable in Proposition \ref{prop:grad}]\label{rem:clarke}
Condition (a) is the standard one: the pairing is locally constant off a closed set of measure zero in the space of filtration values, and persistence is differentiable there in the sense of \citet{Leygonie2022}, with the practical gather/scatter implementation of \citet{Carriere2021}.
Condition (b) is stronger and is \emph{not} generic; we state it separately for that reason.
A max-of-vertices filtration ties whenever two vertices of a simplex carry the same value, and on labeled molecular graphs such ties are structural, not accidental---atoms of the same type receive identical node features, so entire orbits of vertices tie exactly, on a set of positive measure in $\theta$.
What is generically unique is not the value but the \emph{argmax vertex}; at a tie $\max_v u_v$ remains locally Lipschitz but is not differentiable, and the gather rule returns one element of the Clarke subdifferential $\partial^\circ\max=\mathrm{conv}\{\partial u_v: v\in\argmax\}$ rather than a gradient.
This is the usual situation for subgradient descent on a locally Lipschitz objective, and it is what the implementation does; we call D-COMPLEX differentiable in that sense---almost everywhere, with a Clarke subgradient at the ties---and not in the sense of a globally $C^1$ map.
\end{remark}

In practice the pairing is computed once per forward pass, and the barcode endpoints are re-expressed as gathers of the differentiable tensor $s^{(s)}$; the backward pass flows through the gathers and the max-reductions to $\ftheta$, recovering the slice-stack $\Psii_\theta(M)$ as a differentiable function of the network.

\subsection*{D-COMPLEX training conditions}\label{app:dcomplex}
Gradient learning of the filtration needs two conditions or it fails outright.
The head must be the PLACE/PALACE embedding---with a lightweight top-$k$ vectorization the learned filtration does not separate from the fixed one (on DHFR both sit near $66\%$, far below the $81\%$ the same data supports), the head rather than the filtration being the bound.
And $\ftheta$ must be warm-started to reproduce the fixed filtration---otherwise its random bars miss the frozen landmarks, the embedding vanishes, and the model collapses to the majority class.

\bibliographystyle{plainnat}
\bibliography{main}

\end{document}